\pgfplotsset{compat=1.17}
\theoremstyle{plain}
\newtheorem{theorem}{Theorem}[section]
\newtheorem{proposition}[theorem]{Proposition}
\newtheorem{lemma}[theorem]{Lemma}
\newtheorem{corollary}[theorem]{Corollary}
\theoremstyle{definition}
\newtheorem{definition}[theorem]{Definition}
\theoremstyle{remark}
\newcommand{\EE}[2][]{\mathbb{E}_{#1}\left[#2\right]}
\newcommand{\Var}[2][]{\mathrm{Var}_{#1}\left[#2\right]}
\newcommand{\Cov}[2][]{\mathrm{Cov}_{#1}\left[#2\right]}
\newcommand{\pr}[1]{\mathbb{P}\left[#1\right]}
\newcommand{\OSV}[1]{\sigma_V^2\left(#1\right)}
\DeclareMathOperator*{\argmin}{arg\,min}
\title{On the Statistical Benefits of Temporal Difference Learning}
\begin{document}

\author{David Cheikhi \qquad Daniel Russo \\  Columbia University}
\date{}
\maketitle

\begin{abstract}
Given a dataset on actions and resulting long-term rewards, a direct estimation approach fits value functions that minimize prediction error on the training data. 
Temporal difference learning (TD) methods instead fit   
value functions by minimizing the degree of temporal inconsistency between estimates made at successive time-steps. Focusing on finite state Markov chains, we provide a crisp asymptotic theory of the statistical advantages of this approach. 
First, we show that an intuitive \emph{inverse trajectory pooling coefficient} completely characterizes the percent reduction in mean-squared error of value estimates. Depending on problem structure, the reduction could be enormous or nonexistent. 
Next, we prove that there can be dramatic improvements in estimates of the difference in value-to-go for two states: TD's errors are bounded in terms of a novel measure --- the problem's \emph{trajectory crossing time} --- which can be much smaller than the problem's time horizon.
\end{abstract}

\section{Introduction}
Temporal difference learning is a distinctive approach to estimation in long-term optimization problems. 
Its importance to reinforcement learning is hard to overstate. 
In their seminal book, \citet{sutton2018reinforcement} write: 
\emph{If one had to identify one idea as central and novel to reinforcement learning, it would undoubtedly be temporal difference  learning}.

Competing with temporal difference (TD) learning is a straightforward direct-estimation approach. 
There, one proceeds by collecting data on past decisions and the cumulative long-term `reward' that followed them. 
If actions were chosen with some experimental randomness, then regression of long-term rewards on the draw of actions would -- with enough data -- correctly identify actions' causal impacts. 

The direct approach has two significant drawbacks. The first is delay: actions can only be evaluated after their full long-term effects realize. The second is variance: long-term outcomes can be extremely noisy and individual actions often have a small impact on them.

TD aims to  alleviate these challenges by
leveraging data on intermediate outcomes -- those observed after the decision but before final outcomes are realized. The availability of such data is increasingly common. Robots collect regular sensor measurements, recommendation systems log sequential user interactions, and digital devices can track patient's health metrics across time. 
\citet{sutton1988learning} observed that successive predictions, updated as information is gathered, should be \emph{temporally consistent}. He proposed to fit maximally consistent value estimates by iteratively  minimizing `temporal difference errors'. TD has since become an intellectual pillar of the reinforcement learning literature. It is used in most successful applications. 

\paragraph{Our Contributions.} We aim to provide crisp insight into the statistical benefits of TD. 
This paper focuses on the simplest possible setting, where training data consists of a batch of trajectories sampled independently from a finite Markov reward process. We compare the asymptotic scaling of mean squared error under TD and direct value estimation. Two main findings, different in nature, emerge:
\begin{enumerate}
\item The relative benefits of TD are determined by a natural `inverse trajectory pooling coefficient.' TD uses value-to-go at intermediate states as a surrogate \citep{prentice1989surrogate,athey2019surrogate}. This is beneficial exactly when trajectories that originate with distinct states/actions tend to reach common intermediate states. We present simple examples illustrating when the benefits of TD are enormous and when they vanish entirely. 
\item TD is especially beneficial for advantage estimation; That is, when estimating the difference in value-to-go from one state/action versus another. TD estimates of the value-to-go at different states are not independent and the coupling of errors leads to this improvement. While the mean squared error of direct advantage estimation generally scales with the length of the problem horizon, we show that TD's errors are bounded by a smaller \emph{trajectory crossing time}. This 
 novel notion of effective horizon can be small even in some problems with unbounded  mixing time. 
\end{enumerate}

\paragraph{On the Markov assumption and state representation.}
Our focus on Markov models is standard in the academic literature on reinforcement learning. This is, in a certain technical sense, an innocuous assumption. One could always use the entire sequence of observations so far as a Markov state \citep{puterman2014markov}. 
But practical algorithms need to use appropriate compression of the history. 
The choice of representation has a subtle interplay with the benefit of TD. Indeed, we comment in section \ref{sec:open_quest} that the benefits of TD can vanish when the state representation is too rich and trajectory pooling vanishes. 
\

\section{Related works}
TD has been a central idea in RL since it was first proposed. 
It is deceptively simple, has intriguing connections to neuroscience \citep{schultz1997neural}, and seems to be routed in dynamic programming theory. 
 In the 1990s, researchers gathered both limited convergence guarantees  \citep{jaakkola1993convergence, dayan1994td}  and examples of divergence \citep{baird1995residual}. \citet{tsitsiklis1997analysis} offered a clarifying theory of when TD converges, and characterized the TD fixed point it reaches. 
\citet{maei2009convergent, sutton2009fast}  proposed methods to reach the TD fixed point in off-policy settings or when nonlinear function approximation is used. 

While illuminating, this theory does not clarify why TD should be preferred over direct value function estimation (dubbed `Monte-Carlo' or `MC').  In fact, the main guarantee is convergence to an approximate value function whose mean-squared error is \emph{larger} than the one MC reaches. 
Folklore, intuition, and experiments suggest TD often converges to its limit at a faster rate.  

The literature has emphasized the distinction between online and batch TD algorithms. The convergence speed of batch TD methods, like LSTD \citep{bradtke1996linear}, is a statistical question. With purely online algorithms, each observation is used to make a single stochastic gradient type update and then immediately discarded, so issues of memory, compute, and data efficiency are conflated. 
The deep RL literature has adopted experience replay \citep{mnih2015human, schaul2015prioritized, wang2016sample,andrychowicz2017hindsight}, which blurs the line between batch and online implementations by recording observations in a dataset and resampling them many times. 

We give a complete and intuitive characterization of the efficiency benefits of TD in the simplest possible setting: a batch variant applied without function approximation in a finite state Markov Reward Process. 
Here it is straightforward to show TD is more efficient that MC, but more subtle to understand when the efficiency gains are large. 
\citet{grunewalder2007optimality} and \citet{grunewalder2011optimal} make progress in this direction. They prove that LSTD is at least as statistically efficient as MC, without quantifying the improvement. They also display cases where the two procedures have the same performance. 
Textbooks by \citet{sutton2018reinforcement} and \citet{szepesvari2010algorithms} give illuminating examples, but no  theory. 

A number of papers bound the data requirements of TD, mainly in settings with function approximation. See for example \citep{mannor2004bias}, \citep{lu2005error}, \citep{lazaric2010finite}, \citep{pires2012statistical}, \citep{tagorti2015rate}, \citep{bhandari2018finite}, \citep{pananjady2020instance}, \citep{khamaru2020temporal}, \citep{chen2020explicit},  or \citep{farias2022markovian}. 
These show certain problem instances have low data requirements, but do not clarify when enforcing temporal consistency in value estimates produces large efficiency gains. 

Developments in Deep Reinforcement Learning, posterior to the publication of this work, have underlined the importance of trajectory pooling for statistical efficiency, under the name of trajectory stitching. For instance, \citet{ghugare2024closing} empirically shows that trajectory pooling helps generalization and suggests a data augmentation technique to obtain some of the statistical benefits of temporal consistency when using supervised learning methods (such as Monte Carlo). 

To the best of our knowledge, our insights about advantage estimation are new (See Sec~\ref{sec:horizon_truncation}).

\section{Problem Formulation}
We first describe the problem of value function estimation in Markov reward processes (MRPs). We then observe that after appropriate relabeling of the state variables, this can also represent the problem of evaluating the long-term impact of actions. 
Most mathematical results are stated for MRPs, but the alternative interpretation enriches the results.  

\subsection{Value function estimation}
\label{subsec:VFE}
A \emph{trajectory} in a terminating Markov reward process is a Markovian sequence $$\tau=(S_0, R_1, S_1, R_2, S_2, \ldots, S_{T-1}, R_T, \emptyset),$$ consisting of a sequence of states $(S_t)_{t\in [T]} \subset \mathcal{S}$, rewards $(R_t)_{t\in [T]}\subset \mathbb{R}$, and termination time $T$. The termination time is the first time at which $S_T=\emptyset$, where $\emptyset$ is thought of as a special terminal state. Assume the distribution of $R_t$ is independent of past rewards given the current state $S_t$.

A \emph{trajectory} in a terminating Markov reward process is a Markovian sequence $$\tau=(S_0, R_1, S_1, R_2, S_2, \ldots, S_{T-1}, R_T, \emptyset),$$ consisting of a sequence of states $(S_t)_{t\in [T]} \subset \mathcal{S}$, rewards $(R_t)_{t\in [T]}\subset \mathbb{R}$, and termination time $T$. The termination time is the first time at which $S_T=\emptyset$, where $\emptyset$ is thought of as a special terminal state. Assume the distribution of $R_t$ is independent of past rewards given the current state $S_t$.

The law of a Markov reward process (MRP) is specified by the tuple $\mathcal{M} = (\mathcal{S}\cup \{\emptyset\}, P, R, d)$ consisting of a state space, transition kernel, reward distribution, and initial state distribution. Here $P$ is a transition matrix over the augmented state space $\mathcal{S}\cup \emptyset$, specifying a probability $P(s' \mid s)$ of transitioning from $s$ to $s'$. We assume terminal sate is absorbing and reachable. That is, $P(\emptyset | \emptyset)=1$ and for every state $s$ there is some $t$ such that the $t$ step transition $P^{t}( \emptyset | s)$ is strictly positive. The object $R$ specifies the draw of rewards conditioned on a state transition as $R(dr | s,s') = \mathbb{P}(R_t = dr \mid S_t=s, S_{t+1}=s')$. Throughout we use the notation $r(s,s)$ for the mean of $R(\cdot | s,s')$. We assume $R(\emptyset, \emptyset)=0$. The initial state distribution $d$ is a probability distribution over $\mathcal{S}$ from which $S_0$ is drawn.

The value function 
\begin{align*}
V(s) &= \mathbb{E}\left[ \sum_{t=1}^{\infty} R_t \mid S_0=s \right] \\
&= \mathbb{E}\left[ \sum_{t=1}^{T} r(S_t, S_{t+1}) \mid S_0=s \right]
\end{align*}
specifies the expected future reward earned prior to termination. It is immediate from our formulation that $V(\emptyset)=0$. Our formulation is the Markov reward process analogue of stochastic shortest path problems \citep{bertsekas1991analysis}. Discounted problems are a special case where there is a constant probability of termination $P(\emptyset \mid s)=1-\gamma$ for each non-terminal state $s\in \mathcal{S}$. In that case, the horizon $T$ follows a geometric distribution with mean $1/(1-\gamma)$. 

A related quantity measures the value-to-go differences between states,
\[
\mathbb{A}(s,s') = V(s)-V(s').
\]
We call this the \emph{advantage} of $s$ over $s'$, since, as revealed in the next subsection, it is closely related to the advantage function in RL \citep{baird1993advantage}. 

We consider the problem of estimating the value-to-go at initial states. We compare methods that produce estimates $\hat{V}$ based on $n$ independent trajectories
\[
\mathcal{D} = \left(\tau_i = ( S_0^{(i)}, R_1^{(i)}, S_1^{(i)}, \ldots, S_{T^{(i)}-1}^{(i)}, R_{T^{(i)}}^{(i)}, \emptyset)\right)_{i=1,\dots,n}.
\]
by their mean squared error $\mathbb{E}\left[ \left(V(s) - \hat{V}(s)\right)^2 \right]$ or $\mathbb{E}\left[ \left( \mathbb{A}(s,s') - \hat{ \mathbb{A}}(s,s')\right)^2 \right]$, 
where the expectation is over the randomness in $\mathcal{D}$. We assume that all states have a non-zero probability of being visited in the dataset.

\subsection{Heterogenous treatment effect estimation}\label{subsec:HTE}

By appropriate relabeling of variables, we can interpret our problem as one of evaluating the long-term impact of a chosen decision in a specific context. 
This section demonstrates theory developed in this paper directly applies to settings seemingly more general than the one described in \ref{subsec:VFE}. Specifically, we explain how evaluating the long-term impact of a chosen decision in a specific context falls in the scope of our setting with a simple appropriate relabeling of variables.

Here we consider a special case of our formulation where the continuing state space  $\mathcal{S}=\mathcal{S}_0 \cup \mathcal{S}_I$ is partitioned into a set of initial states $\mathcal{S}_0$ and a set of intermediate states $\mathcal{S}_I$. With probability 1,  $S_0\in \mathcal{S}_0$, $S_T = \emptyset$, and at intermediate times $t\in \{1,\ldots, T-1\}$, $S_t \in \mathcal{S}_I$. 

We give the initial states a special interpretation. We think of them as consisting of an initial context $X_0$ and a decision $A_0$ and write $S_0 = (X_0, A_0)$. Using the more familiar notation $Q(X_0, A_0) \equiv V(S_0)$, we have 
\[
Q(x,a) = \mathbb{E}\left[ \sum_{t=1}^{T} R_t  \mid X_0=x, A_0=a \right].
\]

The initial distribution $d_0$ is determined by an initial context distribution $\mathbb{P}(X_0 =x)$ and a \emph{logging policy} $\mathbb{P}(A_0= x \mid X_0=x)$ which determines the frequency with which actions are observed in the data.   

Of particular interest in this setting is the advantage
\[
\mathbb{A}( (x,a), (x,a')) = Q( x,a) - Q(x, a'),
\]
which measures the performance difference between actions $a$ and $a'$ in context $x$. The term `advantage' is common in RL, but in causal inference one might call this `heterogenous treatment effect' estimation.

Since policy gradient methods typically involve computing the expectation of weighted advantages, we expect that insights developed in this paper could apply to these methods.

\section{Algorithms}

\subsection{Direct approach: First visit monte-Carlo (MC)}

For any candidate value function, we can evaluate its accuracy by comparing the future value it predicts from a given state visited in the data and the actual 
cumulative reward observed in the remainder of that trajectory. This suggests a natural \emph{direct} value estimation approach: over a candidate class of value functions, pick one that minimizes mean squared prediction error on the dataset. This method is called \emph{Monte Carlo} in the RL literature. 

To formally describe the algorithm, define the random time $T(s) = \min\{ t  : S_t = s \vee S_t = \emptyset\}$ to be the first hitting time of state $s$ if it is reached, or $T$ otherwise. Let $I(s)= \{i \in [n]: s\in \tau_{i}\}$ to be set of trajectories that visit state $s$.
Form a dataset 
$$\mathcal{D}^{\rm MC}= \bigcup_{s \in \mathcal{S}} \bigcup_{i \in I(s)} \left\{ \left(s, \sum_{t=T^{(i)}(s)+1}^{T^{(i)}} R_t^{(i)} \right) \right\} .$$
that records pairs of states and the cumulative rewards earned following the first visit to the state. 
Given a parameterized class of value functions $\{V_{\theta}: \theta \in \Theta \}$, a direct value estimation approach is to solve the least squares problem
\[ 
\min_{\theta \in \Theta } \sum_{(s, v) \in\mathcal{D}^{\rm MC}} \left( V_{\theta}(s) - v \right)^2
\]

We focus on tabular representations, where the space of parameterized value functions spans all possible functions. In that case 
\[
\hat{V}_{\rm MC}(s) = \mathbb{E}_{(S, V) \sim\mathcal{D}^{\rm MC}}\left[ V \mid S=s \right], 
\]
where the notation $(S, V) \sim\mathcal{D}^{\rm MC}$ means that state/value pairs are sampled uniformly at random from the dataset. The value estimate at state $s$ is simply the average reward earned after visits to state $s$ in the dataset. 

What we described is called \emph{first visit} Monte-carlo in the literature. It is an unbiased estimator because it only includes the first time a state was visited during the an episode. We focus on this version for analytical simplicity, but many of our key examples focus on cases where initial states are never revisited (See e.g. Subsection \ref{subsec:HTE}) and it coincides with an ``every visit'' Monte Carlo estimate. 

\subsection{Indirect approach: TD learning}
\label{sec:TD_learning}
Temporal difference learning uses a reformatted dataset consisting of tuples of reward realizations and state transitions:
\[
\mathcal{D}^{\rm TD} = \{(S^{(i)}_t, R^{(i)}_{t+1}, S^{(i)}_{t+1})_{t=1,\dots,T^{(i)}, i=1,\dots,n} \}.
\]
Define the temporal difference error between two candidate value functions $V,V'$ to be the average gap in Bellman's equation:
\[
\ell_{\rm TD}(V, V') = \mathbb{E}_{(S,R,S')\sim D^{\rm TD}}\left[ V(S) - (R+V'(S')) \right],
\]
where the notation $(S, R, S') \sim\mathcal{D}^{\rm TD}$ means that tuples are sample uniformly at random from the dataset. 
Given a parameterized class of value functions, $V_{\Theta}=\{V_{\theta}: \theta \in \Theta \}$, batch TD algorithms iteratively generate parameters $(\theta_1, \theta_2, \ldots)$ by solving $\min_{\theta_{i+1}} \ell(V_{\theta_{i+1}}, V_{\theta_{i}})$. (Online TD algorithms, combined with experience replay, make SGD updates instead.) 
For linear function approximation \citep{tsitsiklis1996analysis}, or neural networks in the `Neural tangent kernel' regime \citep{cai2019neural}, value functions are known to converge to a fixed point
\[
\hat{V}_{\rm TD} = \argmin_{V \in V_{\Theta}}  \ell\left( V, \hat{V}_{\rm TD}\right), 
\]
which, in a sense, maximizes feasible temporal consistency.

Again, we focus on tabular representations, where the space of parameterized value functions spans all possible functions. In that case, TD solves the \emph{empirical Bellman equation} 
\[ 
\hat{V}_{\rm TD}(s) = \mathbb{E}_{(S, R,S')\sim \mathcal{D}^{\rm TD}}\left[ R+\hat{V}_{\rm TD}(S') \mid S = s \right] 
\]

\section{Intuition: surrogacy and intermediate outcomes}
\begin{figure}[h!]
    \centering
    \includegraphics[width=7cm]{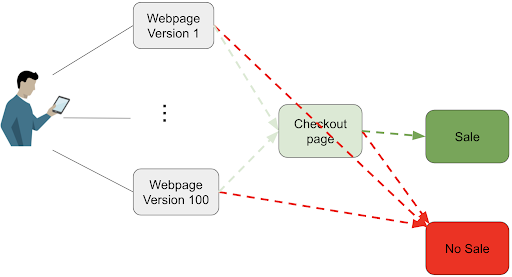}
    \caption{Modeling a user's behavior}
    \label{fig:webpage}
\end{figure}

A lot of the intuition regarding TD can be gained through the simple example in Figure \ref{fig:webpage}. 
Imagine our goal is to select the website design among 100 alternatives that leads to the largest sale rate (of some product). Users arrive and are randomly assigned to one of the 100 pages. They either click to purchase and proceed to the checkout page or navigate away from the site without purchasing. Among those who click, only a small fraction complete the sale.  

Assume, for simplicity, that we have no access to personal information that distinguishes users from one another. 
(There is only one possible $x$ in Section \ref{subsec:HTE}.)
There are 100 possible \emph{initial states}, corresponding to the webpage version, and the user is equally likely to start at each of those. A sale and a no-sale immediately precede termination. We call the checkout page an \emph{intermediate state}. The sale state is associated with a reward of 1 and all others are associated with a reward of 0. 

What we called the Monte-Carlo estimate of the value function would directly estimate the value of an impression of each webpage to be the fraction who purchased among that cohort of users. 

Due to the directed nature of state transitions, TD estimation can be thought of in two steps. We first estimate $\hat{V}(\mathtt{checkout})$ to be the fraction who purchased among users who visit the checkout page. We then estimate the value of an impression of webpage $i$ by 
\[
\hat{V}^{\rm TD}({\mathtt{webpage}\, i}) = \mathtt{CTR}(i) \times \hat{V}(\mathtt{checkout}),
\]
the empirical click-through rate among those shown webpage $i$ times the estimated sale rate on the checkout page.  

Monte-carlo estimation is unbiased, but it may be difficult to reliably estimate the efficacy of each webpage. If only a small fraction click initially, and among those who do only a small fraction convert to a sale, one would need to show each webpage to an enormous number of users. With TD, \emph{we pool data} from across users who were shown any of the 100 webpages when estimating $\hat{V}({\rm checkout})$, greatly reducing variance.
In this example, there is a lot of data pooling because trajectories that begin at distinct states quickly converge to the intermediate state.
In fact, our theory reveals that certain intuitive measures of trajectory pooling exactly determine degree of statistical benefit TD provides.  

Another view of TD is that it uses the intermediate click/no-click outcome as a surrogate or proxy-metric \citep{prentice1989surrogate}. 
Recognizing this, TD's potential downsides become as transparent as a its benefits. 
 If the conversion probability among users who visit the checkout page depends strongly on which page design they saw, the Markov property does not hold and TD is biased. 
We discuss this example again in the conclusion, mentioning how the risks and benefits interplay with the choice of state representation.

\section{Empirical illustration}\label{sec:empirical}

\subsection{The benefits of TD}
\label{subsec:TDbenef}
To illustrate how much TD can improve over MC, we explore an example: we consider a layered MRP as describe in Figure \ref{graph:layered}. A layered MRP with width $W$ and horizon $T$ has $W \times (T-1)$ states split in $T$ layers. States in layer $t$ can only transition to states in layer $t+1$ and states in the last layer $T-1$ always transition to the terminal state. 
    
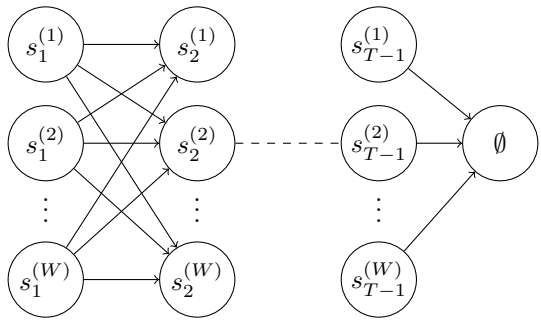
\begin{figure}[h!]
    \centering
\begin{tikzpicture}[main/.style = {draw, circle, minimum size=1cm}] 
\node[main, label=center:$s_1^{(1)}$] (11) {}; 
\node[main, label=center:$s_1^{(2)}$] (12) [below=0.3cm of 11] {}; 
\node (ellipsis1) [below=1.2cm of 11] {$\vdots$};
\node[main, label=center:$s_1^{(W)}$] (1w) [below=0.8cm of 12] {}; 

\node[main, label=center:$s^{(1)}_2$] (21) [right=1cm of 11] {}; 
\node[main, label=center:$s^{(2)}_2$] (22) [right=1cm  of 12] {}; 
\node (ellipsis2) [below=1.2cm of 21] {$\vdots$};
\node[main, label=center:$s^{(W)}_2$] (2w) [right=1cm  of 1w] {}; 



\node[main, label=center:$s^{(1)}_{T-1}$] (d1) [right=1.4cm of 21] {}; 
\node[main, label=center:$s^{(2)}_{T-1}$] (d2) [right=1.4cm of 22] {}; 
\node (ellipsisd) [below=1.2cm of d1] {$\vdots$};
\node[main, label=center:$s^{(W)}_{T-1}$] (dw) [right =1.4cm of 2w] {}; 

\node[main, label=center:$\emptyset$] (end) [right=0.6cm of d2] {};

\draw [->] (11) -- (21);
\draw [->] (12) -- (21);
\draw [->] (1w) -- (21);

\draw [->] (11) -- (22);
\draw [->] (12) -- (22);
\draw [->] (1w) -- (22);

\draw [->] (11) -- (2w);
\draw [->] (12) -- (2w);
\draw [->] (1w) -- (2w);







\draw [->] (d1) -- (end);
\draw [->] (d2) -- (end);
\draw [->] (dw) -- (end);

\draw[dashed] (22) -- (d2);

\end{tikzpicture}     

    \caption{\centering Layered MRP with width $W$ and horizon $T$. Transitions are chosen randomly and rewards are uniform on $[r(s,s')-1;r(s,s')+1]$ where $r(s,s')$ is chosen uniformly between -1 and 1.}
    \label{graph:layered}
\end{figure}

We consider a Layered MRP with width $W = 5$. We focus on state $s_1^{(1)}$ and $s_1^{(2)}$ and study the accuracy of the estimates of their value as we vary the horizon $T$ of the MRP. We also study the accuracy of the estimate of the advantage $\mathbb{A}\left(s_1^{(1)}, s_1^{(2)}\right) = V\left(s_1^{(1)}\right) - V\left(s_1^{(2)}\right)$. Figure \ref{plot:var_horizons} displays the Mean Square Error (MSE) of the TD and MC estimates for these quantities when the dataset contains $n=2000$ independent trajectories. MSE calculations involve $10000$  Monte-Carlo replications. Alongside the observed MSE, we plot projected MSE based on the central limit theorem from Proposition \ref{lem:MC_variance} and Proposition \ref{lem:TD_variance}. There is almost perfect alignment between asymptotic and finite sample results.  
\newcommand{\MyDrawVarLines}{
\addplot [blue,mark=*,only marks, mark options={scale=.5}, error bars/.cd, y explicit,y dir=both] table[x=num_samples,y=Empirical_treatment_TD_A, y error plus expr=\thisrow{Empirical_treatment_TD_A_UB} - \thisrow{Empirical_treatment_TD_A} , y error minus expr=\thisrow{Empirical_treatment_TD_A} - \thisrow{Empirical_treatment_TD_A_LB}] {Data/var_horizons.csv};
\addplot [blue] table[x=num_samples,y=Theoretical_treatment_TD_A] {Data/var_horizons.csv};

\addplot [brown,mark=*,only marks, mark options={scale=.5}, error bars/.cd, y explicit,y dir=both] table[x=num_samples,y=Empirical_treatment_TD_B, y error plus expr=\thisrow{Empirical_treatment_TD_B_UB} - \thisrow{Empirical_treatment_TD_B} , y error minus expr=\thisrow{Empirical_treatment_TD_B} - \thisrow{Empirical_treatment_TD_B_LB}] {Data/var_horizons.csv};
\addplot [brown] table[x=num_samples,y=Theoretical_treatment_TD_B] {Data/var_horizons.csv};

\addplot [red,mark=*,only marks, mark options={scale=.5}, error bars/.cd, y explicit,y dir=both] table[x=num_samples,y=Empirical_treatment_TD_ATE, y error plus expr=\thisrow{Empirical_treatment_TD_ATE_UB} - \thisrow{Empirical_treatment_TD_ATE} , y error minus expr=\thisrow{Empirical_treatment_TD_ATE} - \thisrow{Empirical_treatment_TD_ATE_LB}] {Data/var_horizons.csv};
\addplot [red] table[x=num_samples,y=Theoretical_treatment_TD_ATE] {Data/var_horizons.csv};

\addplot [blue,mark=x,only marks, mark options={scale=1}, error bars/.cd, y explicit,y dir=both] table[x=num_samples,y=Empirical_treatment_MC_A, y error plus expr=\thisrow{Empirical_treatment_MC_A_UB} - \thisrow{Empirical_treatment_MC_A} , y error minus expr=\thisrow{Empirical_treatment_MC_A} - \thisrow{Empirical_treatment_MC_A_LB}] {Data/var_horizons.csv};
\addplot [blue, dashed] table[x=num_samples,y=Theoretical_treatment_MC_A] {Data/var_horizons.csv};

\addplot [brown,mark=x,only marks, mark options={scale=1}, error bars/.cd, y explicit,y dir=both] table[x=num_samples,y=Empirical_treatment_MC_B, y error plus expr=\thisrow{Empirical_treatment_MC_B_UB} - \thisrow{Empirical_treatment_MC_B} , y error minus expr=\thisrow{Empirical_treatment_MC_B} - \thisrow{Empirical_treatment_MC_B_LB}] {Data/var_horizons.csv};
\addplot [brown, dashed] table[x=num_samples,y=Theoretical_treatment_MC_B] {Data/var_horizons.csv};

\addplot [red,mark=x,only marks, mark options={scale=1}, error bars/.cd, y explicit,y dir=both] table[x=num_samples,y=Empirical_treatment_MC_ATE, y error plus expr=\thisrow{Empirical_treatment_MC_ATE_UB} - \thisrow{Empirical_treatment_MC_ATE} , y error minus expr=\thisrow{Empirical_treatment_MC_ATE} - \thisrow{Empirical_treatment_MC_ATE_LB}] {Data/var_horizons.csv};
\addplot [red, dashed] table[x=num_samples,y=Theoretical_treatment_MC_ATE] {Data/var_horizons.csv};
}
\begin{figure}[h!]
\begin{subfigure}[h]{\columnwidth}
\centering    
\begin{tikzpicture}
\begin{axis}[
            xmin=0,xmax=130,
            ymin=0.0,ymax=0.5,
            width=7.5cm,
            height=7.5cm,
            table/col sep=comma,
            xlabel = Horizon $T$,
            ylabel = MSE,
            grid=both] 
    \MyDrawVarLines
\end{axis}
\end{tikzpicture}
\caption{Full Y-Axis}
\end{subfigure}
\begin{subfigure}[h]{\columnwidth}
\centering
\begin{tikzpicture}
\begin{axis}[
            xmin=0,xmax=130,
            ymin=0.0,ymax=0.05,
            width=7.5cm,
            height=7.5cm,
            table/col sep=comma,
            xlabel = Horizon $T$,
            ylabel = MSE,
            grid=both,
            scaled y ticks=false,
            legend pos=north east,
            title style={at={(0.5,0)},anchor=north,yshift=-0.1}] 
            \MyDrawVarLines
\end{axis}
\end{tikzpicture}
\caption{Truncated Y-Axis}
\end{subfigure}
\center
\begin{tikzpicture}
\begin{axis}[
            xmin=10,xmax=130,
            ymin=0.0,ymax=0.5,
            width=2cm,
            height=2cm,
            hide axis,
            table/col sep=comma,
            xlabel = Horizon $H$,
            ylabel = Variance,
            ylabel near ticks, yticklabel pos=right,
            grid=both,
            legend style={anchor=north west,
                legend cell align=left,
                /tikz/every even column/.append style={column sep=0.5cm}},
            legend columns = 2] 
\addlegendimage{fill = blue, area legend}
\addlegendentry{Value at $s$}
\addlegendimage{black}
\addlegendentry{Asymptotic TD MSE}
\addlegendimage{fill = brown, area legend}
\addlegendentry{Value at $s'$}
\addlegendimage{dashed,black}
\addlegendentry{Asymptotic MC MSE}

\addlegendimage{fill = red, area legend}
\addlegendentry{Advantage}
\addlegendimage{mark=*, only marks,black}
\addlegendentry{Empirical TD MSE}
\addlegendimage{opacity = 0}
\addlegendentry{}
\addlegendimage{mark=x, only marks,black}
\addlegendentry{Empirical MC MSE}

\end{axis}
\end{tikzpicture}

    \caption{\centering MSE of different MC and TD estimates on Layered MRP with $W = 5$ and varying horizon $T$}
    \label{plot:var_horizons}
\end{figure}

We highlight three main take-aways from this example:
\begin{enumerate}
    \item {\bf TD can vastly outperform MC}. For the chosen states $s_1^{(1)}$ and $s_1^{(2)}$, the MSE is about 5 times smaller when using TD instead of MC for a Layered MRP with width $W = 5$ and horizon $T = 120$. 
    \item {\bf TD benefits are enhanced for advantage estimation}. In this example, TD performs up to 100 times better than MC for the advantage estimation. This example also shows that the MSE of the TD estimate of the ATE is smaller than the MSE of individual estimates of the value of $s_1^{(1)}$ and $s_1^{(2)}$ when the horizon $T$ is larger than 20. On the other hand, the MSE of the MC estimate of the advantage is larger than the individual MSE of the estimates of the value.
    \item {\bf TD effectively truncates the horizon}. While the MSE of the MC estimate of the advantage scales linearly with the horizon $T$, the MSE of the TD estimate is constant with respect to the horizon $T$. This is all the more striking that the variance of the total reward along a trajectory scales linearly with the horizon. 
\end{enumerate}

\subsection{Dependence on the MRP structure}

We have seen in Section \ref{subsec:TDbenef} that TD vastly outperforms MC in the case of Layered MRP. However, different MRP structures lead to different level of improvement of TD over MC. 
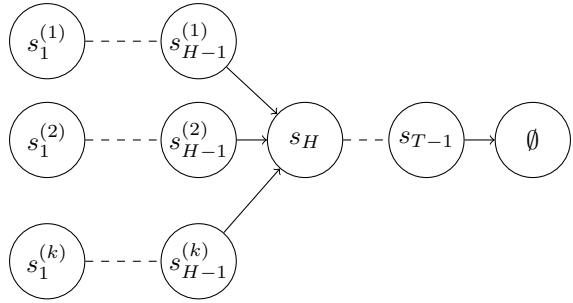
\begin{figure}[h!]
\centering
\begin{tikzpicture}[main/.style = {draw, circle, minimum size = 1cm}] 
\node[main, label=center:$s_1^{(1)}$] (11) {}; 
\node[main, label=center:$s_1^{(2)}$] (12) [below=0.3cm of 11] {}; 
\node[main, label=center:$s_1^{(k)}$] (1k) [below=0.6cm of 12] {}; 

\node[main, label=center:$s^{(1)}_{H-1}$] (H1) [right=1cm of 11]{}; 
\node[main, label=center:$s^{(2)}_{H-1}$] (H2) [below=0.3cm of H1] {}; 
\node[main, label=center:$s^{(k)}_{H-1}$] (Hk) [below=0.6cm of H2] {}; 

\node[main, label=center:$s_{H}$] (H+1) [right=0.4cm of H2] {}; 
\node[main, label=center:$s_{T-1}$] (T) [right=0.6cm of H+1]{}; 

\node[main, label=center:$\emptyset$] (end) [right=0.4cm of T]{};

\draw [dashed] (11) -- (H1);
\draw [dashed] (12) -- (H2);
\draw [dashed] (1k) -- (Hk);

\draw [->] (H1) -- (H+1);
\draw [->] (H2) -- (H+1);
\draw [->] (Hk) -- (H+1);

\draw [dashed] (H+1) -- (T);
\draw [->] (T) -- (end);

\end{tikzpicture} 
\caption{MRP with meeting horizon $H$}
\label{graph:meeting_hor_teaser}
\end{figure}
To illustrate this, we introduce a new class of MRPs described in Figure \ref{graph:meeting_hor_teaser}. Each of the $k$ initial states $s_1^{(1)}, \dots, s_1^{(k)}$ lead to disjoint trajectories for the first $H-1$ steps before reaching a common state on the $H$th step. We are interested in seeing how TD and MC perform when the crossing time $H$ varies. Figure \ref{plot:meeting_horizons} displays the ratio of the MSE of the TD and MC estimates for the values of $s_1^{(1)}, s_1^{(2)}$ and for the advantage as the crossing time $H$ varies. The MSE used to compute the ratios have been computed using $n=200$ independent trajectories and $1000$ Monte-Carlo replications. These ratios are plotted alongside the asymptotic ratio from Theorem \ref{thm:state_coeff}.  As the crossing time $H$ gets closer to the horizon $T$, the advantage of TD over MC vanishes until $H = T$, when TD and MC produce the exact same estimates.
\begin{figure}[h!]
\begin{center}
\begin{tikzpicture}
\begin{axis}[
            title={},
            xmin=0,xmax=21,
            ymin=0,ymax=1,
            width=8cm,
            height=8cm,
            table/col sep=comma,
            xlabel = Meeting horizon $H$,
            ylabel = Ratio of MSE $\EE{(\hat{V}_{TD}-V)^2}/\EE{(\hat{V}_{MC}-V)^2}$,
            grid=both,
            legend pos=south east]   

\addlegendimage{fill = blue, area legend}
\addlegendentry{Value at $s$}
\addlegendimage{fill = brown, area legend}
\addlegendentry{value at $s'$}
\addlegendimage{fill = red, area legend}
\addlegendentry{Advantage}
\addlegendimage{black}
\addlegendentry{Asymptotic ratio}
\addlegendimage{mark=*, only marks,black}
\addlegendentry{Empirical ratio}

\addplot [blue,mark=*,only marks, mark options={scale=0.5}, error bars/.cd, y explicit,y dir=both] table[x=meeting_hor,y=state_s_ratio, y error plus expr=\thisrow{state_s_ratio_UB} - \thisrow{state_s_ratio} , y error minus expr=\thisrow{state_s_ratio} - \thisrow{state_s_ratio_LB}] {Data/meeting_hor.csv};
\addplot [blue] table[x=meeting_hor,y=state_s_ratio_theo] {Data/meeting_hor.csv};

\addplot [brown,mark=*,only marks, mark options={scale=0.5}, error bars/.cd, y explicit,y dir=both] table[x=meeting_hor,y=state_sb_ratio, y error plus expr=\thisrow{state_sb_ratio_UB} - \thisrow{state_sb_ratio} , y error minus expr=\thisrow{state_sb_ratio} - \thisrow{state_sb_ratio_LB}] {Data/meeting_hor.csv};
\addplot [brown] table[x=meeting_hor,y=state_sb_ratio_theo] {Data/meeting_hor.csv};

\addplot [red,mark=*,only marks, mark options={scale=0.5}, error bars/.cd, y explicit,y dir=both] table[x=meeting_hor,y=state_ATE_ratio, y error plus expr=\thisrow{state_ATE_ratio_UB} - \thisrow{state_ATE_ratio} , y error minus expr=\thisrow{state_ATE_ratio} - \thisrow{state_ATE_ratio_LB}] {Data/meeting_hor.csv};

\addplot [red] table[x=meeting_hor,y=state_ATE_ratio_theo] {Data/meeting_hor.csv};

\end{axis}
\end{tikzpicture}
\end{center}
\caption{Ratio of variance between TD and MC as a function of the meeting horizon $H$ for $T = 20$}
\label{plot:meeting_horizons}
\end{figure}
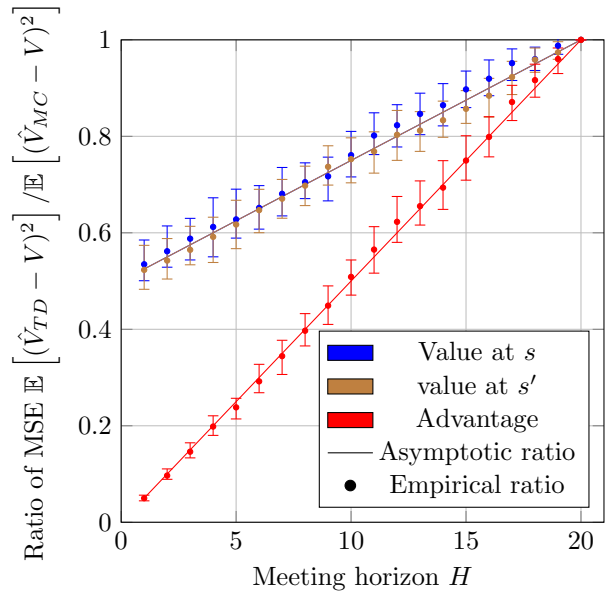
To convey intuition, we first focus on the two extreme cases:
\begin{itemize}
    \item In the case where $H=2$, depicted in Figure \ref{fig:good_instance}, all initial states directly transition to the same state. This mimics the webpage example in Figure \ref{fig:webpage}. In this example, apart from the first reward, trajectories do not depend of the initial state. TD pools trajectories across actions which allows to highly reduce the variance of the estimate of the value at $s_2$. This low variance estimate is then used as a surrogate to estimate the value-to-go at initial states. On the other hand, MC does not leverage the structure of the MRP and produces an independent estimate for each initial state, using only trajectories starting at a given state to evaluate this state. In this case, TD will significantly improve over MC.
    \item  At the other extreme, consider $H=T$. Then, no two initial states can lead to a common state before the terminal state, as shown in Figure \ref{fig:bad_instance}. There is no opportunity to pool trajectories across actions so TD strictly reduces to MC in this case.
\end{itemize}
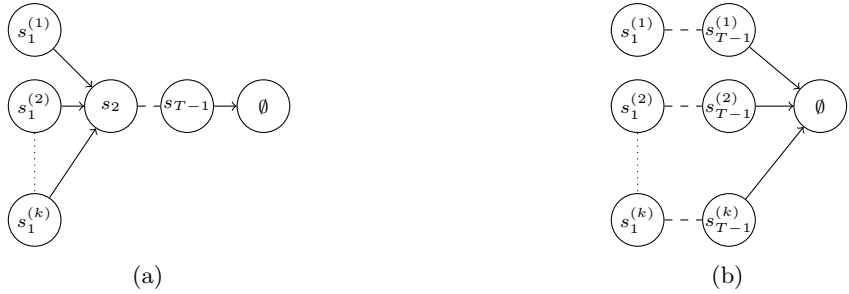
\begin{figure}[h!]
\centering
\begin{subfigure}[h]{0.49\columnwidth}
\centering
\begin{tikzpicture}
[main/.style = {draw, circle, minimum size = 0.7cm},
every label/.append style={font=\scriptsize},
] 

\node[main, label=center:$s^{(1)}_1$] (1) {}; 
\node[main, label=center:$s^{(2)}_1$] (2) [below=0.3cm of 1] {}; 
\node[main, label=center:$s^{(k)}_1$] (4) [below=0.8cm of 2] {}; 
\node[main, label=center:$s_2$] (5) [right=0.3cm of 2] {}; 

\node[main, label=center:$s_{T-1}$] (7) [right=0.3cm of 5] {}; 
\node[main, label=center:$\emptyset$] (end) [right=0.3cm of 7]{};

\draw [dotted] (2) -- (4);
\draw [->] (1) -- (5);
\draw [->] (2) -- (5);
\draw [->] (4) -- (5);

\draw [dashed] (5) -- (7);
\draw [->] (7) -- (end);

\end{tikzpicture} 
\caption{}
\label{fig:good_instance}
\end{subfigure}
\begin{subfigure}[h]{0.49\columnwidth}
\centering
\begin{tikzpicture}
[main/.style = {draw, circle, minimum size = 0.7cm},
every label/.append style={font=\scriptsize},
] 
\node[main, label=center:$s^{(1)}_1$] (11) {}; 
\node[main, label=center:$s^{(2)}_1$] (12) [below=0.3cm of 11] {}; 
\node[main, label=center:$s^{(k)}_1$] (1k) [below=0.8cm of 12] {}; 

\node[main, label=center:$s^{(1)}_{T-1}$] (m1) [right=0.5cm of 11] {}; 

\node[main, label=center:$s^{(2)}_{T-1}$] (m2) [right=0.5cm of 12] {}; 

\node[main, label=center:$s^{(k)}_{T-1}$] (mk) [right=0.5cm of 1k] {}; 
\node[main, label=center:$\emptyset$] (end) [right=0.5cm of m2]{};

\draw [dotted] (12) -- (1k);
\draw [dashed] (11) -- (m1);

\draw [dashed] (12) -- (m2);

\draw [dashed] (1k) -- (mk);

\draw [->] (m1) -- (end);
\draw [->] (m2) -- (end);
\draw [->] (mk) -- (end);

\end{tikzpicture} 
\caption{}
\label{fig:bad_instance}
\end{subfigure}
    \caption{ \ref{fig:good_instance} is an instance on which TD leverages pooling to improve considerably over MC. \ref{fig:bad_instance} is an instance on which TD and MC output the same estimate.}
    \label{fig:graph_examples}
\end{figure}

\subsection{Organization of the results}

In Section \ref{sec:pooling_coefficient}, we characterize the ratio in variance between TD and MC estimates for value estimation depending on the MRP structure. This characterization enables an intuitive understanding of which MRP structures lead to a large improvement of TD and, conversely, for which MRP structures TD and MC perform similarly. In Section \ref{sec:horizon_truncation}, we show that the TD estimate of advantages scales with an effective horizon that can be much smaller than the horizon.

\section{Value estimation and the pooling coefficient}
\label{sec:pooling_coefficient}

Recall that $T(s)$ is the first hitting time of state $s$ if it is reached, or $T$ otherwise. The variables   
\[
N(s')=\sum_{t=0}^{T} \mathbbm{1}(S_t=s'),    \; \;   N(s \to s') = \sum_{t=T(s)}^{T} \mathbbm{1}(S_t=s'), 
\]
respectively measure the total number of visits to state $s'$ and the number of visits to $s'$ which occur after a visit to state $s$. 

Define the coupling coefficient between $s$ and $s'$ by
\[
C(s, s') = \frac{\mathbb{E}\left[ N(s \to s')\right] }{\mathbb{E}\left[ N(s') \right]},
\]
with $C(s,s')=0$ if $\EE{ N(s')}=0$.
Implicitly, it is understood that $S_0$ is drawn from the MRP's initial distribution $d$.
Among all trajectories which reach state $s'$, the coupling coefficient measures the fraction which first pass through state $s$.  
If the coupling coefficient is large, it means $s$ and $s'$ are strongly coupled. 

The inverse \emph{trajectory pooling coefficient} measures the average  coupling coefficient $C(s, s')$ over a distribution of possible successor states $s'$. The right distribution over which to average turns out to be $\mu_{s}(\cdot)$, identified in the definition below.  That distribution weights highly states that are likely to be visited following a visit to $s$ (high $\mathbb{E}[N(s') \mid S_0=s]$) and contribute heavily to estimator variance (measured through the one-step variance ${\rm Var}\left(R_{t+1} + V(S_{t+1}) \mid S_t = s'\right)$).

\begin{definition}[Inverse trajectory pooling coefficient]
    \label{def:traj_pooling_coeff}
   For any state $s\in \mathcal{S}$ define 
    \begin{equation*}
                C(s) = \EE[s' \sim \mu_{s}]{ C(s,s')},
    \end{equation*}
    where $\mu_{s}(\cdot)$ a probability distribution over states defined by
    \[
    \mu_{s}(s') \propto \mathbb{E}\left[ N(s') \mid S_0=s \right] \times \Var{R_t + V(S_{t+1}) \mid S_t=s'}.  
    \]
\end{definition}
Again, the inverse trajectory pooling coefficient is small when there is a lot of trajectory pooling. 
The next theorem compares the asymptotic mean squared error of the value estimated under TD and a direct approach. 
The asymptotic ratio of their mean squared errors is equal to the inverse trajectory pooling coefficient. 
\begin{theorem}
    \label{thm:state_coeff}
    For any $s \in \mathcal{S}$,
    \begin{equation*}
         \lim_{n\rightarrow\infty} \dfrac{\EE{\left(\hat{V}_{TD}(s) - V(s)\right)^2}}{\EE{\left(\hat{V}_{MC}(s) - V(s)\right)^2}} = C(s).
    \end{equation*}
\end{theorem}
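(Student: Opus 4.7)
The plan is to derive CLT-type asymptotic variances $\sigma_{TD}^2(s) = \lim_n n\,\EE{(\hat V_{TD}(s) - V(s))^2}$ and $\sigma_{MC}^2(s) = \lim_n n\,\EE{(\hat V_{MC}(s) - V(s))^2}$ separately, then algebraically show their ratio equals $C(s)$. Under mild moment conditions (e.g.\ bounded rewards), a uniform-integrability argument upgrades distributional convergence of $\sqrt n(\hat V - V)$ to MSE convergence, so it is enough to compute the two limiting variances and match them.

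\textbf{Monte Carlo side.} By the law of large numbers, $|I(s)|/n \to \pr{s \in \tau}$, and conditional on visiting $s$ the strong Markov property makes the post-first-visit returns i.i.d.\ copies of the return starting from $s$. A standard CLT then yields $\sigma_{MC}^2(s) = \Var{G \mid S_0 = s}/\pr{s \in \tau}$, where $G = \sum_{t=1}^T R_t$. Introducing the TD residuals $\delta_t = R_{t+1} + V(S_{t+1}) - V(S_t)$, the telescoping identity $G - V(S_0) = \sum_{t \geq 0} \delta_t$ together with the martingale-difference property of $(\delta_t)$ gives the well-known variance decomposition
\[
\Var{G \mid S_0 = s} = \sum_{s'} \EE{N(s') \mid S_0 = s}\, \Var{R + V(S') \mid S = s'}.
\]

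\textbf{TD side.} Combining the empirical Bellman equation $\hat V_{TD} = \hat r + \hat P \hat V_{TD}$ with the true $V = r + PV$ yields the exact linearization $\hat V_{TD} - V = (I - \hat P)^{-1} (\hat B - B) V$, where $\hat B V' := \hat r + \hat P V'$. By continuous mapping, $(I - \hat P)^{-1} \to (I - P)^{-1}$, with entries $(I - P)^{-1}(s, s') = \EE{N(s') \mid S_0 = s}$. The $s'$-th component of $(\hat B - B) V$ is the average of $N_n(s')$ TD residuals collected at visits to $s'$, with $N_n(s')/n \to \EE{N(s')}$ by LLN. The martingale-difference property implies TD residuals at distinct $(i,t)$-indices are uncorrelated, so residuals at distinct states $s'$ are jointly asymptotically independent normals, and a multivariate CLT delivers
\[
\sigma_{TD}^2(s) = \sum_{s'} \EE{N(s') \mid S_0 = s}^2 \cdot \frac{\Var{R + V(S') \mid S = s'}}{\EE{N(s')}}.
\]

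\textbf{Matching the ratio.} The strong Markov property implies $\EE{N(s \to s')} = \pr{s \in \tau}\,\EE{N(s') \mid S_0 = s}$, so $C(s, s') = \pr{s \in \tau}\,\EE{N(s') \mid S_0 = s}/\EE{N(s')}$. Writing $Z_s = \sum_{s'} \EE{N(s') \mid S_0 = s}\, \Var{R + V(S') \mid S = s'}$ for the normalizer of $\mu_s$, a one-line rearrangement gives
\[
\frac{\sigma_{TD}^2(s)}{\sigma_{MC}^2(s)} = \frac{\pr{s \in \tau}}{Z_s} \sum_{s'} \EE{N(s') \mid S_0 = s}^2 \cdot \frac{\Var{R + V(S') \mid S = s'}}{\EE{N(s')}} = \sum_{s'} \mu_s(s')\, C(s, s') = C(s).
\]
The main technical obstacles are routine but need care: controlling the linearization remainder $\bigl[(I-\hat P)^{-1} - (I-P)^{-1}\bigr](\hat B - B)V$ and handling the random denominators $N_n(s')$ en route to MSE (not merely distributional) convergence. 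The conceptual heart, however, is the algebraic identification above---the martingale variance decomposition is \emph{shared} by both estimators, so the ratio depends only on how the one-step variances are weighted, and these weights differ exactly by the factor $C(s, s')$ quantifying Markov-based pooling.
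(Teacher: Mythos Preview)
Your proposal is correct and follows essentially the same route as the paper: establish separate CLTs for MC and TD yielding the asymptotic variance formulas you state (the paper's Propositions~\ref{lem:MC_variance} and~\ref{lem:TD_variance}, proved via the same martingale-difference decomposition and the same linearization of $\hat V_{\rm TD}-V$), then take the ratio and identify it with $C(s)$ through the strong-Markov identity $\EE{N(s\to s')}=\pr{s\in\tau}\,\EE{N(s')\mid S_0=s}$. The only cosmetic difference is that where you write $(I-\hat P)^{-1}$ and invoke continuous mapping, the paper iterates a one-step recursion to obtain $\sum_{t\ge 0}P^t=(I-P)^{-1}$ as the leading factor plus an explicit $o_P(1)$ remainder, which is the same decomposition in different notation.
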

Let us interpret this result. Recall that TD updates value prediction at state $s$ using value predictions at successor states. The theorem shows this is helpful precisely when there is a lot of trajectory pooling, resulting in a small inverse trajectory pooling coefficient.
When this holds, and the dataset $\mathcal{D}$ is large, there will be many trajectories which reach an important possible successor $s'$ of $s$, but never cross $s$ first. TD leverages these trajectories to learn about $s'$ and then properly incorporates that knowledge to better evaluate $s$. Direct estimation approaches only use sub-trajectories originating at $s$ to evaluate $s$ and forego the trajectory pooling benefit.  
We already developed this intution by discussing the simple example in Figure \ref{fig:webpage}. 
The theorem confirms that this interpretation of TD's advantages is exactly the right one.  

Figures \ref{fig:good_instance} describes an instance with extreme trajectory pooling. Trajectories that start in distinct states tend to immediately reach common successors, so TD understands value-to-go from successors quite well. Figure \ref{fig:bad_instance} is a case with no trajectory pooling at all (i.e. $C(s)=1$).

\section{Horizon truncation in advantage estimation}
\label{sec:horizon_truncation}
Section \ref{sec:empirical} previewed two of the paper's key insights: TD's benefits are enhanced for advantage estimation and, in that setting, it effectively truncates the problem's time horizon.
Theory in this section formalizes these insights. 

\paragraph{The MSE of direct advantage estimates scales with the horizon.} The variance of the total reward along a trajectory typically scales with the horizon. Therefore, it would not be surprising that the mean squared error of the estimate of the advantage also scales with the horizon. We show that it is indeed the case for MC by stating a lower bound on the mean squared error.

\begin{proposition}
    \label{prop:MC_lower_bound}
    For $s,s' \in \mathcal{S}$ such that $\pr{s \in \tau \wedge s' \in \tau} = 0$,
    \begin{align*}
        &\lim_{n \to \infty} n \cdot \EE{\left(\hat{\mathbb{A}}_{\rm MC}(s,s') - \mathbb{A}(s,s')\right)^2} \geq \sigma_{\rm \min}^2\left(\frac{\EE{T | S_0 = s}}{\pr{s\in \tau}} + \frac{\EE{T | S_0 = s'}}{\pr{s' \in \tau}} \right).
    \end{align*}
where $\sigma_{\min}^2 = \min_{s \in \mathcal{S}} \Var{R_t + V(S_{t+1}) \mid S_t=s}$
\end{proposition}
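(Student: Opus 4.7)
The plan is to combine three ingredients: (i) independence of the MC estimates at $s$ and $s'$ under the disjointness assumption, (ii) the asymptotic variance formula for first-visit MC from the earlier Proposition (\texttt{lem:MC\_variance}), and (iii) a martingale/TD-error decomposition of the return to lower bound the one-state variance by the expected horizon times $\sigma_{\min}^2$.

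First I would observe that under $\mathbb{P}[s\in\tau \wedge s'\in\tau]=0$, no trajectory contributes to both $\hat{V}_{\rm MC}(s)$ and $\hat{V}_{\rm MC}(s')$, so these estimators are built from disjoint (and therefore independent) subsamples of the $n$ i.i.d.\ trajectories. Since $\hat{\mathbb{A}}_{\rm MC}(s,s')=\hat{V}_{\rm MC}(s)-\hat{V}_{\rm MC}(s')$ and each component is asymptotically unbiased (first-visit MC is unbiased conditional on visiting the state), we have
\begin{equation*}
\lim_{n\to\infty} n\cdot\mathbb{E}\!\left[(\hat{\mathbb{A}}_{\rm MC}(s,s')-\mathbb{A}(s,s'))^2\right]
= \lim_{n\to\infty} n\cdot\mathrm{Var}[\hat{V}_{\rm MC}(s)] + \lim_{n\to\infty} n\cdot\mathrm{Var}[\hat{V}_{\rm MC}(s')].
\end{equation*}
By the delta-method/CLT argument underlying the earlier MC variance proposition, $n\cdot\mathrm{Var}[\hat{V}_{\rm MC}(s)]\to \mathrm{Var}[G \mid S_0=s]/\mathbb{P}[s\in\tau]$, where $G=\sum_{t=1}^{T} R_t$ is the return under $S_0=s$. (The conditional-on-visit distribution of the post-$T(s)$ trajectory coincides with the distribution of a fresh trajectory from $s$ by the strong Markov property.)

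The main step is to lower bound $\mathrm{Var}[G\mid S_0=s]$ by $\sigma_{\min}^2\,\mathbb{E}[T\mid S_0=s]$. Writing the TD errors $\delta_t = R_{t+1}+V(S_{t+1})-V(S_t)$, a telescoping sum gives $G - V(s) = \sum_{t=0}^{T-1}\delta_t$ using $V(\emptyset)=0$. Since $\mathbb{E}[\delta_t\mid S_0,\ldots,S_t]=0$, the partial sums form a martingale with respect to the natural filtration, and the optional stopping/Wald-type identity yields
\begin{equation*}
\mathrm{Var}[G\mid S_0=s] = \mathbb{E}\!\left[\sum_{t=0}^{T-1}\delta_t^2 \,\Big|\, S_0=s\right] = \mathbb{E}\!\left[\sum_{t=0}^{T-1}\mathrm{Var}[R_{t+1}+V(S_{t+1})\mid S_t] \,\Big|\, S_0=s\right].
\end{equation*}
Bounding every summand from below by $\sigma_{\min}^2$ (valid because each $S_t$ for $t<T$ is non-terminal) gives $\mathrm{Var}[G\mid S_0=s]\geq \sigma_{\min}^2\,\mathbb{E}[T\mid S_0=s]$. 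Applying the same bound at $s'$ and combining with the display above yields the claim.

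The main obstacle, if any, is technical rather than conceptual: justifying the exchange of limit and expectation in the delta-method step (a uniform-integrability argument using the finite termination time and the finite state space) and checking that the martingale identity for $\mathbb{E}[(\sum\delta_t)^2]$ applies at the random time $T$. Both are standard given that $\mathbb{E}[T\mid S_0=s]<\infty$ (which follows from the assumption that $\emptyset$ is reachable from every state in the finite-state MRP), so the computation above goes through cleanly.
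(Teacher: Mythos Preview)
Your proposal is correct and follows essentially the same route as the paper: split the advantage MSE into the two independent single-state variances using the disjoint-visits hypothesis, invoke the MC CLT (Proposition~\ref{lem:MC_variance}) to identify each limit as $\tfrac{1}{\mathbb{P}[s\in\tau]}\sum_{\tilde s}\mathbb{E}[N(\tilde s)\mid S_0=s]\,\sigma_V^2(\tilde s)$, and then lower bound each one-step variance by $\sigma_{\min}^2$ and use $\sum_{\tilde s}\mathbb{E}[N(\tilde s)\mid S_0=s]=\mathbb{E}[T\mid S_0=s]$. Your martingale/TD-error telescoping is exactly the same computation already carried out inside the proof of Proposition~\ref{lem:MC_variance}, so you could simply quote that formula rather than re-deriving it.
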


The condition $\pr{s \in \tau \wedge s' \in \tau} = 0$ guarantees that no trajectory can visit both $s$ and $s'$. It ensures that the MC estimate of the value at $s$ and $s'$ are independent. It is verified when considering the heterogeneous treatment effect, described in Section \ref{subsec:HTE}, where a single action is chosen for every trajectory. The scaling in the inverse probability of visiting $s$ and $s'$ appears because $n\pr{s \in \tau}$ and $n\pr{s' \in \tau}$ are asymptotically the number of trajectories available for the Monte-Carlo estimation.  

\paragraph{The MSE of TD's advantage estimates scales with a smaller trajectory crossing time.}
Rather than scale with problem's time horizon, the mean squared error of TD's advantage estimate is bounded by a smaller notion of the problem's effective horizon. To formally capture this phenomenon, we introduce the  \emph{trajectory crossing time} $H(s,s')$ for two states $s$ and $s'$ to be the expected time for trajectories starting at $s$ and $s'$ to cross under the most optimistic coupling. Two trajectories always cross once both have terminated, as in that case both have visited the terminal state $\emptyset$, but they could cross much sooner. Intuition for the this definition is provided below. 

\begin{definition}
   The set of distributions $\Psi(s,s')$ is the set of all joint distributions over trajectories $(\tau, \tilde{\tau})$ such that the marginal distributions of $\tau$ and $\tilde{\tau}$ are those of trajectories starting at $s$ and $s'$, respectively.
\end{definition}

\begin{definition}[Trajectory crossing time]
   The trajectory crossing time of two states $s$ and $s'$ is the expected time for trajectories originating from $s$ and $s'$ to cross under the best coupling that preserves the trajectories' marginal distributions:
   \begin{equation*}
       H(s,s') = \min_{\psi \in \Psi(s,s')} \EE[(\tau, \tilde{\tau}) \sim \psi]{\inf \{t | \mathcal{C}_t(S, S') \neq \emptyset \}}
   \end{equation*}
   where $\mathcal{C}_t(S, S') = \{S_0, \dots, S_t\} \cap \{S_1', \dots, S_t'\}$ is the set of states visited by both trajectories at time $t$.
\end{definition}
The following theorem establishes that the mean squared error of the TD estimate of the advantage scales with the trajectory crossing time instead of the full horizon:
\begin{theorem}
\label{thm:horizon_truncation}
For $s,s' \in \mathcal{S}$,
\begin{align*}
    &\lim_{n\rightarrow\infty} n\cdot \EE{\left(\hat{\mathbb{A}}_{\rm TD}(s,s') - \mathbb{A}(s,s')\right)^2} \leq  2\left(\dfrac{ \sigma^2_{\rm max}}{\min\left(\pr{ s\in \tau}, \pr{s' \in \tau} \right)}\right) \cdot H(s,s')
\end{align*}
with $ \sigma^2_{\rm max} = \max_{s \in \mathcal{S}} \Var{R_t + V(S_{t+1}) \mid S_t=s}$.
\end{theorem}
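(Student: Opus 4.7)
The plan is to obtain an asymptotic linear expansion of $\hat{V}_{\rm TD}$ in terms of Bellman residuals, use it to write the exact asymptotic variance of the TD advantage estimate, and then bound that variance by $H(s,s')$ via a coupling argument on visit-count differences.

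\textbf{Step 1 (linearization).} The empirical Bellman equation $\hat{V}_{\rm TD} = \hat{r} + \hat{P}\hat{V}_{\rm TD}$ combined with $V = r + PV$ yields
\[
\hat{V}_{\rm TD} - V \;=\; (I-\hat{P})^{-1}\bigl[(\hat{r}-r) + (\hat{P}-P)V\bigr].
\]
The $\bar{s}$-coordinate of $(\hat{r}-r) + (\hat{P}-P)V$ equals $\frac{1}{n(\bar{s})}\sum_{(i,t):\,S_t^{(i)}=\bar{s}}\delta_t^{(i)}$, where $\delta_t^{(i)} := R_{t+1}^{(i)} + V(S_{t+1}^{(i)}) - V(S_t^{(i)})$ is the mean-zero TD error and $n(\bar{s})$ counts empirical visits to $\bar{s}$. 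Since $n(\bar{s})/n \to \EE{N(\bar{s})}$ and $(I-\hat{P})^{-1} \to G:=(I-P)^{-1}$ almost surely, the delta method gives
\[
\hat{V}_{\rm TD}(s) - V(s) \;=\; \sum_{\bar{s}} \frac{G(s,\bar{s})}{n\,\EE{N(\bar{s})}}\sum_{(i,t):\,S_t^{(i)}=\bar{s}}\delta_t^{(i)} \;+\; o_p(n^{-1/2}),
\]
with $G(s,\bar{s}) = \EE{N(\bar{s}) \mid S_0 = s}$.

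\textbf{Step 2 (asymptotic variance).} The $\delta_t^{(i)}$'s form a martingale difference sequence along each trajectory and trajectories are independent, so they are pairwise uncorrelated. Subtracting the analogous expansion at $s'$ and computing the variance yields
\[
\lim_{n\to\infty} n\cdot\EE{\bigl(\hat{\mathbb{A}}_{\rm TD}(s,s') - \mathbb{A}(s,s')\bigr)^2} \;=\; \sum_{\bar{s}} \frac{(G(s,\bar{s}) - G(s',\bar{s}))^2}{\EE{N(\bar{s})}}\,\Var{R_t+V(S_{t+1})\mid S_t=\bar{s}},
\]
which is at most $\sigma_{\max}^2\sum_{\bar{s}} (G(s,\bar{s}) - G(s',\bar{s}))^2/\EE{N(\bar{s})}$.

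\textbf{Step 3 (coupling reduction).} By the strong Markov property, $\EE{N(s\to\bar{s})} = \pr{s\in\tau}\,G(s,\bar{s})$, and since $N(\bar{s}) \ge N(s\to\bar{s})$ pointwise, we have $\EE{N(\bar{s})} \ge p_{\min}\cdot\max(G(s,\bar{s}),G(s',\bar{s}))$ with $p_{\min}=\min(\pr{s\in\tau},\pr{s'\in\tau})$. Using $(a-b)^2 \le |a-b|\max(a,b)$ for $a,b\ge 0$ term-by-term,
\[
\sum_{\bar{s}} \frac{(G(s,\bar{s}) - G(s',\bar{s}))^2}{\EE{N(\bar{s})}} \;\le\; \frac{1}{p_{\min}}\sum_{\bar{s}} \bigl|G(s,\bar{s}) - G(s',\bar{s})\bigr|.
\]
For any $\psi\in\Psi(s,s')$, stitch it so the two trajectories become identical after their first crossing time $\Theta$; this preserves marginals (by the Markov property) and leaves the law of $\Theta$ unchanged. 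Under this stitched coupling, $G(s,\bar{s}) - G(s',\bar{s}) = \EE[\psi]{N_{<\Theta}^\tau(\bar{s}) - N_{<\Theta}^{\tilde\tau}(\bar{s})}$, and hence
\[
\sum_{\bar{s}} \bigl|G(s,\bar{s}) - G(s',\bar{s})\bigr| \;\le\; \EE[\psi]{\sum_{\bar{s}}\bigl(N_{<\Theta}^\tau(\bar{s}) + N_{<\Theta}^{\tilde\tau}(\bar{s})\bigr)} \;=\; 2\EE[\psi]{\Theta}.
\]
Taking the infimum over $\psi$ gives the bound by $2H(s,s')$; chaining through Steps 2--3 yields the theorem.

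\textbf{Main obstacle.} The most delicate point is justifying the delta-method linearization in Step 1 strongly enough to produce convergence of the rescaled second moment, not merely convergence in distribution, since $(I-\hat{P})^{-1}$ is not uniformly bounded in $\hat{P}$. The standard fix is to restrict to the high-probability event $\{\|\hat{P}-P\|\le\epsilon_n\}$, control the remainder there with Cauchy--Schwarz against a trajectory-length bound, and show the exceptional event contributes negligibly. The coupling stitching in Step 3 is conceptually novel but short once the variance identity of Step 2 is in hand.
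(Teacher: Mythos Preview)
Your proposal is correct and follows essentially the same route as the paper. Both arguments (i) derive the exact asymptotic variance $\sum_{\bar s}\bigl(G(s,\bar s)-G(s',\bar s)\bigr)^2\sigma_V^2(\bar s)/\EE{N(\bar s)}$ by linearizing the empirical Bellman fixed point (your delta-method expansion with $(I-\hat P)^{-1}\to G$ is equivalent to the paper's iterated recursion in its Lemma~B.4), (ii) reduce this to $\frac{\sigma_{\max}^2}{p_{\min}}\sum_{\bar s}\lvert G(s,\bar s)-G(s',\bar s)\rvert$, and (iii) bound the $\ell_1$ occupancy difference by $2H(s,s')$ via a stitching/coupling argument (the paper's Lemma~B.8). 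Your inequality $(a-b)^2\le\lvert a-b\rvert\max(a,b)$ combined with $\EE{N(\bar s)}\ge p_{\min}\max(G(s,\bar s),G(s',\bar s))$ is a slightly cleaner packaging of the paper's separate bounds on $\max_{\bar s}\lvert\Delta(\bar s)/\EE{N(\bar s)}\rvert$ and $\sum_{\bar s}\lvert\Delta(\bar s)\rvert$, but the substance is identical. One small imprecision: after stitching, the two trajectories share a common suffix from the crossing \emph{state}, not from the common \emph{time} $\Theta$, so the exact identity is $G(s,\bar s)-G(s',\bar s)=\EE{\sum_{t<u}\mathbbm{1}(\tau_t=\bar s)-\sum_{t<v}\mathbbm{1}(\tilde\tau_t=\bar s)}$ with $\max(u,v)=\Theta$ rather than your $N^\tau_{<\Theta}-N^{\tilde\tau}_{<\Theta}$; the bound $\le 2\EE[\psi]{\Theta}$ is unaffected. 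The uniform-integrability issue you flag for passing from the CLT to second-moment convergence is genuine and is equally elided in the paper.
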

Figure \ref{plot:var_horizons}, in Section \ref{sec:empirical}, provides an empirical illustration of this result. 

This result can actually understate the benefits the TD. 
Any trajectory pooling that happens before the trajectories cross further helps reducing the variance, 
but is not reflected in the upper bound of Theorem \ref{thm:horizon_truncation}. In particular, we show in Appendix \ref{subsec:TD_improves} that trajectory pooling ensures that TD estimates are at least as accurate as MC estimates, even when the crossing time matches the full horizon. We also discuss the tightness of these bounds in Appendix \ref{subsec:tight_bounds}.

\paragraph{Comparison with a coupling time.}
 Trajectories are said to cross if one of the trajectories reaches a state already visited by the other one, potentially at an earlier time. It is different from the coupling time where trajectories have to reach a common state simultaneously. In particular, the crossing time is always smaller than the coupling time.The MRP defined in Figure \ref{fig:meeting_horizon} illustrates this. Trajectories starting at states $s_1^{(1)}$ and $s_1^{(2)}$ only couple when reaching the terminal state, after $m+1$ timesteps. However, they cross in two timesteps, that is $H\left(s_1^{(1)},s_1^{(2)}\right) = 2$.     
\begin{figure}[t!]
\centering
\begin{tikzpicture}[main/.style = {draw, circle, minimum size = 1cm}] 
\node[main] (1) {$s_1^{(1)}$}; 
\node[main] (2) [below=0.4cm of 1] {$s_1^{(2)}$}; 
\node[main] (5) [right=0.5cm of 1] {$s_2$}; 
\node[main] (6) [right=0.5cm of 5] {$s_3$}; 
\node[main] (7) [right=1cm of 6] {$s_m$}; 

\draw [->] (1) -- (5);
\draw [->] (2) to [bend right = 30] (6);
\draw [->] (5) -- (6);

\draw [dashed] (6) -- (7);
\end{tikzpicture} 
\caption{An example with no coupling but rapid crossing.}
\label{fig:meeting_horizon}
\end{figure}
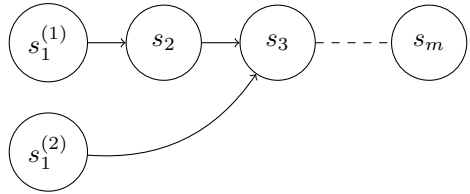

\paragraph{Intuition for the result.}
 
 Let us give intuition for Theorem \ref{thm:horizon_truncation}. 
 Under the MRP structure, two trajectories reaching a common state have the same future expected reward. Hence, when estimating the difference in expected total rewards along two trajectories, one starting at $s$, the other starting at $s'$, it is only useful to estimate them up until the state where trajectories cross. By computing estimates at every state, TD leverages this property: two trajectories reaching a common state (the crossing state) use the same estimate (the value at the crossing state) to update predictions along both trajectories. Since the same estimate is used, its value cancels out when computing the difference in values at $s$ and $s'$. Whether the value at the crossing state is accurately estimated doesn't affect the estimation of the advantage.

\section{Open questions}
\label{sec:open_quest}

 There is a subtle interplay between the choice of state representation and the benefits of imposing temporal consistency in value estimates. 
 Consider again the problem in Figure \ref{fig:webpage}. 
 In that case, we chose to represent the `checkout page' as a state, implying that the purchase probability at the checkout page does not depend on the initial webpage shown to the user. 
 This makes a strong surrogacy assumption, which TD leverages to greatly improve data efficiency. 
 An alternative representation of the state in the second period is of the form $s=(\text{website version } i, \text{checkout})$, retaining information about how the user navigated to the checkout. 
 In this case, there is no trajectory pooling and our theory indicates that TD behaves as MC would. 
By using a very rich representation, which recalls much of the past, the benefits of TD  disappear. 
Clearly, we want representations that are accurate, to avoid severe approximation errors. But we have shown that representations that are forgetful of aspects of the past offer enormous benefits; this lets  value-to-go from intermediate states serve as surrogate outcomes.
How should one balance this tradeoff while learning representations from data? This question closely relates to the one of understanding temporal consistency with function approximation.

\bibliography{ref}

\bibliographystyle{icml2022}

\newpage
\appendix
\onecolumn
\begin{appendix}

\section{Additional experiments}

Figure \ref{plot:var_horizons} and \ref{plot:meeting_horizons} aims at displaying the difference in precision of MC estimates and TD estimates on minimal working examples, to underline what is driving the difference. In this section, we complement the experiments in the main body to illustrate that the theory holds in more complex settings. Specifically, we show empirical results on an augmented version of the Layered MRP introduced in Figure \ref{graph:layered}, where we introduce cycles. Specifically, a cyclic Layered MRP with width $W$, horizon $T$ and backward arc probability $p$ has $T-1$ layers of $W$ states, each of which has a non-zero probability to each state in the next layer and with probability $p$, a transition to a state in a previous or the current layer. The structure is illustrated in Figure \ref{graph:layered_cycles}.
    
\begin{figure}[h!]
    \centering
\begin{tikzpicture}[main/.style = {draw, circle, minimum size=1cm}] 
\node[main, label=center:$s_1^{(1)}$] (11) {}; 
\node[main, label=center:$s_1^{(2)}$] (12) [below=0.3cm of 11] {}; 
\node (ellipsis1) [below=1.2cm of 11] {$\vdots$};
\node[main, label=center:$s_1^{(W)}$] (1w) [below=0.8cm of 12] {}; 

\node[main, label=center:$s^{(1)}_2$] (21) [right=1cm of 11] {}; 
\node[main, label=center:$s^{(2)}_2$] (22) [right=1cm  of 12] {}; 
\node (ellipsis2) [below=1.2cm of 21] {$\vdots$};
\node[main, label=center:$s^{(W)}_2$] (2w) [right=1cm  of 1w] {}; 



\node[main, label=center:$s^{(1)}_{T-1}$] (d1) [right=1.4cm of 21] {}; 
\node[main, label=center:$s^{(2)}_{T-1}$] (d2) [right=1.4cm of 22] {}; 
\node (ellipsisd) [below=1.2cm of d1] {$\vdots$};
\node[main, label=center:$s^{(W)}_{T-1}$] (dw) [right =1.4cm of 2w] {}; 

\node[main, label=center:$\emptyset$] (end) [right=0.6cm of d2] {};

\draw [->] (11) -- (21);
\draw [->] (12) -- (21);
\draw [->] (1w) -- (21);

\draw [->] (11) -- (22);
\draw [->] (12) -- (22);
\draw [->] (1w) -- (22);

\draw [->] (11) -- (2w);
\draw [->] (12) -- (2w);
\draw [->] (1w) -- (2w);

\draw [blue, ->] (2w) to [bend right = 1cm] (22);
\draw [blue, ->] (dw) to [bend right = 0.5cm] (21);







\draw [->] (d1) -- (end);
\draw [->] (d2) -- (end);
\draw [->] (dw) -- (end);

\draw[dashed] (22) -- (d2);

\end{tikzpicture}     

    \caption{\centering Layered MRP with width $W$ and horizon $T$ and backwards transitions. There is at most one backward transition per node. The probability of adding a backward transition is fixed $p$ and the desitination of a backward transition is chosen uniformly at random. Transitions are chosen randomly and rewards are uniform on $[r(s,s')-1;r(s,s')+1]$ where $r(s,s')$ is chosen uniformly between -1 and 1.}
    \label{graph:layered_cycles}
\end{figure}
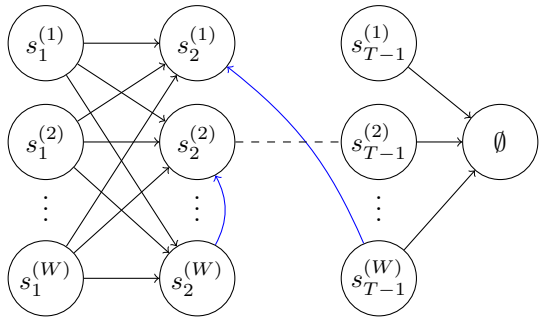

We first show that the Central Limit Theorems governing the MC and TD estimates still hold empirically when cycles are introduced. We used 10000 Monte-Carlo replications, each of which used 2000 samples to compute the value-to-go.

\newcommand{\MyDrawVarLinesCycle}{
\addplot [blue,mark=*,only marks, mark options={scale=.5}, error bars/.cd, y explicit,y dir=both] table[x=num_samples,y=Empirical_treatment_TD_A, y error plus expr=\thisrow{Empirical_treatment_TD_A_UB} - \thisrow{Empirical_treatment_TD_A} , y error minus expr=\thisrow{Empirical_treatment_TD_A} - \thisrow{Empirical_treatment_TD_A_LB}] {Data/var_horizons_p0.1.csv};
\addplot [blue] table[x=num_samples,y=Theoretical_treatment_TD_A] {Data/var_horizons_p0.1.csv};

\addplot [brown,mark=*,only marks, mark options={scale=.5}, error bars/.cd, y explicit,y dir=both] table[x=num_samples,y=Empirical_treatment_TD_B, y error plus expr=\thisrow{Empirical_treatment_TD_B_UB} - \thisrow{Empirical_treatment_TD_B} , y error minus expr=\thisrow{Empirical_treatment_TD_B} - \thisrow{Empirical_treatment_TD_B_LB}] {Data/var_horizons_p0.1.csv};
\addplot [brown] table[x=num_samples,y=Theoretical_treatment_TD_B] {Data/var_horizons_p0.1.csv};

\addplot [red,mark=*,only marks, mark options={scale=.5}, error bars/.cd, y explicit,y dir=both] table[x=num_samples,y=Empirical_treatment_TD_ATE, y error plus expr=\thisrow{Empirical_treatment_TD_ATE_UB} - \thisrow{Empirical_treatment_TD_ATE} , y error minus expr=\thisrow{Empirical_treatment_TD_ATE} - \thisrow{Empirical_treatment_TD_ATE_LB}] {Data/var_horizons_p0.1.csv};
\addplot [red] table[x=num_samples,y=Theoretical_treatment_TD_ATE] {Data/var_horizons_p0.1.csv};

\addplot [blue,mark=x,only marks, mark options={scale=1}, error bars/.cd, y explicit,y dir=both] table[x=num_samples,y=Empirical_treatment_MC_A, y error plus expr=\thisrow{Empirical_treatment_MC_A_UB} - \thisrow{Empirical_treatment_MC_A} , y error minus expr=\thisrow{Empirical_treatment_MC_A} - \thisrow{Empirical_treatment_MC_A_LB}] {Data/var_horizons_p0.1.csv};
\addplot [blue, dashed] table[x=num_samples,y=Theoretical_treatment_MC_A] {Data/var_horizons_p0.1.csv};

\addplot [brown,mark=x,only marks, mark options={scale=1}, error bars/.cd, y explicit,y dir=both] table[x=num_samples,y=Empirical_treatment_MC_B, y error plus expr=\thisrow{Empirical_treatment_MC_B_UB} - \thisrow{Empirical_treatment_MC_B} , y error minus expr=\thisrow{Empirical_treatment_MC_B} - \thisrow{Empirical_treatment_MC_B_LB}] {Data/var_horizons_p0.1.csv};
\addplot [brown, dashed] table[x=num_samples,y=Theoretical_treatment_MC_B] {Data/var_horizons_p0.1.csv};

\addplot [red,mark=x,only marks, mark options={scale=1}, error bars/.cd, y explicit,y dir=both] table[x=num_samples,y=Empirical_treatment_MC_ATE, y error plus expr=\thisrow{Empirical_treatment_MC_ATE_UB} - \thisrow{Empirical_treatment_MC_ATE} , y error minus expr=\thisrow{Empirical_treatment_MC_ATE} - \thisrow{Empirical_treatment_MC_ATE_LB}] {Data/var_horizons_p0.1.csv};
\addplot [red, dashed] table[x=num_samples,y=Theoretical_treatment_MC_ATE] {Data/var_horizons_p0.1.csv};
}
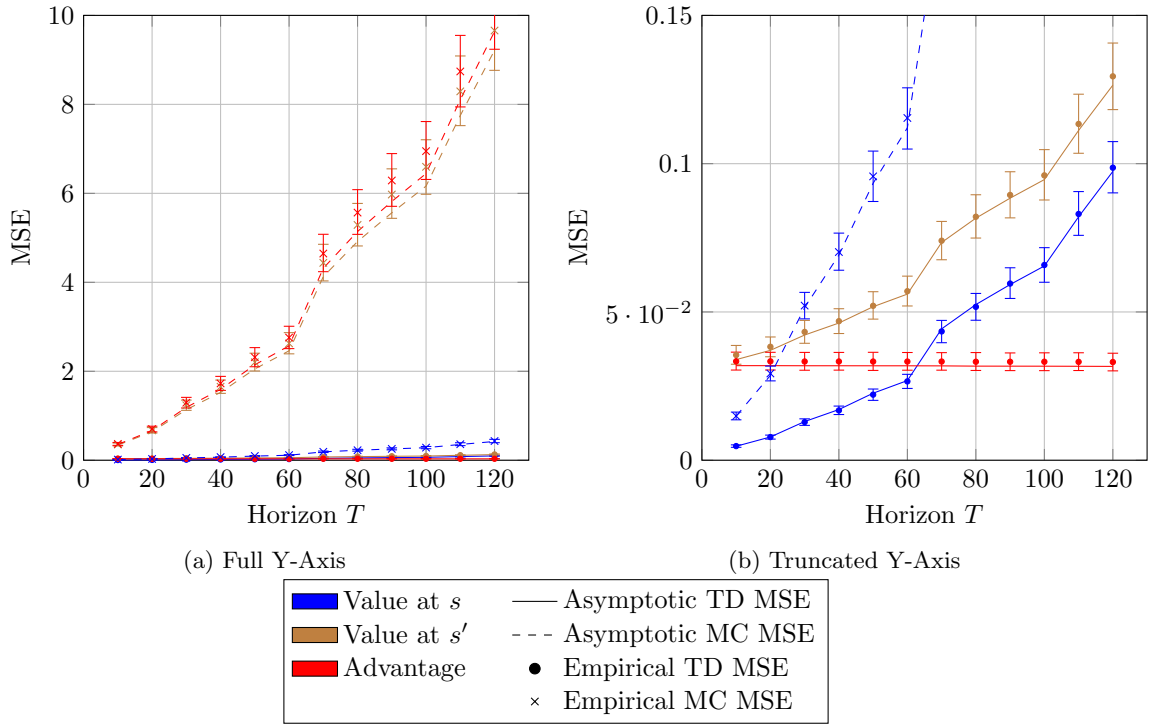
\begin{figure}[h!]
\begin{subfigure}[h]{0.49\columnwidth}
\centering    
\begin{tikzpicture}
\begin{axis}[
            xmin=0,xmax=130,
            ymin=0.0,ymax=10,
            width=7.5cm,
            height=7.5cm,
            table/col sep=comma,
            xlabel = Horizon $T$,
            ylabel = MSE,
            grid=both] 
    \MyDrawVarLinesCycle
\end{axis}
\end{tikzpicture}
\caption{Full Y-Axis}
\end{subfigure}
\begin{subfigure}[h]{0.49\columnwidth}
\centering
\begin{tikzpicture}
\begin{axis}[
            xmin=0,xmax=130,
            ymin=0.0,ymax=0.15,
            width=7.5cm,
            height=7.5cm,
            table/col sep=comma,
            xlabel = Horizon $T$,
            ylabel = MSE,
            grid=both,
            scaled y ticks=false,
            legend pos=north east,
            title style={at={(0.5,0)},anchor=north,yshift=-0.1}] 
    \MyDrawVarLinesCycle
\end{axis}
\end{tikzpicture}
\caption{Truncated Y-Axis}
\end{subfigure}
\centering
\begin{tikzpicture}
\begin{axis}[
            xmin=10,xmax=130,
            ymin=0.0,ymax=0.5,
            width=2cm,
            height=2cm,
            hide axis,
            table/col sep=comma,
            xlabel = Horizon $H$,
            ylabel = Variance,
            ylabel near ticks, yticklabel pos=right,
            grid=both,
            legend style={anchor=north west,
                legend cell align=left,
                /tikz/every even column/.append style={column sep=0.5cm}},
            legend columns = 2] 
\addlegendimage{fill = blue, area legend}
\addlegendentry{Value at $s$}
\addlegendimage{black}
\addlegendentry{Asymptotic TD MSE}
\addlegendimage{fill = brown, area legend}
\addlegendentry{Value at $s'$}
\addlegendimage{dashed,black}
\addlegendentry{Asymptotic MC MSE}

\addlegendimage{fill = red, area legend}
\addlegendentry{Advantage}
\addlegendimage{mark=*, only marks,black}
\addlegendentry{Empirical TD MSE}
\addlegendimage{opacity = 0}
\addlegendentry{}
\addlegendimage{mark=x, only marks,black}
\addlegendentry{Empirical MC MSE}

\end{axis}
\end{tikzpicture}

    \caption{\centering MSE of different MC and TD estimates on Cyclic Layered MRP with $W = 5$, $p=0.1$ and varying horizon $T$}
    \label{plot:var_horizons_p0.1}
\end{figure}

Next, we show how accurate the limiting approximation is in the finite sample regime. To do so, we compute the empirical MSE for the MC and TD estimates for a range of sample sizes and plot alongside the normal approximation. The MRP used for these experiment is a Layered MRP with cycles with parameter $W = 5, T = 120, p=0.1$. We used 10000 Monte-Carlo replications.

\newcommand{\MyDrawVarLinesSample}{
\addplot [blue,mark=*,only marks, mark options={scale=.5}, error bars/.cd, y explicit,y dir=both] table[x=num_samples,y=Empirical_treatment_TD_A, y error plus expr=\thisrow{Empirical_treatment_TD_A_UB} - \thisrow{Empirical_treatment_TD_A} , y error minus expr=\thisrow{Empirical_treatment_TD_A} - \thisrow{Empirical_treatment_TD_A_LB}] {Data/var_samples_p0.1.csv};
\addplot [blue] table[x=num_samples,y=Theoretical_treatment_TD_A] {Data/var_samples_p0.1.csv};

\addplot [brown,mark=*,only marks, mark options={scale=.5}, error bars/.cd, y explicit,y dir=both] table[x=num_samples,y=Empirical_treatment_TD_B, y error plus expr=\thisrow{Empirical_treatment_TD_B_UB} - \thisrow{Empirical_treatment_TD_B} , y error minus expr=\thisrow{Empirical_treatment_TD_B} - \thisrow{Empirical_treatment_TD_B_LB}] {Data/var_samples_p0.1.csv};
\addplot [brown] table[x=num_samples,y=Theoretical_treatment_TD_B] {Data/var_samples_p0.1.csv};

\addplot [red,mark=*,only marks, mark options={scale=.5}, error bars/.cd, y explicit,y dir=both] table[x=num_samples,y=Empirical_treatment_TD_ATE, y error plus expr=\thisrow{Empirical_treatment_TD_ATE_UB} - \thisrow{Empirical_treatment_TD_ATE} , y error minus expr=\thisrow{Empirical_treatment_TD_ATE} - \thisrow{Empirical_treatment_TD_ATE_LB}] {Data/var_samples_p0.1.csv};
\addplot [red] table[x=num_samples,y=Theoretical_treatment_TD_ATE] {Data/var_samples_p0.1.csv};

\addplot [blue,mark=x,only marks, mark options={scale=1}, error bars/.cd, y explicit,y dir=both] table[x=num_samples,y=Empirical_treatment_MC_A, y error plus expr=\thisrow{Empirical_treatment_MC_A_UB} - \thisrow{Empirical_treatment_MC_A} , y error minus expr=\thisrow{Empirical_treatment_MC_A} - \thisrow{Empirical_treatment_MC_A_LB}] {Data/var_samples_p0.1.csv};
\addplot [blue, dashed] table[x=num_samples,y=Theoretical_treatment_MC_A] {Data/var_samples_p0.1.csv};

\addplot [brown,mark=x,only marks, mark options={scale=1}, error bars/.cd, y explicit,y dir=both] table[x=num_samples,y=Empirical_treatment_MC_B, y error plus expr=\thisrow{Empirical_treatment_MC_B_UB} - \thisrow{Empirical_treatment_MC_B} , y error minus expr=\thisrow{Empirical_treatment_MC_B} - \thisrow{Empirical_treatment_MC_B_LB}] {Data/var_samples_p0.1.csv};
\addplot [brown, dashed] table[x=num_samples,y=Theoretical_treatment_MC_B] {Data/var_samples_p0.1.csv};

\addplot [red,mark=x,only marks, mark options={scale=1}, error bars/.cd, y explicit,y dir=both] table[x=num_samples,y=Empirical_treatment_MC_ATE, y error plus expr=\thisrow{Empirical_treatment_MC_ATE_UB} - \thisrow{Empirical_treatment_MC_ATE} , y error minus expr=\thisrow{Empirical_treatment_MC_ATE} - \thisrow{Empirical_treatment_MC_ATE_LB}] {Data/var_samples_p0.1.csv};
\addplot [red, dashed] table[x=num_samples,y=Theoretical_treatment_MC_ATE] {Data/var_samples_p0.1.csv};
}
\begin{figure}[h!]
\begin{subfigure}[h]{0.49\columnwidth}
\centering    
\begin{tikzpicture}
\begin{axis}[
            xmin=0,xmax=2200,
            ymin=0.0,ymax=110,
            width=7.5cm,
            height=7.5cm,
            table/col sep=comma,
            xlabel = Number of samples $n$,
            ylabel = MSE,
            grid=both] 
    \MyDrawVarLinesSample
\end{axis}
\end{tikzpicture}
\caption{Full Y-Axis}
\end{subfigure}
\begin{subfigure}[h]{0.49\columnwidth}
\centering
\begin{tikzpicture}
\begin{axis}[
            xmin=0,xmax=2200,
            ymin=0.0,ymax=1.4,
            width=7.5cm,
            height=7.5cm,
            table/col sep=comma,
            xlabel = Number of samples $n$,
            ylabel = MSE,
            grid=both,
            scaled y ticks=false,
            legend pos=north east,
            title style={at={(0.5,0)},anchor=north,yshift=-0.1}] 
            \MyDrawVarLinesSample
\end{axis}
\end{tikzpicture}
\caption{Truncated Y-Axis}
\end{subfigure}
\centering
\begin{tikzpicture}
\begin{axis}[
            xmin=10,xmax=130,
            ymin=0.0,ymax=0.5,
            width=2cm,
            height=2cm,
            hide axis,
            table/col sep=comma,
            xlabel = Horizon $H$,
            ylabel = Variance,
            ylabel near ticks, yticklabel pos=right,
            grid=both,
            legend style={anchor=north west,
                legend cell align=left,
                /tikz/every even column/.append style={column sep=0.5cm}},
            legend columns = 2] 
\addlegendimage{fill = blue, area legend}
\addlegendentry{Value at $s$}
\addlegendimage{black}
\addlegendentry{Asymptotic TD MSE}
\addlegendimage{fill = brown, area legend}
\addlegendentry{Value at $s'$}
\addlegendimage{dashed,black}
\addlegendentry{Asymptotic MC MSE}

\addlegendimage{fill = red, area legend}
\addlegendentry{Advantage}
\addlegendimage{mark=*, only marks,black}
\addlegendentry{Empirical TD MSE}
\addlegendimage{opacity = 0}
\addlegendentry{}
\addlegendimage{mark=x, only marks,black}
\addlegendentry{Empirical MC MSE}

\end{axis}
\end{tikzpicture}

    \caption{\centering MSE of MC and TD estimators when varying the number of samples, compared with the normal approximation}
    \label{plot:var_samples_p0.1}
\end{figure}
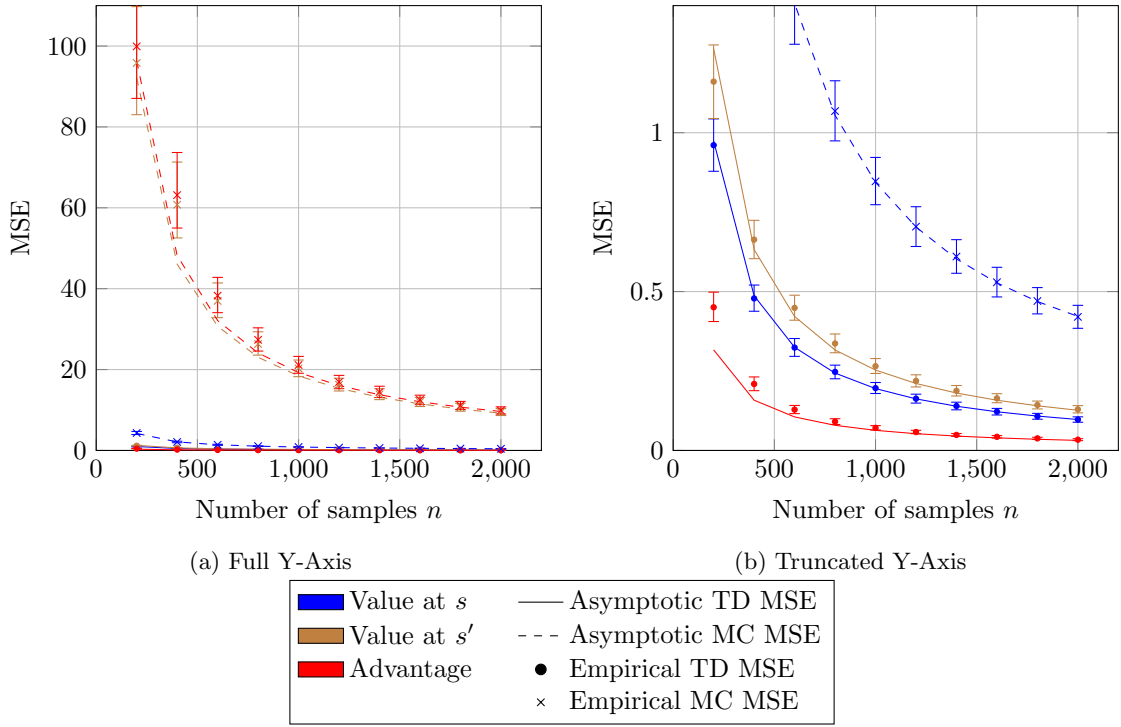

An alternate view on the impact of the number of samples is the one of regret: when trying to choose what state has the highest value function between two states $s$ and $s'$, how often would an estimator indicate the wrong decision. That is, the regret of an estimator $\hat{V}$ is
$\EE{\mathbbm{1}\left(\hat{V}(s) > \hat{V}(s')\right)}$, assuming $V(s') > V(s)$. We plot the empirical regret when using MC estimates versus TD estimates, as a function of the number of samples, along with the normal approximation derived from the Central Limit Theorems.

\begin{figure}[h!]
\centering
\begin{center}
\centering
\begin{subfigure}[h]{0.69\columnwidth}
\centering
\begin{tikzpicture}
\begin{axis}[
            title={},
            xmin=0,xmax=2200,
            ymin=0,ymax=0.5,
            width=7.5cm,
            height=7.5cm,
            table/col sep=comma,
            xlabel = Number of samples,
            ylabel = Regret,
            grid=both,]

\addplot [red,mark=*,only marks, mark options={scale=0.5}, error bars/.cd, y explicit,y dir=both] table[x=num_samples,y=Empirical_treatment_TD_ATE, y error plus expr=\thisrow{Empirical_treatment_TD_ATE_UB} - \thisrow{Empirical_treatment_TD_ATE} , y error minus expr=\thisrow{Empirical_treatment_TD_ATE} - \thisrow{Empirical_treatment_TD_ATE_LB}] {Data/regret.csv};
\addplot [red] table[x=num_samples,y=Theoretical_treatment_TD_ATE] {Data/regret.csv};

\addplot [blue,mark=*,only marks, mark options={scale=1}, error bars/.cd, y explicit,y dir=both] table[x=num_samples,y=Empirical_treatment_MC_ATE, y error plus expr=\thisrow{Empirical_treatment_MC_ATE_UB} - \thisrow{Empirical_treatment_MC_ATE} , y error minus expr=\thisrow{Empirical_treatment_MC_ATE} - \thisrow{Empirical_treatment_MC_ATE_LB}] {Data/regret.csv};
\addplot [blue] table[x=num_samples,y=Theoretical_treatment_MC_ATE] {Data/regret.csv};

\end{axis}
\end{tikzpicture}
\end{subfigure}
\begin{subfigure}[h]{0.2\columnwidth}
\begin{tikzpicture}
\begin{axis}[
            xmin=10,xmax=130,
            ymin=0.0,ymax=0.5,
            width=2cm,
            height=2cm,
            hide axis,
            table/col sep=comma,
            xlabel = Horizon $H$,
            ylabel = Variance,
            ylabel near ticks, yticklabel pos=right,
            grid=both,
            legend style={anchor=north west,
                legend cell align=left,
                /tikz/every even column/.append style={column sep=0.5cm}}]  
\addlegendimage{fill = blue, area legend}
\addlegendentry{MC estimate}

\addlegendimage{fill = red, area legend}
\addlegendentry{TD estimates}
\addlegendimage{black}
\addlegendentry{Normal approximation}
\addlegendimage{mark=*, only marks,black}
\addlegendentry{Empirical estimate}

\end{axis}
\end{tikzpicture}
\end{subfigure}
\end{center}
\caption{Regret for TD and MC estimates on Cyclic Layered MRP with $W =5,p=0.1, T=120$ as the number of samples used vary.}
\end{figure}
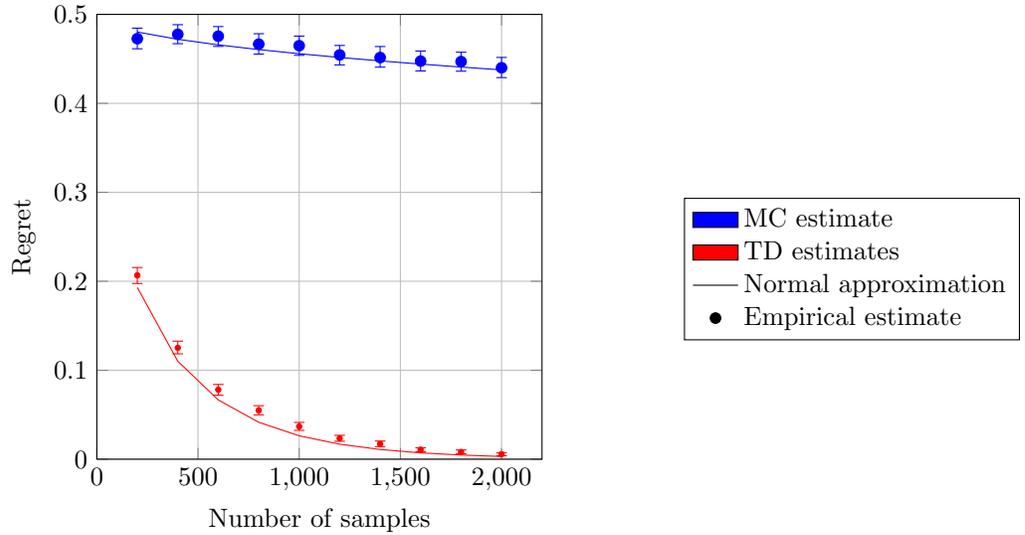

\section{Proofs.}

We state and prove results in the context of weighted value function which is a linear combination of the value function evaluated at individual states. 
\begin{definition}[Weighted value function]
	For a weighting over states $\pi$, the extended value function is defined as 
	\begin{equation*}
		J(\pi) = \sum_{s \in \mathcal{S} } \pi(s) V(s)
	\end{equation*}
\end{definition}

By setting $\pi$ to be a mass point at a single state, we recover the value function at this state. When interpreting initial states as actions (as in Section \ref{subsec:HTE}), we recover randomized policy when using a distribution over actions as the weighting. In this case, the weighted value function is the expected value when playing according to the randomized policy. Note that our definition of weighted value function allows for any weighting of states, including negative weights. This will be useful for analyzing advantages $V(s) - V(s')$ by setting $\pi(s) = 1$ and $\pi(s') = -1$. 

 We also extend our definition of expected number of visits to weightings over initial states.

\begin{definition}[Weighted expected number of visits]
   For a weighting over states $\pi$, we write $\eta_{\pi}(s)$ the weighted number of visits to $s$:
   \begin{equation*}
       \eta_{\pi}(s) =  \sum_{s' \in \mathcal{S}}  \pi(s') \EE{N(s) |S_0 = s'}
   \end{equation*}
\end{definition}

Similarly to the weighted value function, $\pi$ is not enforced to be a distribution over state, allowing even for negative values. In the case where $\pi$ is a distribution over state, we recover the probabilistic interpretation: $\eta_\pi(s)$ is the expected number of visits to state $s$ when the initial distribution is $\pi$.

\begin{definition}[One-step variance]
	\begin{equation*}
		\OSV{s} = \Var{R_t + V(S_{t+1}) \mid S_t=s}.
	\end{equation*}
\end{definition}

We extend trajectories into infinite horizon trajectories that stay in the terminating state and stop collecting rewards once the terminating state is reached: $S_t = \emptyset$ and $R_{t+1} = 0$ for all $t \geq T$. Equivalently, we define the transition $P(\emptyset \mid \emptyset) = 1$ and the reward $R(\emptyset, \emptyset) = 0$ a.s.

We start by stating and proving Central Limit Theorems (CLT) for the convergence of both TD and MC estimates. We then use these two results as building blocks to prove the main theorems. 
\subsection{Central Limit Theorems}

\begin{proposition}[Central Limit Theorem for  MC]
\label{lem:MC_variance}
For $s \in \mathcal{S}$,
\begin{equation*}
        \sqrt{n}(\hat{V}_{\rm MC}(s) - V(s)) \Rightarrow \mathcal{N}\left(0,  \dfrac{1}{\pr{s\in \tau}}\sum_{s' \in \mathcal{S}}\EE{N(s') \mid S_0 = s} \OSV{s'}\right)
\end{equation*}

\end{proposition}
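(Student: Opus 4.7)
The plan is to express $\hat{V}_{\rm MC}(s) - V(s)$ as a (ratio of) i.i.d. sums, apply a standard CLT together with Slutsky's theorem, and then compute the asymptotic variance via a martingale decomposition of the return.

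First I would introduce, for each trajectory $i\in[n]$, the ``contribution''
\[
G_i^s = \mathbbm{1}\{s \in \tau_i\}\cdot\sum_{t=T^{(i)}(s)+1}^{T^{(i)}} R_t^{(i)},
\qquad W_i^s = \mathbbm{1}\{s\in\tau_i\},
\]
so that $\hat{V}_{\rm MC}(s) = \left(\sum_{i=1}^n G_i^s\right)/\left(\sum_{i=1}^n W_i^s\right)$ whenever the denominator is positive (which holds w.p.\ $\to 1$ by our assumption that $\mathbb{P}[s\in\tau]>0$). By the strong Markov property applied at the hitting time $T(s)$, conditional on $s\in\tau_i$ the sub-trajectory after $T(s)$ is distributed as an independent trajectory started at $s$, so $\mathbb{E}[G_i^s \mid s\in\tau_i] = V(s)$ and hence $\mathbb{E}[G_i^s] = \mathbb{P}[s\in\tau]\,V(s)$. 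Writing $Z_i = G_i^s - W_i^s V(s)$, this gives $\mathbb{E}[Z_i]=0$, and
\[
\hat V_{\rm MC}(s) - V(s) \;=\; \frac{(1/n)\sum_i Z_i}{(1/n)\sum_i W_i^s}.
\]

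Next I would apply the ordinary CLT to the i.i.d.\ sum $\sum_i Z_i$ and use $(1/n)\sum_i W_i^s \xrightarrow{\rm p} \mathbb{P}[s\in\tau]$ with Slutsky's theorem to conclude
\[
\sqrt{n}\bigl(\hat V_{\rm MC}(s) - V(s)\bigr) \;\Rightarrow\; \mathcal{N}\!\left(0,\; \frac{\mathrm{Var}[Z_1]}{\mathbb{P}[s\in\tau]^2}\right).
\]
Since $Z_1 = W_1^s(G_1^s - V(s))$ with $W_1^s\in\{0,1\}$, we get $\mathrm{Var}[Z_1] = \mathbb{P}[s\in\tau]\cdot \mathrm{Var}[G_1^s - V(s)\mid s\in\tau_1]$, and by the strong Markov identification above, the latter equals $\mathrm{Var}\bigl[\sum_{t=1}^T R_t \mid S_0=s\bigr]$. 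Combining gives asymptotic variance $\mathrm{Var}[\sum_{t=1}^T R_t\mid S_0=s]/\mathbb{P}[s\in\tau]$.

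The remaining (and main) step is to show
\[
\mathrm{Var}\!\Bigl[\sum_{t=1}^T R_t \,\Big|\, S_0=s\Bigr] \;=\; \sum_{s'\in\mathcal{S}} \mathbb{E}[N(s')\mid S_0=s]\,\sigma_V^2(s').
\]
For this I would use the telescoping/martingale decomposition
\[
\sum_{t=1}^T R_t - V(s) \;=\; \sum_{t=0}^{T-1}\bigl(R_{t+1} + V(S_{t+1}) - V(S_t)\bigr),
\]
where $V(S_T)=V(\emptyset)=0$. The summands $D_{t+1} := R_{t+1} + V(S_{t+1}) - V(S_t)$ form a martingale difference sequence with respect to the natural filtration (by Bellman's equation), and $\mathbb{E}[D_{t+1}^2 \mid \mathcal{F}_t] = \sigma_V^2(S_t)$ by definition of $\sigma_V^2$. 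Since $T$ has finite expectation (by reachability of $\emptyset$ from every state) and rewards are bounded in the relevant sense, the optional sampling / predictable quadratic variation identity yields
\[
\mathrm{Var}\!\Bigl[\sum_{t=1}^T R_t \,\Big|\, S_0=s\Bigr] \;=\; \mathbb{E}\!\left[\sum_{t=0}^{T-1} \sigma_V^2(S_t)\,\Big|\,S_0=s\right] \;=\; \sum_{s'} \mathbb{E}[N(s')\mid S_0=s]\,\sigma_V^2(s'),
\]
using $\sigma_V^2(\emptyset)=0$ to absorb the boundary term. The main obstacle is making this martingale/optional-stopping step rigorous under only the standing assumptions of the paper (finite state space, reachable absorbing $\emptyset$, reward distribution with finite second moment), which requires a standard uniform integrability or truncation argument to justify exchanging expectation and the random stopping time $T$; everything else is a direct CLT plus Slutsky computation.
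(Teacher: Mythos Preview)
Your proposal is correct and follows essentially the same approach as the paper: express the error as a ratio of an i.i.d.\ sum and an empirical average, apply the CLT and SLLN with Slutsky's theorem, then compute the variance via the martingale-difference decomposition $R_{t+1}+V(S_{t+1})-V(S_t)$ grouped by state. The only cosmetic difference is that the paper extends the sums to $t=\infty$ (using that everything is eventually zero once $\emptyset$ is hit) rather than invoking optional stopping at $T$, which avoids the uniform-integrability technicality you flag.
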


\begin{proof}

We recall that, for tabular representation, the MC estimator takes the form

\begin{align*}
	\hat{V}_{\rm MC}(s) &= \mathbb{E}_{(S, V) \sim\mathcal{D}^{\rm MC}}\left[ V \mid S=s \right]\\
					&= \dfrac{1}{\mid I(s) \mid}\sum_{i\in I(s)}\sum_{t=T^{(i)}(s)}^{T^{(i)}} R_{t+1}^{(i)},
\end{align*}

where $I(s)$ is the set of trajectories that visit state $s$ and $T^{(i)}(s)$ is the first visit to state $s$ in trajectory $i$. Since we consider first-visit MC, each trajectory appears at most once in the summation. We start by rewriting the error $\hat{V}_{\rm MC}(s) - V(s)$ as the product of a scaling factor and the average of i.i.d. random variables:

\begin{align*}
    \hat{V}_{\rm MC}(s) - V(s) &= \dfrac{1}{\mid I(s) \mid}\sum_{i\in I(s)}\left(\sum_{t=T^{(i)}(s)}^{T^{(i)}} R_{t+1}^{(i)} - V(s)\right)\\
    &= \dfrac{n}{\mid I(s) \mid} \cdot \dfrac{1}{n}\sum_{i = 1}^n \left(\mathbbm{1}(s \in \tau^{(i)})\left(\sum_{t=T^{(i)}(s)}^{T^{(i)}} R_{t+1}^{(i)} - V(s)\right)\right)
\end{align*}

Recall that if $s$ is not visited in trajectory $i$, $T^{(i)}(s)$ is defined to be $T^{(i)}(s) = T^{(i)}$. 
\begin{itemize}
    \item We start by proving a Central Limit Theorem on 
    \begin{equation*}
         \dfrac{1}{n}\sum_{i = 1}^n \left(\mathbbm{1}(s \in \tau^{(i)})\sum_{t=T^{(i)}(s)}^{T^{(i)}} R_{t+1}^{(i)}\right).
    \end{equation*}

    The variables $\left(\mathbbm{1}(s \in \tau^{(i)})\left(\sum_{t=T^{(i)}(s)}^{T^{(i)}} R_{t+1}^{(i)} - V(s)\right)\right)_{i=1,\dots, n}$ are $n$ i.i.d., zero mean random variables. We now compute their variance.

    \begin{align*}
        \Var{\mathbbm{1}(s \in \tau)\left(\sum_{t=T(s)}^{T} R_{t+1} - V(s)\right)} &= \pr{s\in \tau}\Var{\mathbbm{1}(s \in \tau)\left(\sum_{t=T(s)}^{T} R_{t+1} - V(s)\right) \mid s \in \tau}\\
        &= \pr{s\in \tau}\Var{\sum_{t=T(s)}^{T} R_{t+1} - V(s) \mid s \in \tau}
    \end{align*}
    
Since the summation starts at the stopping time defined by the first visit to state $s$, the Strong Markov Property enables to re-index the summation in the following way:

    \begin{align*}
        \Var{\mathbbm{1}(s \in \tau)\left(\sum_{t=T(s)}^{T} R_{t+1} - V(s)\right)} &= \pr{s\in \tau}\Var{\sum_{t=0}^{T} R_{t+1} - V(s) \mid S_0 = s}\\
         &= \pr{s\in \tau}\Var{\sum_{t=0}^{\infty} R_{t+1} - V(s) \mid S_0 = s}
    \end{align*}
where we allowed the sum to run to infinity since $(R_{t+1})_t$ is a.s. stationary at 0 for $t \geq T$. Similarly, we use the fact that $\left(V(S_t) - V(S_{t+1})\right)$ is a.s. stationary at 0 to write $V(S_0) = \sum_{t=0}^{\infty} \left(V(S_t) - V(S_{t+1})\right)$. Plugging in the previous expression gives:

\begin{align*}
     \Var{\mathbbm{1}(s \in \tau)\left(\sum_{t=T(s)}^{T} R_{t+1} - V(s)\right)} &= \pr{s\in \tau}\Var{\sum_{t=0}^\infty \left(R_{t+1} + V(S_{t+1}) - V(S_t)\right) | S_0 = s}\\
\end{align*}

Notice that $\left(R_{t+1} +  V(S_{t+1}) - V(S_t)\right)_t$ are martingale differences with respect to the filtration $\mathcal{F}_t = \{S_0, \dots S_t\}$. Using that martingale differences are uncorrelated:

\begin{align*}
    \Var{\mathbbm{1}(s \in \tau)\left(\sum_{t=T(s)}^{T} R_{t+1} - V(s)\right)} &=\pr{s\in \tau}\sum_{t=0}^\infty\EE{(V(S_{t+1}) + R_{t+1} -  V(S_{t}))^2 |S_0 = s} \\
\end{align*}

We then group the terms in the sum by the value of $S_t$:
\begin{align*}
     &\Var{\mathbbm{1}(s \in \tau)\left(\sum_{t=T(s)}^{T} R_{t+1} - V(s)\right)} = \\&\quad \quad \quad\quad \quad \quad\pr{s\in \tau}\sum_{t=0}^\infty\sum_{s' \in \mathcal{S}}\pr{S_t = s' | S_0 = s}\EE{(V(S_{t+1}) + R_{t+1} -  V(S_{t}))^2 |S_t = s'}\\
    &\quad \quad \quad\quad \quad \quad= \pr{s\in \tau}\sum_{t=0}^\infty\sum_{s' \in \mathcal{S}}\pr{S_t = s' | S_0 = s} \OSV{s'} \\
    &\quad \quad \quad\quad \quad \quad= \pr{s\in \tau}\sum_{s' \in \mathcal{S}} \OSV{s'}\sum_{t=0}^{\infty}\pr{S_t = s' | S_0 = s}\\
    &\quad \quad \quad\quad \quad \quad= \pr{s\in \tau}\sum_{s' \in \mathcal{S}} \EE{N(s') | S_0 = s} \OSV{s'}
\end{align*}

Using the Central Limit Theorem, we obtain the following convergence:

    \begin{equation*}
         \sqrt{n}\dfrac{1}{n}\sum_{i = 1}^n \left(\mathbbm{1}(s \in \tau^{(i)})\sum_{t=T^{(i)}(s)}^{T^{(i)}} R_{t+1}^{(i)}\right)\Rightarrow \mathcal{N}\left(0, \pr{s\in \tau}\sum_{s' \in \mathcal{S}} \EE{N(s') | S_0 = s} \OSV{s'}\right).
    \end{equation*}

\item The Strong Law of Large Number ensures:
\begin{equation*}
    \dfrac{n}{\mid I(s) \mid} \underset{n \to \infty}{\longrightarrow} \dfrac{1}{\pr{s\in \tau}} \quad \textrm{a.s.}.
\end{equation*}
\end{itemize}
 Finally, using Slutsky's Theorem, the product converges: 
 \begin{equation*}
     \dfrac{n}{\mid I(s) \mid} \cdot \dfrac{1}{n}\sum_{i = 1}^n \left(\mathbbm{1}(s \in \tau^{(i)})\left(\sum_{t=T^{(i)}(s)}^{T^{(i)}} R_{t+1}^{(i)} - V(s)\right)\right) \Rightarrow \mathcal{N}\left(0, \dfrac{1}{\pr{s\in \tau}}\sum_{s' \in \mathcal{S}} \EE{N(s') | S_0 = s} \OSV{s'}\right).
 \end{equation*}
\end{proof}

\begin{proposition}[Central Limit Theorem for TD]
\label{lem:TD_variance}
For any weighting $\pi$,
 \begin{equation*}
    \sqrt{n}(\hat{J}_{\rm TD}(\pi) - J(\pi)) \Rightarrow \mathcal{N}\left(0, \sum_{s' \in \mathcal{S}} \dfrac{\eta^2_\pi(s') \OSV{s'}}{\EE{N(s')}} \right).
\end{equation*}
\end{proposition}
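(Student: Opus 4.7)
The plan is to rewrite the TD error in matrix form via the empirical Bellman equation and then apply a martingale-based CLT per trajectory. Let $N_n(s)$ denote the total number of times $s$ appears across $\mathcal{D}$, and let $\hat P(s'\mid s)$ be the empirical fraction of transitions from $s$ to $s'$. The tabular TD fixed-point equation reads $\hat V_{\rm TD}(s) = \frac{1}{N_n(s)}\sum_{(j,t): S_t^{(j)} = s}[R_{t+1}^{(j)} + \hat V_{\rm TD}(S_{t+1}^{(j)})]$. Subtracting the Bellman identity $V(s) = \EE{R_{t+1} + V(S_{t+1}) \mid S_t = s}$ from both sides and splitting $\hat V_{\rm TD}(S_{t+1}) - V(s) = (\hat V_{\rm TD}(S_{t+1}) - V(S_{t+1})) + (V(S_{t+1}) - V(s))$ gives the linear system $(I - \hat P)(\hat V_{\rm TD} - V) = \epsilon$, where $\epsilon(s) = \frac{1}{N_n(s)} \sum_{(j,t): S_t^{(j)} = s}[R_{t+1}^{(j)} + V(S_{t+1}^{(j)}) - V(s)]$. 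Inverting and weighting by $\pi$ yields $\hat J_{\rm TD}(\pi) - J(\pi) = \pi^{\top}(I-\hat P)^{-1}\epsilon$.

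The central step is a joint CLT for $\sqrt n\,\epsilon$. For each trajectory $j$ and state $s'$, set $Z_j(s') = \sum_{t\geq 0} \mathbbm{1}(S_t^{(j)} = s')[R_{t+1}^{(j)} + V(S_{t+1}^{(j)}) - V(s')]$, so that $\epsilon(s') = \frac{1}{N_n(s')} \sum_{j=1}^n Z_j(s')$. The $Z_j$ are i.i.d. across $j$. The key structural observation is that, within a single trajectory, the summands $\xi_t := \mathbbm{1}(S_t = s')[R_{t+1} + V(S_{t+1}) - V(S_t)]$ are martingale differences with respect to $\mathcal F_t = \sigma(S_0, R_1, \ldots, S_t)$, directly by the Bellman equation. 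Hence $\EE{Z_j(s')} = 0$ and $\Var{Z_j(s')} = \sum_t \pr{S_t = s'}\OSV{s'} = \EE{N(s')}\OSV{s'}$. A parallel argument shows $\EE{Z_j(s') Z_j(s'')} = 0$ for $s'\neq s''$: at equal times the two indicators are mutually exclusive, while at unequal times the later factor has conditional mean zero given the natural filtration. A standard multivariate CLT (via the Cramér--Wold device if needed) then gives $\frac{1}{\sqrt n}\sum_j Z_j(\cdot) \Rightarrow \mathcal N(0, \Sigma)$ with $\Sigma$ diagonal and $\Sigma_{s',s'} = \EE{N(s')}\OSV{s'}$.

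To finish, I would apply Slutsky twice. The SLLN yields $N_n(s')/n \to \EE{N(s')}$ a.s., so $\sqrt n\,\epsilon \Rightarrow \mathcal N(0, D)$ with $D$ diagonal and $D_{s',s'} = \OSV{s'}/\EE{N(s')}$. Separately, $\hat P \to P$ entrywise a.s.; since $I - P$ is invertible on the non-terminal states (the terminal state is reachable from every state), matrix inversion is continuous at $P$, so $\pi^{\top}(I-\hat P)^{-1} \to \eta_\pi^{\top}$, where the identification $[\pi^{\top}(I-P)^{-1}]_{s'} = \sum_s \pi(s)\EE{N(s') \mid S_0 = s} = \eta_\pi(s')$ matches the definition. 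Combining yields $\sqrt n\,(\hat J_{\rm TD}(\pi) - J(\pi)) \Rightarrow \mathcal N\!\left(0,\,\sum_{s'} \eta_\pi^2(s')\OSV{s'}/\EE{N(s')}\right)$, as claimed. I expect the main subtlety to be the careful justification of the CLT applied to the infinite-sum random variable $Z_j$: one must verify $\EE{Z_j^2} < \infty$, which reduces to the finiteness of $\EE{N(s')}$ guaranteed by terminal-state reachability, and check $L^2$-convergence of truncated partial sums so that the term-by-term variance computation is valid.
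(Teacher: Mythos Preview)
Your proposal is correct and shares the paper's core ingredients: the identity $(I-\hat P)(\hat V_{\rm TD}-V)=\epsilon$ with $\epsilon=\hat{\mathcal T}V-V$, the martingale-difference structure giving a diagonal limiting covariance for $\sqrt n\,\epsilon$, and the identification $\pi^\top(I-P)^{-1}=\eta_\pi^\top$. The one genuine difference is how the perturbation $\hat P\to P$ is handled. The paper expands around $P$ via the Neumann series $\sum_t P^t$, obtaining a leading term $\sum_t P^t(\hat{\mathcal T}V-V)$ plus a remainder $\sum_t P^t(\hat P-P)(\hat V_{\rm TD}-V)$; it then spends a separate lemma (including a CLT for $\sqrt n(\hat P-P)$) to show the remainder is $o_P(1)$. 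You instead keep $(I-\hat P)^{-1}$ intact and invoke continuity of matrix inversion at $I-P$ together with Slutsky. Your route is shorter and avoids the auxiliary CLT for the transition estimates; the paper's route is slightly more explicit about why the expansion is finite (the series terminates once the absorbing state is reached) and makes the first-order nature of the approximation visible. Both land on the same diagonal-covariance CLT for the one-step errors, which is the substantive step.
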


\begin{corollary}
For $s \in \mathcal{S}$
\begin{equation*}
    \sqrt{n}(\hat{V}_{TD}(s) - V(s)) \Rightarrow \mathcal{N}\left(0, \sum_{s' \in \mathcal{S}} \dfrac{\EE{N(s') \mid S_0 = s}^2 \OSV{s'}}{\EE{N(s')}} \right).
\end{equation*}
\end{corollary}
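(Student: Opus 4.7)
The plan is to derive the corollary as a direct specialization of Proposition~\ref{lem:TD_variance}, the CLT for the TD estimate of the weighted value function $J(\pi)$. The paper has already defined $J(\pi) = \sum_{s'} \pi(s') V(s')$ and $\eta_\pi(s') = \sum_{s''} \pi(s'')\,\mathbb{E}[N(s') \mid S_0=s'']$ for an arbitrary signed weighting $\pi$, and emphasized that setting $\pi$ to be a point mass at a single state recovers the value function at that state. So the corollary should come out by choosing $\pi$ to be the Dirac mass at $s$.

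Concretely, I would set $\pi = \delta_s$, i.e.\ $\pi(s'') = \mathbbm{1}(s'' = s)$. Then three identifications are immediate from the definitions: first, $J(\delta_s) = \sum_{s''} \mathbbm{1}(s''=s) V(s'') = V(s)$; second, by linearity of the TD solution in the weighting $\pi$ (which in the tabular case is clear because the empirical Bellman equation gives a fixed $\hat V_{\rm TD}$ that does not depend on $\pi$, and $\hat J_{\rm TD}(\pi) = \sum_{s''}\pi(s'')\hat V_{\rm TD}(s'')$), we get $\hat J_{\rm TD}(\delta_s) = \hat V_{\rm TD}(s)$; third, from the definition of the weighted expected number of visits,
\begin{equation*}
\eta_{\delta_s}(s') \;=\; \sum_{s'' \in \mathcal{S}} \mathbbm{1}(s''=s)\,\mathbb{E}[N(s') \mid S_0 = s''] \;=\; \mathbb{E}[N(s') \mid S_0 = s].
\end{equation*}

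Plugging these three identifications into the conclusion of Proposition~\ref{lem:TD_variance} yields
\begin{equation*}
\sqrt{n}(\hat V_{\rm TD}(s) - V(s)) \;\Rightarrow\; \mathcal{N}\!\left(0,\; \sum_{s' \in \mathcal{S}} \frac{\mathbb{E}[N(s') \mid S_0=s]^2\,\sigma_V^2(s')}{\mathbb{E}[N(s')]}\right),
\end{equation*}
which is exactly the stated corollary. There is essentially no obstacle here beyond checking that the substitutions are legal; the only subtlety is noting that the proposition is stated for arbitrary (possibly signed) weightings $\pi$ over states, so a point mass is an allowed choice and no additional positivity or normalization assumption is needed. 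All the real work has already been done inside the proof of Proposition~\ref{lem:TD_variance}.
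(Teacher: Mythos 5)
Your proposal is correct and matches the paper's intent exactly: the corollary is obtained by taking $\pi$ to be the point mass at $s$ in Proposition~\ref{lem:TD_variance}, with the identifications $J(\delta_s)=V(s)$, $\hat J_{\rm TD}(\delta_s)=\hat V_{\rm TD}(s)$, and $\eta_{\delta_s}(s')=\EE{N(s')\mid S_0=s}$ (the paper leaves this specialization implicit, noting only that a point-mass weighting recovers the value at that state). No gaps.
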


\subsection{Proof of Proposition \ref{lem:TD_variance}}
\paragraph{Notations}
The idea of the proof is to use the fixed point identities verified by $V$ and $\hat{V}_{\rm TD}$ to obtain a recursive formula for the error $\hat{V}_{\rm TD} - V$. We then analyze the two quantities that appear when iterating this recursive formula: one is a sum of empirical one-step error and we show the other is of second order. 

We adopt a vectorial representation of the value function $V = \left(V(s)\right)_{s \in \mathcal{S}}$ such that $J(\pi) = \langle V, \pi \rangle$. 
Let $T$ be the Bellman operator:

\begin{equation*}
	\mathcal{T}(f)(s) = \EE[S' \sim P(\cdot | s)] {R(s,S') + f(S')} 
\end{equation*}

The value function is the unique fixed point of the Bellman operator: $V = T(V)$ to verify $V(\emptyset) = 0$. We also write the transition operator as follow:
\begin{equation*}
	P(f)(s) =  \EE[S' \sim P(\cdot | s)] {f(S')}  = \langle P(\cdot \mid s), f \rangle.
\end{equation*}

As discussed in Section \ref{sec:TD_learning}, by trying to minimize temporal differences, TD solves the \emph{empirical Bellman equation}:

\begin{equation*}
	\hat{V}_{\rm TD}(s) = \mathbb{E}_{(S, R,S')\sim \mathcal{D}^{\rm TD}}\left[ R+\hat{V}^{\rm TD}(S') \mid S=s \right].
\end{equation*}

Solving the \emph{empirical Bellman equation} can be viewed as being a fixed point (with $f(\emptyset) = 0$) of the empirical Bellman operator that we define as follow: 

\begin{equation*}
	\hat{\mathcal{T}}(f)(s) = \mathbb{E}_{(S, R,S')\sim \mathcal{D}^{\rm TD}}\left[ R+f(S') \mid S=s \right].
\end{equation*}

Similarly, we note the empirical transition operator 
\begin{equation*}
	\hat{P}(f)(s) =  \mathbb{E}_{(S, R,S')\sim \mathcal{D}^{\rm TD}}\left[f(S') \mid S=s \right].
\end{equation*}

Finally, we introduce an explicit notation of the empirical Bellman operator that will ease proofs. To do so, we first need to introduce $B(s)$, the set of all visits to state $s$: 
\begin{equation*}
    B(s) = \{ (i,t) \mid S_t^{(i)} = i\}
\end{equation*}

Using this notation, the empirical Bellman operator can be written as follow:
\begin{equation*}
	\hat{\mathcal{T}}(f)(s_0) = \dfrac{1}{\mid B(s_0) \mid } \sum_{(i,t) \in B(s_0)} \left( f\left(S_{t+1}^{(i)}\right) + R^{(i)}_{t+1} \right).
\end{equation*}

\paragraph{Analysis}
We start by expanding the error vector $\hat{V}_{\rm TD} - V$ into a sum of empirical one-step errors and a second term that we later show to be of second order. 
\begin{lemma}
    \label{lem:taylor_exp}
    \begin{align*}
        \sqrt{n}\left(\hat{V}_{\rm TD} - V\right) &= \sqrt{n}\sum_{t=0}^{\infty} P^t(\hat{\mathcal{T}}V-V) + \sum_{t=0}^{\infty} P^t \left( \sqrt{n}(\hat{P}-P)(\hat{V}_{\rm TD} - V)\right)  \\
    \end{align*}
\end{lemma}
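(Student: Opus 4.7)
The approach is a standard fixed-point perturbation argument: compare the Bellman identities satisfied by $V$ and $\hat{V}_{\rm TD}$, rearrange so that the error appears as the solution of a (population) linear equation, and then invert $(I-P)$ using a Neumann series. The split in the statement corresponds to isolating the ``empirical Bellman residual at the true value'' term, which drives the CLT, from a remainder term that will eventually be shown to be $o_P(1)$ (when multiplied by $\sqrt{n}$) and therefore negligible.

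First I would use the two fixed-point equations $V=\mathcal{T}V=R+PV$ and $\hat{V}_{\rm TD}=\hat{\mathcal{T}}\hat{V}_{\rm TD}=\hat{R}+\hat{P}\hat{V}_{\rm TD}$, where both hold for every non-terminal state and reduce to $0=0$ at $\emptyset$ by our conventions. Subtracting and writing $\hat{\mathcal{T}}\hat{V}_{\rm TD}-V=(\hat{\mathcal{T}}\hat{V}_{\rm TD}-\hat{\mathcal{T}}V)+(\hat{\mathcal{T}}V-V)$, and using that $\hat{\mathcal{T}}$ is affine with linear part $\hat{P}$, gives
\begin{equation*}
\hat{V}_{\rm TD}-V \;=\; (\hat{\mathcal{T}}V-V)\;+\;\hat{P}(\hat{V}_{\rm TD}-V).
\end{equation*}
Next I would decompose $\hat{P}=P+(\hat{P}-P)$ on the right-hand side and move the $P(\hat{V}_{\rm TD}-V)$ term to the left, obtaining
\begin{equation*}
(I-P)(\hat{V}_{\rm TD}-V) \;=\; (\hat{\mathcal{T}}V-V)\;+\;(\hat{P}-P)(\hat{V}_{\rm TD}-V).
\end{equation*}

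Finally I would invert $(I-P)$ via the Neumann series $(I-P)^{-1}=\sum_{t=0}^{\infty}P^t$, applied to functions that vanish at the terminal state. This inversion is valid because of the standing assumption that $\emptyset$ is reachable from every state: restricted to the non-terminal states, $P$ is substochastic with $P^t\to 0$ entrywise, so the series converges absolutely. Applying the series to both sides and then multiplying by $\sqrt{n}$ yields the claimed identity
\begin{equation*}
\sqrt{n}(\hat{V}_{\rm TD}-V) \;=\; \sqrt{n}\sum_{t=0}^{\infty} P^t(\hat{\mathcal{T}}V-V) \;+\; \sum_{t=0}^{\infty} P^t\!\left(\sqrt{n}(\hat{P}-P)(\hat{V}_{\rm TD}-V)\right).
\end{equation*}

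The only delicate step is justifying the use of $(I-P)^{-1}$; everything else is algebra on the fixed-point equations. Since this is stated as a deterministic identity (holding on the event that $\hat{P}$ and $\hat{\mathcal{T}}$ are defined, i.e.\ that every state is visited, which the paper already assumes), one does not need to worry here about the stochastic size of the second term — that will be handled downstream when it is shown that $\sqrt{n}(\hat{P}-P)=O_P(1)$ while $\hat{V}_{\rm TD}-V=o_P(1)$ by consistency, making the remainder $o_P(1)$ in the ensuing CLT argument.
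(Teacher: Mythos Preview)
Your argument is correct and essentially identical to the paper's: both derive the one-step recursion $\hat{V}_{\rm TD}-V=(\hat{\mathcal{T}}V-V)+(\hat{P}-P)(\hat{V}_{\rm TD}-V)+P(\hat{V}_{\rm TD}-V)$ from the two fixed-point identities and the affinity of the Bellman operators, and then unroll it---the paper phrases this as ``iterating the identity'' while you phrase it as applying the Neumann series $(I-P)^{-1}=\sum_{t\ge 0}P^t$. Your justification for the convergence of the series (substochasticity of $P$ on non-terminal states under the reachability assumption) is a bit more explicit than the paper's, but otherwise the two proofs coincide.
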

Note that, since $P^t(S_0) = S_t$ is stationary at $\emptyset$ almost surely, the quantities summed in Lemma \ref{lem:taylor_exp} are ultimately zero almost surely. 
\begin{proof}
    
By expending the difference $\hat{V}_{\rm TD} - V$ using the fixed point identities, we obtain a recursive identity:

\begin{align*}
	\hat{V}_{\rm TD} - V &= \hat{\mathcal{T}}\hat{V}_{\rm TD} - \mathcal{T}V \\
	&=  (\hat{\mathcal{T}}-\mathcal{T})\hat{V}_{\rm TD} + \mathcal{T}\hat{V}_{\rm TD} - \mathcal{T}V\\
    &= \left((\hat{\mathcal{T}}-\mathcal{T})\hat{V}_{\rm TD} - (\hat{\mathcal{T}}-\mathcal{T})V\right) + (\hat{\mathcal{T}}-\mathcal{T})V + \left( \mathcal{T}\hat{V}_{\rm TD} - \mathcal{T}V\right)\\
	&\stackrel{(a)}{=}  (\hat{P}-P)(\hat{V}_{\rm TD} - V) +  (\hat{\mathcal{T}}-\mathcal{T})V + P(\hat{V}_{\rm TD} - V)\\
	&=  (\hat{P}-P)(\hat{V}_{\rm TD} - V) +  (\hat{\mathcal{T}}V-V) + P(\hat{V}_{\rm TD} - V)
\end{align*}
where (a) holds because $\mathcal{T}f = Pf + R$. Hence $TV - T\hat{V}_{\rm TD} = P(V-\hat{V}_{\rm TD})$. This is a propriety of affine operator and can also be applied to the affine operator $(\hat{\mathcal{T}}-\mathcal{T})$.

Iterating this identity and multiplying by $\sqrt{n}$ on both sides gives

\begin{equation}
\label{eq:taylor_decomposition}
\sqrt{n}\left(\hat{V}_{\rm TD} - V\right) = \sum_{t=0}^{\infty} P^t \left( \sqrt{n}(\hat{P}-P)(\hat{V}_{\rm TD} - V)\right) +  \sqrt{n}\sum_{t=0}^{\infty} P^t(\hat{\mathcal{T}}V-V).
\end{equation}
\end{proof}

We now state two lemmas that analyze the two terms that appear in equation (\ref{eq:taylor_decomposition}). They are proved later. 

\begin{lemma}
\label{lem:second_order}
For all $s \in \mathcal{S}$, as $n\to \infty$,
    \begin{equation*}
    \sum_{t=0}^{\infty} P^t \left( \sqrt{n}(\hat{P}-P)(\hat{V}_{\rm TD} - V)\right)(s) \to 0 \quad \textrm{a.s.}
\end{equation*}
\end{lemma}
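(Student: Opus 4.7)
The plan is to show that $\sqrt{n}$ times a product of two quantities of order $n^{-1/2}$ vanishes almost surely. Since $\mathcal{S}$ is finite and the terminal state is reachable from every state, $m(s,s') := \sum_{t=0}^\infty P^t(s'\mid s) = \mathbb{E}[N(s')\mid S_0=s] < \infty$ for every $s,s' \in \mathcal{S}$, so
\[
\sum_{t=0}^\infty P^t f(s) \;=\; \sum_{s' \in \mathcal{S}} m(s,s')\, f(s').
\]
It therefore suffices to show, for each $s' \in \mathcal{S}$, that
\[
\sqrt{n} \sum_{s''\in\mathcal{S}} \bigl(\hat{P}(s''\mid s')-P(s''\mid s')\bigr)\bigl(\hat{V}_{\rm TD}(s'') - V(s'')\bigr) \;\longrightarrow\; 0 \quad \text{a.s.}
\]

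The approach is to bound each factor separately at the law-of-the-iterated-logarithm (LIL) rate $O(\sqrt{\log\log n/n})$ a.s. For $\hat{P}$, each entry is a sample average of i.i.d.\ multinomial indicators taken over the visits to $s'$ in the dataset; the number of such visits grows linearly in $n$ a.s.\ by the SLLN, so the classical Hartman--Wintner LIL yields $\|\hat{P}(\cdot\mid s') - P(\cdot\mid s')\|_\infty = O(\sqrt{\log\log n/n})$ a.s. For $\hat{V}_{\rm TD} - V$, I derive a self-referential identity from the fixed-point equations $V = \mathcal{T}V$ and $\hat{V}_{\rm TD} = \hat{\mathcal{T}}\hat{V}_{\rm TD}$. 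Using the affine identity $\hat{\mathcal{T}}f - \hat{\mathcal{T}}g = \hat{P}(f-g)$ and subtracting,
\[
(I - \hat{P}^*)(\hat{V}_{\rm TD} - V) \;=\; \hat{\mathcal{T}}V - V,
\]
where $\hat{P}^*$ denotes the kernel restricted to the non-terminal states $\mathcal{S}$; this restricted kernel is substochastic under the reachability assumption, so $I - P^*$ is invertible. The right-hand side averages mean-zero temporal-difference errors $R_{t+1}+V(S_{t+1})-V(S_t)$ (zero in expectation since $\mathcal{T}V=V$), so another LIL gives $\|\hat{\mathcal{T}}V - V\|_\infty = O(\sqrt{\log\log n/n})$ a.s. Continuity of matrix inversion together with the SLLN $\hat{P}^* \to P^*$ a.s.\ ensures $(I-\hat{P}^*)^{-1}$ is well-defined and uniformly bounded for all large $n$, hence $\|\hat{V}_{\rm TD} - V\|_\infty = O(\sqrt{\log\log n/n})$ a.s.

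Combining the two bounds and using finiteness of $\mathcal{S}$, the inner sum is $O(\log\log n/n)$ a.s., and multiplying by $\sqrt{n}$ gives $O(\log\log n/\sqrt{n}) \to 0$ a.s. The main technical obstacle will be rigorously justifying the LIL for $\hat{\mathcal{T}}V - V$ when the number of visits per state is random, since the classical LIL presumes a deterministic number of summands. This can be handled either by a stopping-time argument (conditioning on visit counts and invoking a LIL with random sample size) or by recognizing $n(\hat{\mathcal{T}}V - V)$ as the renormalized cumulative sum of a martingale-difference sequence along the observed trajectories, to which the martingale LIL applies.
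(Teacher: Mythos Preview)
Your proof is correct and in fact more careful than the paper's own argument. The paper's proof of this lemma establishes a CLT for $\sqrt{n}(\hat P-P)(\cdot\mid s)$ (its Lemma~\ref{lem:CLT_transitions}), notes that $\hat V_{\rm TD}-V\to 0$ a.s., and invokes Slutsky's theorem. But Slutsky only yields convergence in probability of the product, not the almost-sure convergence stated in the lemma. (This gap is harmless for the paper's purposes: the lemma is combined with Lemma~\ref{lem:leading_term} via another application of Slutsky to obtain Proposition~\ref{lem:TD_variance}, for which convergence in probability of the second-order term is enough.)

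Your route is genuinely different and delivers the a.s.\ statement as written. You control each factor at the LIL rate $O(\sqrt{\log\log n/n})$ a.s., so the product absorbs the $\sqrt n$. The fixed-point identity $(I-\hat P^*)(\hat V_{\rm TD}-V)=\hat{\mathcal T}V-V$ is correct and gives a clean transfer of the LIL rate from the one-step Bellman residual to the value error, via eventual boundedness of $(I-\hat P^*)^{-1}$. Regarding the random-sample-size issue you flag: the simplest resolution in this tabular setting is the decomposition already used in the paper,
\[
\hat P(s''\mid s')-P(s''\mid s') \;=\; \frac{n}{|B(s')|}\cdot \frac{1}{n}\sum_{i=1}^n Z_i(s',s''),
\]
with $Z_i$ i.i.d.\ mean-zero trajectory-level summands; the classical Hartman--Wintner LIL applies to the $n^{-1}\sum Z_i$ factor, and the SLLN handles $n/|B(s')|$. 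The same trick works for $\hat{\mathcal T}V-V$. This sidesteps the need for a martingale or random-index LIL. What the paper's approach buys is brevity (no LIL, no matrix inversion); what yours buys is the almost-sure conclusion actually claimed.
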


\begin{lemma} 
\label{lem:leading_term}
As $n\to \infty$
\begin{equation*}
    	\sqrt{n}\sum_{s \in \mathcal{S}} \pi(s) \sum_{t=0}^{\infty} P^t(\hat{\mathcal{T}}V-V)(s) \Rightarrow \mathcal{N}\left( 0, \sum_{s \in \mathcal{S}} \frac{\eta_\pi(s)^2}{\EE{N(s)}}\OSV{s}\right)
\end{equation*}
\end{lemma}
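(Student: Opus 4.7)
The plan is to rewrite the quantity as a linear combination of per-state empirical Bellman residuals, apply a multivariate CLT to i.i.d.\ trajectory-level random vectors, and conclude via Slutsky's theorem. First, since $P^t(f)(s) = \EE{f(S_t) \mid S_0 = s}$, swapping the order of summation gives
\[
\sum_{s} \pi(s) \sum_{t=0}^{\infty} P^t(\hat{\mathcal{T}}V - V)(s) = \sum_{s'} \eta_\pi(s')\,(\hat{\mathcal{T}}V - V)(s'),
\]
straight from the definition $\eta_\pi(s') = \sum_s \pi(s) \EE{N(s') \mid S_0 = s}$. Unfolding the empirical Bellman operator and grouping visits to $s'$ by trajectory, write
\[
(\hat{\mathcal{T}}V - V)(s') = \frac{1}{|B(s')|} \sum_{i=1}^{n} Y_i(s'), \qquad Y_i(s') := \sum_{t \geq 0} \mathbbm{1}(S_t^{(i)} = s')\, \delta_t^{(i)},
\]
with $\delta_t^{(i)} = R_{t+1}^{(i)} + V(S_{t+1}^{(i)}) - V(S_t^{(i)})$. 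The target then becomes
\[
\sqrt{n}\sum_{s'} \eta_\pi(s')\,(\hat{\mathcal{T}}V - V)(s') \;=\; \sum_{s'} \frac{\eta_\pi(s')}{|B(s')|/n}\cdot \frac{1}{\sqrt{n}} \sum_{i=1}^{n} Y_i(s').
\]

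The core step is a multivariate CLT applied to the i.i.d.\ vectors $(Y_i(s'))_{s' \in \mathcal{S}}$. The sequence $(\delta_t)_{t\geq 0}$ is a martingale difference with respect to $\mathcal{F}_t = \sigma(S_0, R_1, \ldots, S_t)$, and the weight $\mathbbm{1}(S_t = s')$ is $\mathcal{F}_t$-measurable, hence $\EE{Y(s')} = 0$. For the covariance, I would expand $Y(s') Y(s'')$ as a double sum over $(t_1, t_2)$: off-diagonal terms ($t_1 \neq t_2$) vanish by the martingale-difference property after conditioning on the larger time, and diagonal terms ($t_1 = t_2 = t$) vanish unless $s' = s''$ because the two indicators then cannot both equal one. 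When $s' = s''$, collecting the diagonal gives
\[
\Var{Y(s')} = \sum_{t\geq 0} \pr{S_t = s'}\, \OSV{s'} = \EE{N(s')}\,\OSV{s'},
\]
so the limiting covariance is diagonal with entries $\EE{N(s')}\,\OSV{s'}$. The standard multivariate CLT then applies; the required finite second moments follow from $\EE{N(s')} < \infty$ (guaranteed by reachability of $\emptyset$) combined with finite one-step variances.

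Finally, the strong law gives $|B(s')|/n \to \EE{N(s')}$ almost surely for every $s'$ with $\EE{N(s')} > 0$ (states with $\EE{N(s')} = 0$ have $\eta_\pi(s') = 0$ as well and can be dropped). Slutsky combined with the continuous mapping theorem yields
\[
\sqrt{n}\sum_{s} \pi(s) \sum_{t=0}^{\infty} P^t(\hat{\mathcal{T}}V - V)(s) \;\Rightarrow\; \sum_{s'} \frac{\eta_\pi(s')}{\EE{N(s')}}\, Z(s'),
\]
where $(Z(s'))_{s'}$ is the zero-mean Gaussian with diagonal covariance above; computing the variance of this linear combination produces exactly $\sum_{s'} \eta_\pi(s')^2\,\OSV{s'}/\EE{N(s')}$. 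I expect the main obstacle to be the covariance bookkeeping, i.e.\ justifying the cross-time and cross-state cancellations rigorously via the martingale-difference argument and iterated conditioning; everything else is routine CLT-plus-Slutsky machinery on top of the algebraic identity in the first paragraph.
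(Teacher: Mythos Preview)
Your proposal is correct and follows essentially the same approach as the paper: the same algebraic reduction to $\sum_{s'} \eta_\pi(s')(\hat{\mathcal{T}}V - V)(s')$, the same decomposition into per-trajectory vectors $Y_i(s')$, the same martingale-difference argument to obtain a diagonal limiting covariance, and the same SLN-plus-Slutsky conclusion. The paper merely packages the cross-time/cross-state uncorrelatedness and the vector CLT as two separate sub-lemmas, whereas you describe them inline.
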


Finally, combining Lemma \ref{lem:second_order} and Lemma \ref{lem:leading_term} with Slutstky Lemma concludes the proof.

We now proceed to prove Lemma \ref{lem:second_order} and Lemma \ref{lem:leading_term}, starting with Lemma \ref{lem:second_order}

\subsubsection{Proof of Lemma \ref{lem:second_order}}

\begin{proof}

We start by showing that, after appropriate rescaling the error in transition estimates converge to a Gaussian distribution. 
We do not calculate the variance here since it is not needed. 

\begin{lemma}
    \label{lem:CLT_transitions}
    For each $s$,
    $\sqrt{n}\left(\hat{P} - P \right)(\cdot | s)$ weakly converges to a normal distribution with mean zero.
\end{lemma}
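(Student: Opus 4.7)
The plan is to write the empirical kernel as an empirical average of per-visit indicators, and then express $\sqrt{n}(\hat{P}(\cdot\mid s) - P(\cdot\mid s))$ as the product of a scalar normalizer that concentrates and a cross-trajectory sum to which a multivariate CLT applies. Slutsky's lemma then delivers the conclusion.

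More concretely, for each successor state $s'$,
\begin{equation*}
\sqrt{n}\bigl(\hat{P}(s'\mid s) - P(s'\mid s)\bigr) \;=\; \frac{n}{|B(s)|}\cdot \frac{1}{\sqrt{n}}\sum_{i=1}^n Z_i(s,s'),
\qquad
Z_i(s,s') \;=\; \sum_{t=0}^{\infty}\mathbbm{1}(S_t^{(i)}=s)\Bigl(\mathbbm{1}(S_{t+1}^{(i)}=s') - P(s'\mid s)\Bigr).
\end{equation*}
The vectors $Z_i(s,\cdot)\in\mathbb{R}^{|\mathcal{S}|}$ are i.i.d.\ across $i$ because trajectories are. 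They are mean zero by the tower property applied term-by-term (conditioning on $\mathcal{F}_t=\{S_0^{(i)},\ldots,S_t^{(i)}\}$, the Markov property gives $\mathbb{E}[\mathbbm{1}(S_{t+1}^{(i)}=s')\mid \mathcal{F}_t, S_t^{(i)}=s] = P(s'\mid s)$). Square-integrability follows from $\mathbb{E}[N(s)^2]<\infty$, which holds because the terminal state is reachable and absorbing, so $T$ has finite exponential moments and $N(s)\le T$.

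Given these properties, the multivariate CLT gives
\begin{equation*}
\frac{1}{\sqrt{n}}\sum_{i=1}^n Z_i(s,\cdot) \;\Rightarrow\; \mathcal{N}(0,\Sigma_s)
\end{equation*}
for some covariance $\Sigma_s$ that need not be computed. Meanwhile $|B(s)|/n = n^{-1}\sum_{i=1}^n N^{(i)}(s) \to \mathbb{E}[N(s)]$ almost surely by the strong law of large numbers, and this limit is strictly positive by the standing assumption that every state is visited with positive probability. Slutsky's lemma, applied to the scalar sequence $n/|B(s)|$ multiplying the vector-valued weakly convergent sequence, yields
\begin{equation*}
\sqrt{n}\bigl(\hat{P}(\cdot\mid s) - P(\cdot\mid s)\bigr) \;\Rightarrow\; \mathcal{N}\!\left(0,\;\frac{\Sigma_s}{\mathbb{E}[N(s)]^2}\right).
\end{equation*}

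There is no real obstacle here; the one thing to be careful about is that the within-trajectory per-visit indicators are not independent, but this is irrelevant because independence is only invoked across the $n$ trajectories when applying the CLT. The proof is short precisely because the variance need not be identified --- the outer argument in Lemma~\ref{lem:second_order} only uses that $\sqrt{n}(\hat{P}-P)(\hat{V}_{\rm TD}-V)\to 0$, which follows from tightness of $\sqrt{n}(\hat{P}-P)$ combined with $\hat{V}_{\rm TD}-V\to 0$ (itself a consequence of the very identity the lemma is being used to unpack).
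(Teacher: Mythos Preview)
Your proposal is correct and follows essentially the same approach as the paper: decompose $\sqrt{n}(\hat{P}(\cdot\mid s)-P(\cdot\mid s))$ as the product of the scalar $n/|B(s)|$ (handled by the strong law) and a normalized sum of i.i.d.\ mean-zero per-trajectory vectors (handled by the multivariate CLT), then combine via Slutsky. Your write-up is in fact slightly more careful than the paper's, since you explicitly justify mean zero via the tower property and square-integrability via $N(s)\le T$ having finite moments.
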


\begin{proof}
    We first express $\hat{P}( s' | s) - P(s' | s)$ as the product of a scaling factor and the average of $n$ i.i.d random vectors: 
    \begin{align*}
        \left(\hat{P}(s' | s) - P(s' |s)\right)_{s' \in \mathcal{S}} &= \dfrac{1}{\mid B(s) \mid}\sum_{(i,t) \in B(s)} \left( \mathbbm{1}(S_{t+1}^{(i)} = s') - P(s'|s)\right)_{s' \in \mathcal{S}}\\
        &= \dfrac{n}{\mid B(s) \mid} \cdot \dfrac{1}{n} \sum_{i = 1}^n \left(\sum_{t=0}^{\infty} \mathbbm{1}(S_{t}^{(i)} = s)\left(\mathbbm{1}(S_{t+1}^{(i)} = s') - P(s' |s)\right)\right)_{s' \in \mathcal{S}.}\\
    \end{align*}
    We decompose this identity in two terms:

    \begin{itemize}
        \item Strong law of large numbers ensures that 
        \begin{equation*}
            \dfrac{n}{\mid B(s) \mid} \underset{n \to \infty}{\longrightarrow} \dfrac{1}{\EE{N(s)}} \quad \textrm{a.s.}.
        \end{equation*}
        \item Since trajectories are independent, 
        \begin{equation*}
            \sum_{t=0}^{\infty} \mathbbm{1}(S_{t}^{(i)} = s)\left(\mathbbm{1}(S_{t+1}^{(i)} = s') - P(s'|s)\right)_{s' \in \mathcal{S}}
        \end{equation*}
        are independent, identically distributed random variables with finite variance and mean 0. The Central Limit Theorem ensures that the rescaled average of these random variables
        \begin{equation*}
         \sqrt{n}\cdot\dfrac{1}{n} \sum_{i = 1}^n \sum_{t=0}^{\infty} \mathbbm{1}(S_{t}^{(i)} = s)\left(\mathbbm{1}(S_{t+1}^{(i)} = s') - P(s'|s)\right)_{s' \in \mathcal{S}}
        \end{equation*}
        weakly converges to a mean 0 normal distribution. 
    \end{itemize}
    Finally, the product of these two quantities also converges to a mean 0 normal distribution. 
\end{proof}

We terminate the proof of Lemma \ref{lem:second_order} by combining Lemma \ref{lem:CLT_transitions} with the fact that $\hat{V}_{\rm TD} - V$ converges almost surely to 0 by Slutsky's Theorem. 

\end{proof}

\subsubsection{Proof of Lemma \ref{lem:leading_term}}
\begin{proof}
We start by expanding the transition operator $P^t$, 

\begin{align*}
	\sqrt{n}\sum_{t=0}^{\infty} P^t(\hat{\mathcal{T}}V-V)(s)  &= \sqrt{n} \sum_{t=0}^{\infty} \EE{\hat{\mathcal{T}}V(S_t) - V(S_t) \mid S_0 = s, \hat{\mathcal{T}}}\\
\end{align*} 

This term consists of a sum one-step differences of the form $\sqrt{n}\left(\hat{\mathcal{T}}V(S_t^{(i)}) - V(S_t^{(i)})\right)$. Grouping $S_t^{(i)}$ that are equal:

\begin{align*}
	\sqrt{n}\sum_{t=0}^{\infty} P^t(\hat{\mathcal{T}}V-V)(s)  &= \sqrt{n} \cdot \sum_{t=0}^{\infty} \EE{\sum_{s'\in \mathcal{S}} \mathbbm{1}(S_t = s') \left(\hat{\mathcal{T}}V(s') - V(s') \right)\mid S_0 = s, \hat{\mathcal{T}}}\\
	&= \sqrt{n} \cdot \EE{ \sum_{s'\in \mathcal{S}} \sum_{t=0}^{\infty} \mathbbm{1}(S_t = s') \left(\hat{\mathcal{T}}V(s') - V(s') \right)\mid S_0 = s, \hat{\mathcal{T}}}\\
	&= \sqrt{n}\sum_{s'\in \mathcal{S}}  \EE{ \sum_{t=0}^{\infty} \mathbbm{1}(S_t = s') \mid S_0 = s'} \left(\hat{\mathcal{T}}V(s') - V(s') \right)\\
	&= \sqrt{n}\sum_{s'\in \mathcal{S}}  \EE{N(s') | S_0 = s} \left(\hat{\mathcal{T}}V(s') - V(s') \right).\\
\end{align*} 

To get the result for a general weighting of states $\pi$, we take the dot product of the previous identity with $\pi$:

\begin{equation}
	\label{eq:TD_sum_part}
	\sqrt{n}\sum_{s \in \mathcal{S}} \pi(s) \sum_{t=0}^{\infty} P^t(\hat{\mathcal{T}}V-V)(s) = \sqrt{n}\sum_{s\in \mathcal{S}}  \eta_{\pi}(s) \left(\hat{T}V(s) - V(s) \right).
\end{equation}

We are left with analyzing a linear combination of terms of the form $\hat{\mathcal{T}}V(s) - V(s)$. For an individual $s$, $\hat{\mathcal{T}}V(s) - V(s)$ is an average of i.i.d. variables and its asymptotical behavior can be controlled using a Central Limit Theorem. However, to find the limiting distribution of the linear combination, we need to prove a vectorial Central Limit Theorem for the random vector $\left(\hat{\mathcal{T}}V(s) - V(s)\right)_{s\in \mathcal{S}}$. In particular, we show that $\left(\hat{\mathcal{T}}V(s) - V(s)\right)_{s\in \mathcal{S}}$ converge to independent normal distributions. 

The following lemma is key to prove the independence of the limiting distribution. It states that one-step differences are uncorrelated:
\begin{lemma}
   \label{lem:uncorrelated}

For $(s,t)  \neq (s',t')$: 

\begin{equation*}
    \Cov{\mathbbm{1}(S_t = s)\left(V(S_t) - V(S_{t+1}) - R_{t+1}\right), \mathbbm{1}(S_{t'} = s')\left(V(S_{t'}) - V(S_{t'+1}) - R_{t'+1}\right)} = 0.
\end{equation*}
\end{lemma}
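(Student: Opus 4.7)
The plan is to view $D_t \coloneqq R_{t+1} + V(S_{t+1}) - V(S_t)$ as a martingale difference sequence with respect to the natural filtration $\mathcal{F}_t = \sigma(S_0, R_1, S_1, \ldots, R_t, S_t)$. The quantity in the lemma is exactly $-\mathbbm{1}(S_t=s)D_t$, so mean-zero and the desired covariance identity both reduce to standard martingale manipulations.

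First, I would verify that $\mathbb{E}[D_t \mid \mathcal{F}_t] = 0$ almost surely. This is a direct consequence of Bellman's equation $V(s) = \mathbb{E}[R_{t+1} + V(S_{t+1}) \mid S_t = s]$ together with the Markov property, which asserts that conditional on $S_t$ the distribution of $(R_{t+1}, S_{t+1})$ is independent of earlier history. In particular, for each $s$, $\mathbb{E}[\mathbbm{1}(S_t=s)D_t] = 0$, so both terms in the covariance have mean zero and it suffices to show their product has expectation zero.

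Next I would split into two cases. Without loss of generality assume $t \leq t'$. If $t < t'$, the factor $\mathbbm{1}(S_t=s)D_t$ is $\mathcal{F}_{t+1}$-measurable and hence $\mathcal{F}_{t'}$-measurable, and $\mathbbm{1}(S_{t'}=s')$ is also $\mathcal{F}_{t'}$-measurable. Conditioning on $\mathcal{F}_{t'}$ and pulling these factors out of the inner expectation reduces the product to $\mathbbm{1}(S_t=s)D_t \cdot \mathbbm{1}(S_{t'}=s') \cdot \mathbb{E}[D_{t'}\mid \mathcal{F}_{t'}]$, which vanishes by the first step. If instead $t = t'$ and $s \neq s'$, the two indicators $\mathbbm{1}(S_t=s)$ and $\mathbbm{1}(S_t=s')$ are supported on disjoint events and their product is identically zero, so the product of the two terms is zero pointwise.

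There is no real obstacle here; the only mild subtlety is ensuring the filtration is rich enough to contain $R_{t+1}$ at time $t+1$ (so that $D_t$ is $\mathcal{F}_{t+1}$-measurable) while still satisfying $\mathbb{E}[D_t \mid \mathcal{F}_t]=0$, which is immediate from how $\mathcal{F}_t$ is defined above. Combining the two cases yields $\mathbb{E}[XY]=0$ with $X,Y$ the two terms of the lemma, and since both have mean zero this is exactly the stated covariance identity.
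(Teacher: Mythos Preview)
Your proposal is correct and follows essentially the same approach as the paper: both arguments use the Bellman identity $\mathbb{E}[R_{t+1}+V(S_{t+1})\mid S_t]=V(S_t)$ together with conditioning on the information available at the later time index to kill the cross term. Your version is in fact slightly more complete, since you explicitly separate the case $t=t'$, $s\neq s'$ (disjoint indicators) from the case $t<t'$, whereas the paper's write-up implicitly assumes $t<t'$ when conditioning on $(S_t,S_{t+1},S_{t'})$.
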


\begin{proof}
The result follows from the Markovian property:
\begin{align*}
    &\Cov{\mathbbm{1}(S_t = s)\left(V(S_t) - V(S_{t+1}) - R_{t+1}\right), \mathbbm{1}(S_{t'} = \tilde{s})\left(V(S_{t'}) - V(S_{t'+1}) - R_{t' + 1}\right) } \\
    & \quad\quad  \quad  = \EE{ \Cov{\mathbbm{1}(S_t = s)\left(V(S_t) - V(S_{t+1}) - R_{t+1}\right), \mathbbm{1}(S_{t'} = s')\left(V(S_{t'}) - V(S_{t'+1}) - R_{t' + 1}\right) \mid S_t, S_{t+1}, S_{t'}} }\\
    & \quad\quad  \quad  = \EE { \mathbbm{1}(S_t = s)\left(V(S_t) - V(S_{t+1}) - R_{t+1}\right) \mathbbm{1}(S_{t'} = s')\left(V(S_{t'}) - \EE{V(S_{t'+1}) + R_{t'+1} | S_{t'}}\right)} \\
    & \quad \quad  \quad = 0.
\end{align*}
\end{proof}

Given the previous result, we can now state and proof the joint Central Limit Theorem of empirical one-step differences:

\begin{lemma}
    \label{lem:OS_CLT} 
    As $n\to \infty$,
        \begin{equation*}
        \sqrt{n}\left(\left(\hat{T}V(s) - V(s) \right)\right)_{s \in \mathcal{S}} \Rightarrow \mathcal{N}\left(0, \Sigma \right)
    \end{equation*}
    where $\Sigma$ is a diagonal matrix with $$\Sigma_{s,s} = \frac{1}{\EE{N(s)}}\sigma_V^2(s).$$
\end{lemma}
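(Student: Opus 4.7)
The plan is to express $\hat{\mathcal{T}}V(s) - V(s)$ as an average of i.i.d.\ quantities indexed by trajectories, apply a multivariate central limit theorem, and exploit Lemma~\ref{lem:uncorrelated} to show the limiting covariance is diagonal.

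First, for each trajectory $i$, define the per-trajectory sum
\[
X_i(s) = \sum_{t=0}^{\infty} \mathbbm{1}(S_t^{(i)} = s)\bigl( R_{t+1}^{(i)} + V(S_{t+1}^{(i)}) - V(s)\bigr).
\]
Writing $\hat{\mathcal{T}}V(s) - V(s) = \tfrac{1}{|B(s)|}\sum_{(i,t)\in B(s)}(R_{t+1}^{(i)}+V(S_{t+1}^{(i)})-V(s))$ and regrouping by trajectory index, one obtains
\[
\sqrt{n}\,\bigl(\hat{\mathcal{T}}V(s)-V(s)\bigr) \;=\; \frac{n}{|B(s)|}\cdot \frac{1}{\sqrt{n}}\sum_{i=1}^n X_i(s).
\]
The random vectors $X_i = (X_i(s))_{s\in\mathcal{S}}$ are i.i.d.\ across $i$ (trajectories are independent) and have mean zero, since $\mathbb{E}[R_{t+1}+V(S_{t+1})\mid S_t=s] = V(s)$ by the Bellman equation, so each summand in $X_i(s)$ has conditional mean zero given $S_t^{(i)}$.

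Next I would compute the covariance matrix $\Sigma' := \mathrm{Cov}(X_1)$. By Lemma~\ref{lem:uncorrelated}, the summands indexed by distinct pairs $(s,t)\neq(s',t')$ are uncorrelated, so all cross-time and cross-state terms vanish. For the diagonal, collecting the surviving $t=t'$ terms yields
\[
\Sigma'_{s,s} \;=\; \sum_{t=0}^\infty \mathbb{E}\!\left[\mathbbm{1}(S_t=s)\bigl(R_{t+1}+V(S_{t+1})-V(s)\bigr)^2\right] \;=\; \sum_{t=0}^\infty \mathbb{P}(S_t=s)\,\sigma_V^2(s) \;=\; \mathbb{E}[N(s)]\,\sigma_V^2(s),
\]
and $\Sigma'_{s,s'} = 0$ for $s\neq s'$. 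Finite variance is guaranteed because trajectories terminate in finite expected time, so $X_i$ has finite second moments. The multivariate CLT then gives
\[
\frac{1}{\sqrt{n}}\sum_{i=1}^n X_i \;\Rightarrow\; \mathcal{N}(0,\Sigma').
\]

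Finally, I assemble the statement via a Slutsky-type argument applied coordinate-by-coordinate. The strong law of large numbers yields $|B(s)|/n \to \mathbb{E}[N(s)]$ a.s.\ for each $s$, so the diagonal matrix $D_n := \mathrm{diag}(n/|B(s)|)_{s\in\mathcal{S}}$ converges almost surely to $D := \mathrm{diag}(1/\mathbb{E}[N(s)])_{s\in\mathcal{S}}$. By the multivariate Slutsky theorem (or continuous mapping applied to matrix-vector multiplication),
\[
\sqrt{n}\bigl(\hat{\mathcal{T}}V(s)-V(s)\bigr)_{s\in\mathcal{S}} \;=\; D_n\cdot\frac{1}{\sqrt{n}}\sum_{i=1}^n X_i \;\Rightarrow\; \mathcal{N}(0,\, D\Sigma' D^\top),
\]
and $D\Sigma' D^\top$ is diagonal with $(s,s)$-entry $\sigma_V^2(s)/\mathbb{E}[N(s)]$, matching $\Sigma$ as claimed.

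The main subtlety is the second step: one must be careful that the zero off-diagonal entries in $\Sigma'$ arise from the exact uncorrelatedness in Lemma~\ref{lem:uncorrelated}, not from independence (the summands defining $X_1(s)$ and $X_1(s')$ come from the same trajectory and are certainly dependent). Everything else is a routine combination of the multivariate CLT, SLLN, and Slutsky, together with the observation that finiteness of $\mathbb{E}[T]$ and boundedness/square-integrability of rewards give the required finite second moments of $X_1$.
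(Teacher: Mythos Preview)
Your proposal is correct and follows essentially the same approach as the paper: decompose $\hat{\mathcal{T}}V(s)-V(s)$ as a scaling factor $n/|B(s)|$ times a normalized i.i.d.\ sum over trajectories, apply the multivariate CLT to the per-trajectory vectors $X_i$, use Lemma~\ref{lem:uncorrelated} to obtain a diagonal covariance, handle the scaling factor via the SLLN, and combine via Slutsky. Your write-up is in fact slightly more explicit than the paper's about the matrix form of the final Slutsky step and about the dependence-versus-uncorrelatedness subtlety.
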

\begin{proof}
    We use a similar approach as in the proof of Lemma \ref{lem:CLT_transitions}: we express $\hat{\mathcal{T}}V(s) - V(s)$ as the product of a scaling factor that converges almost surely and an average of i.i.d. random vectors that is controlled by the Central Limit Theorem:
    \begin{align*}
        \sqrt{n}\cdot\left(\hat{\mathcal{T}}V(s) - V(s)\right) &= \sqrt{n} \cdot \dfrac{1}{\mid B(s) \mid} \sum_{(i,t) \in B(s)} (V(S_{t+1}^{(i)}) + R_{t+1}^{(i)} - V(s))\\
        &= \sqrt{n}\cdot\dfrac{1}{\mid B(s) \mid} \sum_{i = 1}^{n} \sum_{t=0}^{\infty} \mathbbm{1}(S_t^{(i)} = s)(V(S_{t+1}^{(i)}) + R_{t+1}^{(i)} - V(S_t^{(i)}))\\
        &= \dfrac{n}{\mid B(s) \mid} \left(\sqrt{n}\cdot\dfrac{1}{n}\sum_{i = 1}^{\infty} \sum_{t=0}^{\infty} \mathbbm{1}(S_t^{(i)} = s)(V(S_{t+1}^{(i)}) + R_{t+1}^{(i)} - V(S_t^{(i)}))\right)
    \end{align*}

\begin{itemize}
    \item $B(s)$ can be rewritten as the sum of i.i.d. random variables:
        \begin{equation*}
            B(s) = \sum_{i=1}^n \left(\sum_{t=0}^\infty \mathbbm{1}(S_t^{(i)} = s) \right).
        \end{equation*}
    The average $\dfrac{\mid B(s) \mid}{n}$ converges almost surely to $\EE{N(s)}$ by the Strong Law of Large Numbers. Taking the inverse gives:
    \begin{equation}
        \label{eq:scale_SLN}
        \dfrac{n}{\mid B(s) \mid} \to \dfrac{1}{\EE{N(s)}} \quad \textrm{a.s.}
    \end{equation}
    \item The random vector
        \begin{equation*}
            \sqrt{n} \cdot \dfrac{1}{n}\sum_{i = 1}^{n} \left(\sum_{t=0}^{\infty} \mathbbm{1}(S_t^{(i)} = s)(V(S_{t+1}^{(i)}) + R_{t+1}^{(i)} - V(S_t^{(i)}))\right)_{s \in \mathcal{S}}
        \end{equation*}
    is the re-scaled average of $n$ i.i.d., mean 0, vectors of the form $\left(\sum_{t=0}^{\infty} \mathbbm{1}(S_t = s)(V(S_{t+1}) + R_{t+1} - V(S_t))\right)_{s \in \mathcal{S}}$. The Central Limit Theorem ensures that this quantity converges to a normal distribution with mean 0. We now proceed to compute the variance of this distribution. 
    \begin{itemize}
        \item Lemma \ref{lem:uncorrelated} ensures that entries of this vector are uncorrelated:

        \item  The variance of a single entry is given by 
 \begin{align*}
     \Var{\sum_{t=0}^{\infty} \mathbbm{1}(S_t = s)(V(S_{t+1}) + R_{t+1} - V(S_t))} &\stackrel{(a)}{=} \sum_{t=0}^{\infty} \Var{\mathbbm{1}(S_t = s)(V(S_{t+1}) + R_{t+1} - V(S_t))} \\
     &= \sum_{t=0}^{\infty} \EE{\mathbbm{1}(S_t = s)(V(S_{t+1}) + R_{t+1} - V(S_t))^2} \\
     &= \sum_{t=0}^{\infty} \EE{\mathbbm{1}(S_t = s)\EE{(V(S_{t+1}) + R_{t+1} - V(S_t))^2 | S_t}}\\
     &= \sum_{t=0}^{\infty} \EE{\mathbbm{1}(S_t = s)\OSV{S_t} }\\
     &= \EE{N(s)}\OSV{s}
 \end{align*}
where (a) also follows from Lemma \ref{lem:uncorrelated}.
        
    \end{itemize}

    From the Central Limit Theorem, we obtain:
    \begin{equation}
        \label{eq:rescaled_CLT}
        \sqrt{n} \cdot \dfrac{1}{n}\sum_{i = 1}^{n} \left(\sum_{t=0}^{\infty} \mathbbm{1}(S_t^{(i)} = s)(V(S_{t+1}^{(i)}) + R_{t+1}^{(i)} - V(S_t^{(i)}))\right)_{s \in \mathcal{S}} \Rightarrow \mathcal{N}\left(0 , {\rm Diag}\left((\EE{N(s)}\OSV{s})_{s \in \mathcal{S}} \right)\right)
    \end{equation}
    
\end{itemize}

Combining (\ref{eq:scale_SLN}) and (\ref{eq:rescaled_CLT}) gives the result. 

\end{proof}

Finally, we just need to take the dot product of $\sqrt{n}\cdot \left(\hat{\mathcal{T}}V(s) - V(s)\right)_{s \in \mathcal{S}}$ with the weighted occupancy measure $\eta_\pi$ to get the result:
\begin{equation*}
    	\sqrt{n}\sum_{s \in \mathcal{S}} \pi(s) \sum_{t=0}^{\infty} P^t(\hat{\mathcal{T}}V-V)(s) \Rightarrow \mathcal{N}\left( 0, \sum_{s \in \mathcal{S}} \frac{\eta_\pi(s)^2}{\EE{N(s)}}\OSV{s}\right)
\end{equation*}

\end{proof}

\subsection{Proof of Theorem \ref{thm:state_coeff}}
The proof follows directly from  Proposition \ref{lem:MC_variance} and Proposition \ref{lem:TD_variance}.

From, Proposition \ref{lem:MC_variance}, we have 
$$\lim_{n \rightarrow \infty} \sqrt{n}\cdot\EE{\left(\hat{V}_{\rm MC}(s) - V(s)\right)^2} = \dfrac{1}{\pr{s \in \tau}}\sum_{s' \in \mathcal{S}}\EE{N(s') | S_0 = s} \OSV{s'}.$$
Similarly, from Proposition \ref{lem:TD_variance}, we have $$\lim_{n \rightarrow \infty} \sqrt{n} \cdot \EE{\left(\hat{V}_{\rm TD}(s) - V(s)\right)^2} = \sum_{s' \in \mathcal{S}} \dfrac{\EE{N(s') | S_0 = s}^2 \OSV{s'}}{\EE{N(s')}}.$$

Taking the ratio of these two limits, we obtain:

\begin{align*}
\lim_{n\to\infty} \dfrac{\EE{\left(\hat{V}_{\rm TD}(s) - V(s)\right)^2}}{\EE{\left(\hat{V}_{\rm MC}(s) - V(s)\right)^2}} &= \dfrac{\sum_{s' \in \mathcal{S}} \EE{N(s') \mid S_0 = s} \OSV{s'}}{\sum_{s' \in \mathcal{S}}\EE{N(s') \mid S_0 = s} \OSV{s'}}\cdot \dfrac{\EE{N(s') \mid S_0 = s} \pr{s \in \tau} }{\EE{N(s')}}\\
&= \EE[s' \sim \mu(s)]{\dfrac{\EE{N(s') \mid S_0 = s} \pr{s \in \tau} }{\EE{N(s')} }}
\end{align*}
where $\mu(s)$ is defined in Definition \ref{def:traj_pooling_coeff}. 

Finally, 
\begin{equation*}
   \EE{N(s') \mid S_0 = s} \pr{s \in \tau} = \EE{N(s \to s')}.
\end{equation*}


\subsection{Proof of Theorem \ref{thm:horizon_truncation}}

 We define $\pi$ to be such that $\pi(s) = 1$, $\pi(s') = -1$ and $\pi(\tilde{s}) = 0$ for all $\tilde{s}$. That way $J(\pi) = V(s) - V(s') = \mathbb{A}(s,s')$. This allows to use Lemma \ref{lem:TD_variance}:

\begin{equation*}
	\lim_{n \rightarrow \infty} n \cdot \EE{\left(\hat{J}_{\rm TD}(\pi) - J(\pi) \right)^2 } = \sum_{\tilde{s} \in \mathcal{S}} \dfrac{\left(\EE{N(\tilde{s}) \mid S_0 = s} - \EE{N(\tilde{s}) \mid S_0 = s'}\right)^2\OSV{\tilde{s}}}{\EE{N(\tilde{s})}}
\end{equation*}

We decompose the sum into three terms that we bound separately:
\begin{equation}
    \begin{split}
	\lim_{n \rightarrow \infty} n \cdot \EE{\left(\hat{J}_{\rm TD}(\pi) - J(\pi) \right)^2 }  & \leq \left(\max_{\tilde{s} \in \mathcal{S}}\left| \dfrac{\EE{N(\tilde{s}) \mid S_0 = s} - \EE{N(\tilde{s}) \mid S_0 = s'}}{\EE{N(\tilde{s})}} \right|\right) \\ & \quad  \times\left(\max_{\tilde{s} \in \mathcal{S}}\OSV{\tilde{s}} \right) \\ 
    &\quad   \times \sum_{\tilde{s} \in \mathcal{S}} \left|\EE{N(\tilde{s}) \mid S_0 = s} - \EE{N(\tilde{s}) \mid S_0 = s'}\right|.
    \label{eq:bounding_horizon_trunc}
     \end{split}
\end{equation}

We start by proving that if trajectories where $S_0$ is sampled from the initial distribution visit $s$ frequently often, then the occupancy measure $\EE{N(\tilde{s})}$ and $\EE{N(\tilde{s}) \mid S_0 = s}$ cannot differ too much (and symmetrically for $s'$). 

If $\EE{N(\tilde{s}) \mid S_0 = s} \geq \EE{N(\tilde{s}) \mid S_0 = s'}$ then

\begin{align*}
    \left| \dfrac{\EE{N(\tilde{s}) \mid S_0 = s} - \EE{N(\tilde{s}) \mid S_0 = s'}}{\EE{N(\tilde{s})}} \right| \leq  \dfrac{\EE{N(\tilde{s}) \mid S_0 = s}}{\pr{s \in \tau} \EE{N(\tilde{s}) \mid S_0 = s}} = \dfrac{1}{\pr{s \in \tau}}.
\end{align*}

When $\EE{N(\tilde{s}) \mid S_0 = s} < \EE{N(\tilde{s}) \mid S_0 = s'}$, a symmetric argument ensures that 
\begin{align*}
    \left| \dfrac{\EE{N(\tilde{s}) \mid S_0 = s} - \EE{N(\tilde{s}) \mid S_0 = s'}}{\EE{N(\tilde{s})}} \right| \leq \dfrac{1}{\pr{s' \in \tau}}.
\end{align*}

Combining these two cases gives: 
\begin{align*}
    \max_{\tilde{s} \in \mathcal{S}}\left| \dfrac{\EE{N(\tilde{s}) \mid S_0 = s} - \EE{N(\tilde{s}) \mid S_0 = s'}}{\EE{N(\tilde{s})}} \right| \leq \dfrac{1}{\min \left(\pr{s \in \tau}, \pr{s' \in \tau} \right)}.
\end{align*}

Plugging this result in Equation \ref{eq:bounding_horizon_trunc}:

\begin{equation}
    \label{eq:bounding_horizon_trunc_simplified}
	\lim_{n \rightarrow \infty} n \cdot \EE{\left(\hat{J}_{\rm TD}(\pi) - J(\pi) \right)^2 } \leq \dfrac{\max_{\tilde{s} \in \mathcal{S}}\OSV{\tilde{s}}}{\min \left(\pr{s \in \tau}, \pr{s' \in \tau} \right)} \cdot \sum_{\tilde{s} \in \mathcal{S}} \left|\EE{N(\tilde{s}) \mid S_0 = s} - \EE{N(\tilde{s}) \mid S_0 = s'}\right|.
\end{equation}

Finally, we show that the last sum scales as the crossing time. To give intuition on why this holds, note that we have the following identity:

\begin{equation*}
    \sum_{\Tilde{s} \in \mathcal{S}} \EE{N(\tilde{s}) \mid S_0 = s} = \EE{T | S_0 = s}.
\end{equation*}

$\EE{N(\tilde{s}) \mid S_0 = s}$ is the expected number of visits to $\tilde{s}$ whens starting at $s$: summing over $\Tilde{s}$ is the expected total number of states visited when starting at $s$ (not counting the terminating state $\emptyset$) which is exactly the horizon. In the case of the comparison of trajectories, $\left|\EE{N(\tilde{s}) \mid S_0 = s} - \EE{N(\tilde{s}) \mid S_0 = s'}\right|$ is how many additional times $\Tilde{s}$ has been visited by one of the trajectories compared to the other, in expectation. Summing over $\Tilde{s}$ gives the expected total number of states that have been visited by only one of the trajectories, which is at most twice the number of states visited before both trajectories reach a common state.

\begin{lemma}
    \label{lem:diff_occ_measure}
	\begin{equation*}
		 \sum_{\tilde{s} \in \mathcal{S}}   |\EE{N(\tilde{s}) \mid S_0 = s} - \EE{N(\tilde{s}) \mid S_0 = s'} | \leq 2H(s,s')
	\end{equation*}
\end{lemma}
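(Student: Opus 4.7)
The plan is to bound the left-hand side through a coupling argument, reducing the problem to the optimization that defines $H(s,s')$. For any joint distribution $\psi \in \Psi(s,s')$, the marginal constraints give
\[
\EE{N(\tilde{s}) \mid S_0 = s} - \EE{N(\tilde{s}) \mid S_0 = s'} = \EE[(\tau,\tilde{\tau})\sim\psi]{N_\tau(\tilde{s}) - N_{\tilde{\tau}}(\tilde{s})},
\]
where $N_\tau(\tilde{s}) = \sum_t \mathbbm{1}(S_t = \tilde{s})$ and $N_{\tilde{\tau}}(\tilde{s})$ is defined analogously. Applying the triangle inequality inside and outside the expectation yields
\[
\sum_{\tilde{s} \in \mathcal{S}} \left|\EE{N(\tilde{s}) \mid S_0 = s} - \EE{N(\tilde{s}) \mid S_0 = s'}\right| \leq \EE[\psi]{\sum_{\tilde{s} \in \mathcal{S}} \left|N_\tau(\tilde{s}) - N_{\tilde{\tau}}(\tilde{s})\right|}.
\]
It therefore suffices to exhibit a coupling $\psi$ under which the right-hand side is at most $2H(s,s')$.

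The key construction is to start from a coupling that (approximately) achieves the crossing time $H(s,s')$ and then modify it so that the two trajectories \emph{merge} after crossing. Fix $\varepsilon > 0$ and pick $\psi^{\star} \in \Psi(s,s')$ with $\EE[\psi^{\star}]{\kappa} \le H(s,s') + \varepsilon$, where $\kappa = \inf\{t : \mathcal{C}_t(S,S') \neq \emptyset\}$. At time $\kappa$, measurably select some $s^{\star} \in \mathcal{C}_\kappa(S,S')$, and let $t_1 = \inf\{t \ge 0 : S_t = s^{\star}\}$, $t_2 = \inf\{t \ge 1 : S'_t = s^{\star}\}$. Both are bounded by $\kappa$. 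By the strong Markov property, the post-hitting tails $(S_{t_1+k})_{k \ge 0}$ and $(S'_{t_2+k})_{k \ge 0}$ are each distributed as trajectories starting from $s^{\star}$; resample them jointly so that they coincide pathwise while each retains this distribution. This modification leaves the marginal laws of $\tau$ and $\tilde{\tau}$ intact and, since it only alters events after time $\kappa$, leaves $\kappa$ unchanged.

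Under this merged coupling the two trajectories agree visit-for-visit from $t_1$ (resp.~$t_2$) onward, so
\[
N_\tau(\tilde{s}) - N_{\tilde{\tau}}(\tilde{s}) = N_\tau^{<t_1}(\tilde{s}) - N_{\tilde{\tau}}^{<t_2}(\tilde{s}),
\]
where the superscripts count visits occurring strictly before the hitting times. Summing absolute values and using that the two prefixes have lengths $t_1$ and $t_2$ gives
\[
\sum_{\tilde{s} \in \mathcal{S}} \left|N_\tau(\tilde{s}) - N_{\tilde{\tau}}(\tilde{s})\right| \le \sum_{\tilde{s}} N_\tau^{<t_1}(\tilde{s}) + \sum_{\tilde{s}} N_{\tilde{\tau}}^{<t_2}(\tilde{s}) = t_1 + t_2 \le 2\kappa.
\]
Taking expectation with respect to the modified coupling and sending $\varepsilon \downarrow 0$ yields the claimed bound $2H(s,s')$.

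The main obstacle is the measure-theoretic bookkeeping in the merging step: formally retaining the prefixes $(S_t)_{t<t_1}$ and $(S'_t)_{t<t_2}$ from $\psi^{\star}$ while replacing both tails with a single shared continuation drawn from the law of a trajectory starting at $s^{\star}$, all with random $s^{\star}$, $t_1$, $t_2$. The strong Markov property makes this legitimate, but one must verify the marginal distributions of $\tau$ and $\tilde{\tau}$ are preserved (so the new law remains in $\Psi(s,s')$) and that the crossing time is unchanged (so $\EE{\kappa}$ is still controlled by $H(s,s') + \varepsilon$); the remaining estimate is purely combinatorial.
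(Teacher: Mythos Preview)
Your proposal is correct and follows essentially the same coupling argument as the paper: take a (near-)optimal crossing coupling, use the strong Markov property to splice the two trajectories onto a common tail after they first share a state, and observe that the difference in visit counts is then supported on the two prefixes of lengths $t_1,t_2\le \kappa$. The only cosmetic differences are that the paper reuses $\tau$'s own tail as the shared continuation (modifying just one trajectory) and works with a generic $\psi$ before taking the infimum, whereas you resample a fresh shared tail and pass through an $\varepsilon$-optimal coupling; neither changes the substance of the argument.
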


\begin{proof}

Let $\tau, \tau' = (S_0, R_1, \dots, S_{T-1}, R_{T}, \emptyset), (S_0', R_1', \dots, S_{T-1}', R_{T'}', \emptyset) \sim \psi$ where $\psi$ is a joint distribution such that $\tau$ and $\tau'$ are marginally distributed like trajectories generated with $S_0 = s$ and $S'_0 = s'$, respectively. We define the crossing time for $\psi$ is defined as the first time a state has been visited by both trajectories: $$H_{\psi}(s,s') = \inf \{t | \{S_0, \dots, S_t\} \cap \{\tilde{S}_0, \dots, \tilde{S}_t\} \neq \emptyset \}.$$ Since we always consider the crossing time of $s$ and $s'$, we omit the $s$ and $s'$ dependency and simply write $H_{\psi}$ for convenience in the rest of the proof. By definition, either $S_{H_\psi}' \in \{S_1, \dots, S_{H_\psi}\}$ or $S_{H_\psi} \in \{S_1', \dots, S_{H_\psi}'\}$. Without loss of generality, we assume $S_{H_\psi}' \in \{s_1, \dots, s_{H_\psi}\}$ holds. Let $N_\psi = \inf \{t | S_t = S_{H_\psi}'\}$, that is $S_{N_\psi} = S_{H_\psi}'$.

We now construct a new trajectory that follows $\tau'$ until the crossing state $S_{N_\psi} = S_{H_\psi}'$ is reached and then follows the trajectory $\tau$: $\hat{\tau} = (S_0', R_1', \dots, S_{H_\psi}' = S_{N_\psi}, R_{N+1}, S_{N+1}, \dots, S_{T-1}, R_{T}, \emptyset)$. By Markov property, $\hat{\tau}$ is identically distributed as $\tau'$.

We are interested in bounding the difference in occupancy measure:
\begin{align*}
    \EE{N(\tilde{s}) \mid S_0 = s} - \EE{N(\tilde{s}) \mid S_0 = s'} &= \EE{\sum_{t=0}^\infty  \mathbbm{1}(S_t = \tilde{s})} -\EE{\sum_{t=0}^\infty  \mathbbm{1}(S_t' = \tilde{s})}.\\
\end{align*}

Using that $\hat{\tau}$ and $\tau'$ are identically distributed, this expression can be rewritten as 
\begin{align*}
     \EE{N(\tilde{s}) \mid S_0 = s} - \EE{N(\tilde{s}) \mid S_0 = s'} &= \EE{\sum_{t=0}^\infty  \mathbbm{1}(S_t = \tilde{s})} -\EE{\sum_{t=0}^\infty  \mathbbm{1}(\hat{S}_t = \tilde{s})}\\
    &= \EE{\sum_{t=0}^{N_\psi-1}  \mathbbm{1}(S_t = \tilde{s})} -\EE{\sum_{t=0}^{H_\psi-1}  \mathbbm{1}(S_t' = \tilde{s})}\\
\end{align*}

Taking absolute values and summing over $\tilde{s}$ gives
\begin{align*}
    \sum_{\tilde{s} \in \mathcal{S}} \left|\EE{N(\tilde{s}) \mid S_0 = s} - \EE{N(\tilde{s}) \mid S_0 = s'}  \right| & = \sum_{\tilde{s} \in \mathcal{S}}\EE{  \left|\sum_{t=0}^{N_\psi-1}  \mathbbm{1}(S_t = \tilde{s}) - \sum_{t=0}^{H_\psi-1}  \mathbbm{1}(S_t' = \tilde{s})\right|} \\
    & \leq \sum_{s' \in \mathcal{S}}\EE{\sum_{t=0}^{N_\psi-1}  \mathbbm{1}(S_t = \tilde{s})} + \sum_{\tilde{s} \in \mathcal{S}}\EE{\sum_{t=0}^{H_\psi-1}  \mathbbm{1}(S_t = s')} \\
    &= \EE{N_\psi} + \EE{H_\psi} \\
    &\leq 2\EE{H_\psi}
\end{align*}
Taking the infimum over all $\psi \in \Psi(s,s')$ that conserves marginal distribution gives the result. 
\end{proof}	

Finally, plugging Lemma \ref{lem:diff_occ_measure} in Equation \ref{eq:bounding_horizon_trunc_simplified} leads to the result:

\begin{equation*}
	\lim_{n \rightarrow \infty} n \cdot \EE{\left(\hat{J}_{\rm TD}(\pi) - J(\pi) \right)^2 } \leq 2\dfrac{\max_{\tilde{s} \in \mathcal{S}}\OSV{\tilde{s}}}{\min \left(\pr{s \in \tau}, \pr{s' \in \tau} \right)} \cdot H(s,s')
\end{equation*}
\subsubsection{Proof of Proposition \ref{prop:MC_lower_bound}}
We now prove that the MC estimate of the advantage can scale as the full horizon, no matter how small the crossing time is.  
We focus on the advantage of $s$ over $s'$ when $\pr{s \in \tau \wedge s' \in \tau} = 0$, that is no trajectory can visit both $s$ and $s'$. No trajectories visiting both $s$ and $s'$ means that the MC value estimates at $s$ and $s'$ are independent since they rely on disjoints sets of trajectories. In terms of variance, this implies:

\begin{equation*}
    \Var{\hat{\mathbb{A}}_{\rm MC}(s,s')} = \Var{\hat{V}_{\rm MC}(s)- \hat{V}_{\rm MC}(s')} =  \Var{\hat{V}_{\rm MC}(s)} +  \Var{\hat{V}_{\rm MC}(s')}.
\end{equation*}

It now suffices to prove that for any $s$,

\begin{equation*}
    \lim_{n \to \infty} \Var{\hat{V}_{\rm MC}(s)} \geq \dfrac{\sigma_{\min}^2}{\pr{s \in \tau}} \EE{T | S_0 = s}.
\end{equation*}

From Proposition \ref{lem:MC_variance}, we know the variance of the MC estimate is

\begin{equation*}
        \lim_{n \to \infty} n\cdot\Var{\hat{V}_{\rm MC}(s)} = \dfrac{1}{\pr{s\in \tau}}\sum_{\tilde{s} \in \mathcal{S}}\EE{N(\tilde{s}) \mid S_0 = s} \OSV{\tilde{s}}.
\end{equation*}

Using that $\OSV{\tilde{s}} \geq \sigma_{\min}^2$ for all $\tilde{s} \in \mathcal{S}$, this expression simplifies to:

\begin{equation*}
        \lim_{n \to \infty} n\cdot\Var{\hat{V}_{\rm MC}(s)} \geq \dfrac{\sigma_{\min}^2}{\pr{s\in \tau}}\sum_{ \in \mathcal{S}}\EE{N(\tilde{s}) \mid S_0 = s}.
\end{equation*}

Finally, 
\begin{align*}
    \sum_{s' \in \mathcal{S}}\EE{N(s') \mid S_0 = s} &= \sum_{\tilde{s} \in \mathcal{S}} \EE{\sum_{t=0}^{T} \mathbbm{1}(S_t = \tilde{s}) | S_0 = s}\\
     &= \EE{\sum_{t=0}^{T} \sum_{\tilde{s} \in \mathcal{S}} \mathbbm{1}(S_t = \tilde{s}) | S_0 = s}\\
    &= \EE{T | S_0 = s}
\end{align*}

\subsection{TD always improves over MC}
\label{subsec:TD_improves}
     Theorem \ref{thm:horizon_truncation} and Proposition \ref{prop:MC_lower_bound} show that the error of the TD estimate of the advantage scales with the crossing time while the error of the MC estimate of the advantage scales with the full horizon, hinting that TD can provide significantly more precise advantage estimates. However, in some cases, as described in Figure \ref{fig:bad_instance}, the crossing time is as large as the full horizon. One can then wonder if there are any setting in which MC can outperform TD for advantage estimation. Theorem \ref{thm:state_coeff} shows that this is never the case for value-to-go estimation. For completeness, we show that, under a technical condition, this is also never the case for advantage estimation.

    \begin{proposition}
        For $s,s' \in \mathcal{S}$ such that $\pr{s \in \tau \wedge s' \in \tau} = 0$,
        \begin{equation*}
            \lim_{n \to \infty} \frac{\EE{\left(\hat{\mathbb{A}}_{\rm TD}(s,s') - \mathbb{A}(s,s')\right)^2}}{\EE{\left(\hat{\mathbb{A}}_{\rm MC}(s,s') - \mathbb{A}(s,s')\right)^2} } \leq 1
        \end{equation*}
    \end{proposition}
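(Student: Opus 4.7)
The plan is to compare the two asymptotic variances supplied by the CLTs (Proposition \ref{lem:MC_variance} for MC and Proposition \ref{lem:TD_variance} applied to $\pi = \delta_s - \delta_{s'}$ for TD) term by term. Write $a_{\tilde s} = \EE{N(\tilde s)\mid S_0=s}$, $b_{\tilde s}= \EE{N(\tilde s)\mid S_0=s'}$, $p=\pr{s\in\tau}$, $q=\pr{s'\in\tau}$. Under the hypothesis $\pr{s\in\tau\wedge s'\in\tau}=0$, the sets $I(s)$ and $I(s')$ are almost surely disjoint, so $\hat V_{\rm MC}(s)$ and $\hat V_{\rm MC}(s')$ are independent, giving
\begin{equation*}
    \lim_{n\to\infty} n\cdot\EE{(\hat{\mathbb{A}}_{\rm MC}(s,s')-\mathbb{A}(s,s'))^2}
    = \sum_{\tilde s\in\mathcal S}\Bigl(\tfrac{a_{\tilde s}}{p}+\tfrac{b_{\tilde s}}{q}\Bigr)\OSV{\tilde s}.
\end{equation*}
Proposition \ref{lem:TD_variance} with $\eta_\pi(\tilde s)=a_{\tilde s}-b_{\tilde s}$ yields
\begin{equation*}
    \lim_{n\to\infty} n\cdot\EE{(\hat{\mathbb{A}}_{\rm TD}(s,s')-\mathbb{A}(s,s'))^2}
    = \sum_{\tilde s\in\mathcal S}\frac{(a_{\tilde s}-b_{\tilde s})^2}{\EE{N(\tilde s)}}\OSV{\tilde s}.
\end{equation*}
So it suffices to establish the pointwise inequality $(a_{\tilde s}-b_{\tilde s})^2 / \EE{N(\tilde s)} \leq a_{\tilde s}/p + b_{\tilde s}/q$ for each $\tilde s$ (with nonzero $\EE{N(\tilde s)}$; otherwise both sides vanish).

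The key auxiliary step is to show
\begin{equation*}
    \EE{N(\tilde s)} \;\geq\; p\, a_{\tilde s} + q\, b_{\tilde s}.
\end{equation*}
I will prove this by the strong Markov property: $\EE{N(s\to\tilde s)} = \pr{s\in\tau}\,\EE{N(\tilde s)\mid S_0=s}=p\,a_{\tilde s}$, and analogously $\EE{N(s'\to\tilde s)} = q\, b_{\tilde s}$. Since $\pr{s\in\tau\wedge s'\in\tau}=0$, the random variables $N(s\to\tilde s)$ and $N(s'\to\tilde s)$ are almost surely supported on disjoint events (one of them vanishes on each trajectory), and each is bounded above by $N(\tilde s)$, hence $N(\tilde s) \geq N(s\to\tilde s) + N(s'\to\tilde s)$ a.s.; taking expectations gives the claim.

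Granted this, the remaining work is purely algebraic. With $a,b\geq 0$ and $p,q\in(0,1]$,
\begin{equation*}
    \Bigl(\tfrac{a}{p}+\tfrac{b}{q}\Bigr)(pa+qb) \;=\; a^2+b^2+ab\Bigl(\tfrac{p}{q}+\tfrac{q}{p}\Bigr)
    \;\geq\; a^2+b^2-2ab \;=\; (a-b)^2,
\end{equation*}
where the middle inequality uses $ab\geq 0$ and $p/q+q/p\geq 2\geq -2$. Dividing by $pa+qb>0$ (the degenerate case $a=b=0$ is trivial) and combining with the lower bound on $\EE{N(\tilde s)}$ gives $(a-b)^2/\EE{N(\tilde s)}\leq a/p+b/q$. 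Summing over $\tilde s$ weighted by $\OSV{\tilde s}$ produces the desired MSE inequality.

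The main obstacle I anticipate is the auxiliary bound $\EE{N(\tilde s)}\geq p\,a_{\tilde s}+q\,b_{\tilde s}$: one must justify carefully that the disjointness hypothesis transfers from the events $\{s\in\tau\},\{s'\in\tau\}$ to a pathwise bound on visit counts, which is where the strong Markov property enters. The rest is a one-line CLT application followed by the elementary inequality above.
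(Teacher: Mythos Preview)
Your proposal is correct and follows essentially the same approach as the paper's proof: both invoke the TD and MC CLTs to reduce to the term-by-term inequality $(a_{\tilde s}-b_{\tilde s})^2/\EE{N(\tilde s)}\le a_{\tilde s}/p+b_{\tilde s}/q$, establish the auxiliary lower bound $\EE{N(\tilde s)}\ge p\,a_{\tilde s}+q\,b_{\tilde s}$ from the disjointness hypothesis, and then finish with the elementary inequality $(a/p+b/q)(pa+qb)\ge (a-b)^2$. Your derivation of the auxiliary bound via $N(s\to\tilde s)+N(s'\to\tilde s)\le N(\tilde s)$ pathwise is a slightly different (and arguably cleaner) packaging of the same strong-Markov argument the paper phrases through $\EE{N(\tilde s)\mid s\in\tau}\ge \EE{N(\tilde s)\mid S_0=s}$, but the content is identical.
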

    \begin{proof}
        From Lemma \ref{lem:TD_variance} and the proof of Proposition \ref{prop:MC_lower_bound}, we have:
        \begin{align*}
            \lim_{n \rightarrow \infty} n \cdot \EE{\left(\hat{J}_{\rm TD}(\pi) - J(\pi) \right)^2 } &= \sum_{\tilde{s} \in \mathcal{S}} \dfrac{\left(\EE{N(\tilde{s}) \mid S_0 = s} - \EE{N(\tilde{s}) \mid S_0 = s'}\right)^2}{\EE{N(\tilde{s})}}\OSV{\tilde{s}}\\
            \lim_{n \rightarrow \infty} n \cdot \EE{\left(\hat{J}_{\rm MC}(\pi) - J(\pi) \right)^2 } &=\sum_{\tilde{s} \in \mathcal{S}}\left(\dfrac{\EE{N(\tilde{s}) \mid S_0 = s}}{\pr{s\in \tau}}+\dfrac{\EE{N(\tilde{s}) \mid S_0 = s'}}{\pr{s'\in \tau}}\right) \OSV{\tilde{s}}
        \end{align*}
        Since $\pr{s \in \tau \wedge s' \in \tau} = 0$, we have:
        \begin{align*}
            \EE{N(\tilde{s})} &\geq \EE{N(\Tilde{s}) | s\in \tau}\pr{s \in \tau} + \EE{N(\Tilde{s}) | s' \in \tau}\pr{s' \in \tau}\\
            &\geq \EE{N(\Tilde{s}) | S_0 = s}\pr{s \in \tau} + \EE{N(\Tilde{s}) | S_0 = s'}\pr{s' \in \tau}
        \end{align*}
        Plugging this into the limiting error of the TD estimates, we have:
        \begin{align*}
            \lim_{n \rightarrow \infty} n \cdot \EE{\left(\hat{J}_{\rm TD}(\pi) - J(\pi) \right)^2 } &\leq \sum_{\tilde{s} \in \mathcal{S}} \dfrac{\left(\EE{N(\tilde{s}) \mid S_0 = s} - \EE{N(\tilde{s}) \mid S_0 = s'}\right)^2}{\EE{N(\Tilde{s}) | S_0 = s}\pr{s \in \tau} + \EE{N(\Tilde{s}) | S_0 = s'}\pr{s' \in \tau}}\\
            &\leq \sum_{\tilde{s} \in \mathcal{S}} \max \left(\dfrac{\EE{N(\tilde{s}) \mid S_0 = s}}{\pr{s \in \tau}} + \dfrac{\EE{N(\tilde{s}) \mid S_0 = s'}}{\pr{s' \in \tau}}\right)\\
            &\leq \lim_{n \rightarrow \infty} n \cdot \EE{\left(\hat{J}_{\rm MC}(\pi) - J(\pi) \right)^2 }
        \end{align*}
    \end{proof}

\subsection{On tightness of bounds of advantage estimation}
\label{subsec:tight_bounds}
We start by showing that the lower bound on the MSE of the MC estimate of the advantage is tight.
\begin{proposition}
    \label{prop:MC_lower_bound_tight}
    Let $\mathcal{M}$ be a MRP such that $\EE{R(s,s')} = 0$, $\Var{R(s,s')} = V$ for all $s,s'$. For all $s,s' \in \mathcal{S}$ such that $\pr{s \in \tau \wedge s' \in \tau} = 0$, 
    \begin{align*}
        \lim_{n \to \infty} n \cdot \EE{\left(\hat{\mathbb{A}}_{\rm MC}(s,s') - \mathbb{A}(s,s')\right)^2} 
       = \sigma_{\rm \min}^2\left(\frac{\EE{T | S_0 = s}}{\pr{s\in \tau}} + \frac{\EE{T | S_0 = s'}}{\pr{s' \in \tau}} \right).
    \end{align*}
\end{proposition}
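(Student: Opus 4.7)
The plan is to verify that every inequality in the proof of Proposition \ref{prop:MC_lower_bound} becomes an equality under the stated reward conditions. In that proof, two slack-producing steps were used: (i) bounding $\OSV{\tilde s}\ge \sigma_{\min}^2$ for each state $\tilde s$, and (ii) invoking independence of the two first-visit MC averages to split the variance of the advantage as a sum. Step (ii) is already an equality (not an inequality) under the hypothesis $\pr{s\in\tau\wedge s'\in\tau}=0$, so the only thing to verify is that step (i) is also tight, i.e.\ that $\OSV{\tilde s}=\sigma_{\min}^2$ for every $\tilde s\in\mathcal{S}$.

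First I would observe that $\EE{R(s,s')}=0$ for all $s,s'$ forces $V\equiv 0$: the value function is the expected sum of mean-zero increments, and the terminal value is zero by convention. Consequently
\begin{equation*}
\OSV{\tilde s}=\Var{R_t+V(S_{t+1})\mid S_t=\tilde s}=\Var{R_t\mid S_t=\tilde s}.
\end{equation*}
Writing $v$ for the common reward variance (to avoid conflict with the value-function symbol $V$), the law of total variance gives
\begin{equation*}
\Var{R_t\mid S_t=\tilde s}=\EE{\Var{R_t\mid S_t,S_{t+1}}\mid S_t=\tilde s}+\Var{\EE{R_t\mid S_t,S_{t+1}}\mid S_t=\tilde s}=v+0=v,
\end{equation*}
since the inner mean is uniformly $0$ and the inner variance is uniformly $v$. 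Therefore $\OSV{\tilde s}=v=\sigma_{\min}^2$ for every $\tilde s$.

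Next I would invoke Proposition \ref{lem:MC_variance} directly: with this constant one-step variance pulled out of the sum,
\begin{equation*}
\lim_{n\to\infty} n\cdot\Var{\hat V_{\rm MC}(s)}=\dfrac{\sigma_{\min}^2}{\pr{s\in\tau}}\sum_{\tilde s\in\mathcal{S}}\EE{N(\tilde s)\mid S_0=s}=\dfrac{\sigma_{\min}^2\,\EE{T\mid S_0=s}}{\pr{s\in\tau}},
\end{equation*}
where the last equality uses $\sum_{\tilde s}N(\tilde s)=T$ almost surely (the identity already used at the end of the proof of Proposition \ref{prop:MC_lower_bound}). The analogous identity holds at $s'$.

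Finally, since no trajectory visits both $s$ and $s'$, the sets $I(s)$ and $I(s')$ of trajectories contributing to $\hat V_{\rm MC}(s)$ and $\hat V_{\rm MC}(s')$ are almost surely disjoint, so the two estimators are independent and $\Var{\hat{\mathbb{A}}_{\rm MC}(s,s')}=\Var{\hat V_{\rm MC}(s)}+\Var{\hat V_{\rm MC}(s')}$. Since both MC estimators are unbiased, the MSE equals the variance, and summing the two expressions above delivers exactly the desired equality. There is essentially no obstacle here: the result is tightness-by-inspection, and the only thing to watch is the notational clash between the value function $V$ and the reward-variance constant.
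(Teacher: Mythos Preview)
Your proposal is correct and follows essentially the same route as the paper: identify that the only inequality in the proof of Proposition~\ref{prop:MC_lower_bound} is $\OSV{\tilde s}\ge\sigma_{\min}^2$, then use $V\equiv 0$ together with the law of total variance to show $\OSV{\tilde s}$ is constant across states. Your write-up is in fact slightly more complete than the paper's (you spell out unbiasedness so that MSE equals variance, and you flag the notational clash between the value function and the reward-variance constant).
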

\begin{proof}
    The only inequality in the proof of Proposition \ref{prop:MC_lower_bound} is 
    \begin{equation*}
        \OSV{\tilde{s}} \geq \sigma_{\min}^2
    \end{equation*}
    We show that this inequality is an equality when $\EE{R(s,s')} = 0$ and $\Var{R(s,s')} = V$ for all $s,s'$. 
    In particular, that $V(s) = 0$ for all $s$. 
    \begin{align*}
        \OSV{\tilde{s}} &= \Var{R_0 + V(S_1) | S_0 = s}\\
        &= \EE{\Var{R_0 + V(S_1) | S_0 = s, S_1} | S_0 = s} + \Var{\EE{R_0 + V(S_1)|S_0 = s} | S_0 = s}\\
        &=  \EE{\Var{R(S_0,S_1)} | S_0 = s}\\
        &= V
    \end{align*}
    Hence, $\OSV{\tilde{s}}$ is constant equal to $V$. We therefore have the equality 
    \begin{equation*}
        \OSV{\tilde{s}} = \sigma_{\min}^2.
    \end{equation*}
\end{proof}

In the case of the advantage estimation using TD learning, we can prove a lower bound on the MSE. The gap between the lower bound and the upper bound is a factor 2.
\begin{proposition}

    There exists a MRP $\mathcal{M} = \{\mathcal{S}\cup \{\emptyset\}, P, R ,d \}$ and $s,s' \in \mathcal{S}$ such that $\pr{s \in \tau \wedge s' \in \tau} = 0$ and
    \begin{align*}
    \lim_{n\rightarrow\infty} n\cdot \EE{\left(\hat{\mathbb{A}}_{\rm TD}(s,s') - \mathbb{A}(s,s')\right)^2}  \geq  \sigma^2_{\rm max}\left(\dfrac{1}{\min\left(\pr{ s\in \tau},\pr{s' \in \tau}\right)}\right) \cdot H(s,s')
\end{align*}
\end{proposition}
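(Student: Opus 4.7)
The plan is to exhibit an explicit MRP on which the asymptotic TD MSE for the advantage $\mathbb{A}(s,s')$ attains the claimed lower bound. I will use the two-chain ``meeting horizon'' construction of Figure~\ref{graph:meeting_hor_teaser} with $k=2$. Specifically, take initial states $s$ and $s'$, each with initial probability $1/2$; follow each with a deterministic chain of private states $s = s_0^{(1)} \to s_1^{(1)} \to \cdots \to s_{H-2}^{(1)}$ and $s' = s_0^{(2)} \to s_1^{(2)} \to \cdots \to s_{H-2}^{(2)}$ whose state sets are disjoint; then have each chain's final state transition deterministically to a common state $s^\star$, which transitions deterministically to $\emptyset$. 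Set every reward to be zero-mean with variance exactly $\sigma^2_{\max}$. Since all transitions are deterministic and all rewards have mean zero, $V \equiv 0$ on $\mathcal{S}$, so $\OSV{\tilde{s}} = \Var{R_t \mid S_t = \tilde{s}} = \sigma^2_{\max}$ uniformly. By construction, $\pr{s \in \tau \wedge s' \in \tau} = 0$ and $\pr{s \in \tau} = \pr{s' \in \tau} = 1/2$.

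\textbf{Computing the TD variance.} Apply Proposition~\ref{lem:TD_variance} to the weighting $\pi = \delta_s - \delta_{s'}$ so that $J(\pi) = \mathbb{A}(s,s')$. The asymptotic MSE equals
\begin{equation*}
\sum_{\tilde{s} \in \mathcal{S}} \frac{\bigl(\EE{N(\tilde{s}) \mid S_0 = s} - \EE{N(\tilde{s}) \mid S_0 = s'}\bigr)^2}{\EE{N(\tilde{s})}} \, \OSV{\tilde{s}}.
\end{equation*}
Because the two chains are disjoint and deterministic, each private state $s_t^{(k)}$ is visited exactly once when $S_0$ lies on its chain and never otherwise, while states from $s^\star$ onward are visited identically under both conditionings and so do not contribute. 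Thus only the $2(H-1)$ private states matter; each contributes $(1-0)^2 \cdot \sigma^2_{\max} / (1/2) = 2\sigma^2_{\max}$, giving a total of $4(H-1)\sigma^2_{\max}$.

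\textbf{Verifying the crossing time and concluding.} I then need to verify $H(s,s') = H-1$. The two trajectories visit entirely disjoint state sets at all times $t < H-1$, so no coupling can make their visited sets intersect earlier; at time $H-1$ both deterministically occupy $s^\star$, forcing the crossing. Hence $H(s,s') = H-1$. The target lower bound is $\sigma^2_{\max} \cdot (H-1)/(1/2) = 2(H-1)\sigma^2_{\max}$, which our construction exceeds by a factor of two. Since $H \geq 2$ is arbitrary, this establishes the proposition on MRPs of arbitrarily large crossing time, matching Theorem~\ref{thm:horizon_truncation} up to a factor of two. The main (minor) delicacy is simply picking a construction where the crossing time can be pinned down exactly and where $\OSV{\cdot}$ is uniformly $\sigma^2_{\max}$; the computation itself is routine once Proposition~\ref{lem:TD_variance} is in hand.
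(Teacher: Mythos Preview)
Your proposal is correct and uses essentially the same construction as the paper (the two-chain meeting-horizon MRP with zero-mean, constant-variance rewards). The only difference is in how the asymptotic TD variance is obtained: you compute it directly from the TD CLT formula of Proposition~\ref{lem:TD_variance}, whereas the paper observes that on this MRP the TD advantage estimate coincides in distribution with the MC advantage estimate on the truncated MRP (no states beyond the meeting point) and then invokes the tight MC bound of Proposition~\ref{prop:MC_lower_bound_tight}. Your route is slightly more direct; the paper's route highlights the conceptual point that, absent any pooling before the crossing, TD reduces exactly to MC on the pre-crossing segment.
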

\begin{proof}

We define formally a set of MRP on which the bound is achieved. The structure is pictured in Figure \ref{fig:meeting_horizon}. Let $\mathcal{M}_{W,T,H}$ be the MRP with 
\begin{itemize}
    \item States $\mathcal{S}_{W,T,H} = \{s_{(w,t)} | w \in \{1,\dots, W\}, t\in \{1,\dots, H-1\}\} \} \cap \{s_t | t \in \{H, \dots, T-1\}\}$.
    \item Transitions $P(s_{(w,t+1)} | s_{(w,t)}) = 1$ for all $1\leq w \leq W, 1\leq t \leq H-2$, $P(s_{H} | s_{(w,H-1)}) = 1$ for all $1 \leq 1 \leq W$,  $P(s_{t+1} | s_{t}) = 1$ for all $H \leq t \leq T-2$ and  $P(s_{T-1} | \emptyset) = 1$
    \item $\Var{R(s,s')} = V$ and $\EE{R(s,s')} = 0$ for all $s,s'$ 
\end{itemize}

First, we note that the TD estimate of the advantage between $s_{(i,1)}$ and $s_{(j,1)}$ in $\mathcal{M}_{W,T,H}$ has the same distribution as the MC estimate of the advantage in $\mathcal{M}_{W,H,H}$. However, from Proposition \ref{prop:MC_lower_bound_tight}, we know that 

\begin{align*}
        \lim_{n \to \infty} n \cdot \EE{\left(\hat{\mathbb{A}}_{\rm MC, \mathcal{M}_{W,H,H}}(s_{(i,1)},s_{(j,1)}) - \mathbb{A}_{\mathcal{M}_{W,H,H}}(s_{(i,1)},s_{(j,1)})\right)^2} = HV\left(\frac{1}{\pr{s_{(i,1)} \in \tau}} + \frac{1}{\pr{s_{(j,1)} \in \tau}} \right).
\end{align*}

Noting that $\mathbb{A}_{\mathcal{M}_{W,H,H}}(s_{(i,1)},s_{(j,1)}) = \mathbb{A}_{\mathcal{M}_{W,T,H}}(s_{(i,1)},s_{(j,1)})$ (since all value-to-go are zero), we obtain:
\begin{align*}
        \lim_{n \to \infty} n \cdot \EE{\left(\hat{\mathbb{A}}_{\rm TD, \mathcal{M}_{W,T,H}}(s,s') - \mathbb{A}(s,s')\right)^2} &= HV\left(\frac{1}{\pr{s_{(i,1)} \in \tau}} + \frac{1}{\pr{s_{(j,1)} \in \tau}} \right)\\
        &\geq HV\dfrac{1}{\min\left(\pr{s_{(i,1)} \in \tau},\pr{s_{(j,1)} \in \tau}\right)}\\
        &\geq  \sigma^2_{\rm max}\left(\dfrac{1}{\min\left(\pr{ s_{(i,1)}\in \tau},\pr{s_{(j,1)} \in \tau}\right)}\right) \cdot H(s_{(i,1)},s_{(j,1)})
\end{align*}
\end{proof}

As mentioned in Section \ref{sec:horizon_truncation}, the upper bound on the MSE of TD estimates does not take into account any pooling that can happen before the crossing time. Hence, it is generally loose. The tightness result is achieved on MRPs with no pooling before the crossing time. On the other hand, the tightness of the lower bound on MC estimates only requires to control the distribution of the rewards and the transition structure of the MRP is not important.

\end{appendix}

\end{document}